\documentclass{article}

 \usepackage[preprint]{neurips_2026}

\usepackage[utf8]{inputenc} 
\usepackage[T1]{fontenc}    
\usepackage{hyperref}       
\usepackage{url}            
\usepackage{booktabs}       
\usepackage{amsfonts}       
\usepackage{nicefrac}       
\usepackage{microtype}      
\usepackage{xcolor}         
\usepackage{amsmath}
\usepackage[utf8]{inputenc} 
\usepackage[T1]{fontenc}    
\usepackage{hyperref}       
\usepackage{adjustbox}
\usepackage{url}            
\usepackage{booktabs}       
\usepackage{algorithm}
\usepackage{algorithmic}
\usepackage{amsthm}
\newtheorem{proposition}{Proposition}
\newtheorem{theorem}{Theorem}

\usepackage{amsfonts}       
\usepackage{nicefrac}       
\usepackage{microtype}      
\usepackage{xcolor}         
\usepackage{amsmath}
\usepackage{amssymb}
\usepackage{subcaption}
\usepackage{siunitx}
\usepackage{multirow}
\usepackage{wrapfig}
\usepackage{array}
\usepackage{amsthm}
\usepackage{enumitem}
\usepackage{amsthm}
\usepackage{booktabs}   
\usepackage{makecell}   

\newtheorem{corollary}{Corollary}

\usepackage{caption}

\title{Semantic Optimal Transport for Sparse Autoencoder Feature Matching and Circuit Compression}

\author{%
  Tue M. Cao \thanks{Equal contribution} \, \thanks{Department of Computer and Information Science and Engineering,
  University of Florida} \\
  \texttt{caotue@ufl.edu}
  \And
  Nguyen Do \footnotemark[1] \, \footnotemark[2]\\
  \texttt{nguyen.do@ufl.edu} 
  \And
My T. Thai \footnotemark[2] \, \thanks{Corresponding Author} \\
  \texttt{mythai@cise.ufl.edu} \
}

\begin{document}

\maketitle

\begin{abstract}
\label{abstract}

Sparse autoencoders (SAEs) have become a central tool for interpreting language models. However, two key SAE analyses that remain difficult to scale are (1) matching semantically similar features across multi-layers and (2) compressing large feature circuits into interpretable supernodes. Although these have been treated as separate problems, we show that both are instances of a more fundamental challenge, which we frame as the estimation of semantic distances between SAE features that lie on different activation manifolds. We introduce a distributional framework for this problem, in which each feature is represented not by a single decoder vector like in the literature, but by an activation-weighted distribution over the hidden states that express it. By projecting these distributions into a shared reference space and comparing them with Wasserstein distance, our method provides a unified semantic metric for cross-layer feature comparison. 
We prove that our representation is invariant to activation rescaling, stable under perturbations, and recovers true matches under finite-sample margin conditions. Empirically, our method outperforms decoder-vector and LLM-based baselines and captures subtle functional distinctions between related features. Notably, our method compresses large feature circuits into interpretable supernodes {\em automatically}.

\end{abstract}

\section{Introduction}
\label{sec:introduction}
Sparse Autoencoder (SAE) has shown great promises in understanding large language models (LLM) \cite{cunningham2023sparse, topk} by breaking down the latent space into interpretable features. Many tools have been proposed, such as Feature Matching to understand feature persistence and evolution between layers \cite{mech_permute, featflow}, or tracing the ``thoughts" of the LLM using Feature Circuits \cite{feature_circuit, transcoder_circuit}. 

A standard tool of inspecting the thinking of a model is Feature Circuit \cite{feature_circuit}, a causal graph that connects features from SAEs spanned across all layers of a LLM. 
However, feature circuits are difficult to inspect at scale. A single circuit may contain hundreds feature nodes, each with distinct activation patterns and functional roles \cite{on_biology}. Manually inspecting every node is impractical, while overly pruning can remove important intermediate mechanisms \cite{feature_circuit}. In practice, researchers often compress circuits by manually grouping semantically similar feature nodes into higher-level ``supernodes" \cite{on_biology}. This makes the circuit easier to understand, but it relies on human labeling and hence not scalable. This motivates the need for an automatic method that can group SAE features by their semantic roles.

A separate line of works is Feature Matching \cite{mech_permute, featflow}, of which the goal is to understand feature transform across layers. These methods aim to match feature pairs with similar meaning from different layers, showing how the feature persist or evolve across layers. A major flaws of the existing works is that their matching rely solely on feature decoder vectors of the SAEs trained on two different layers. This flaw is critical because the decoder vectors learned to reconstruct two different distributions. Therefore, their methods only work well on matching consecutive layers \cite{mech_permute} where the distributions are similar; and are inaccurate when the matching across dissimilar distributions, such as matching across multiple layers.


\begin{figure}
    \centering
    \vspace{-6mm}
    \includegraphics[width=1\linewidth]{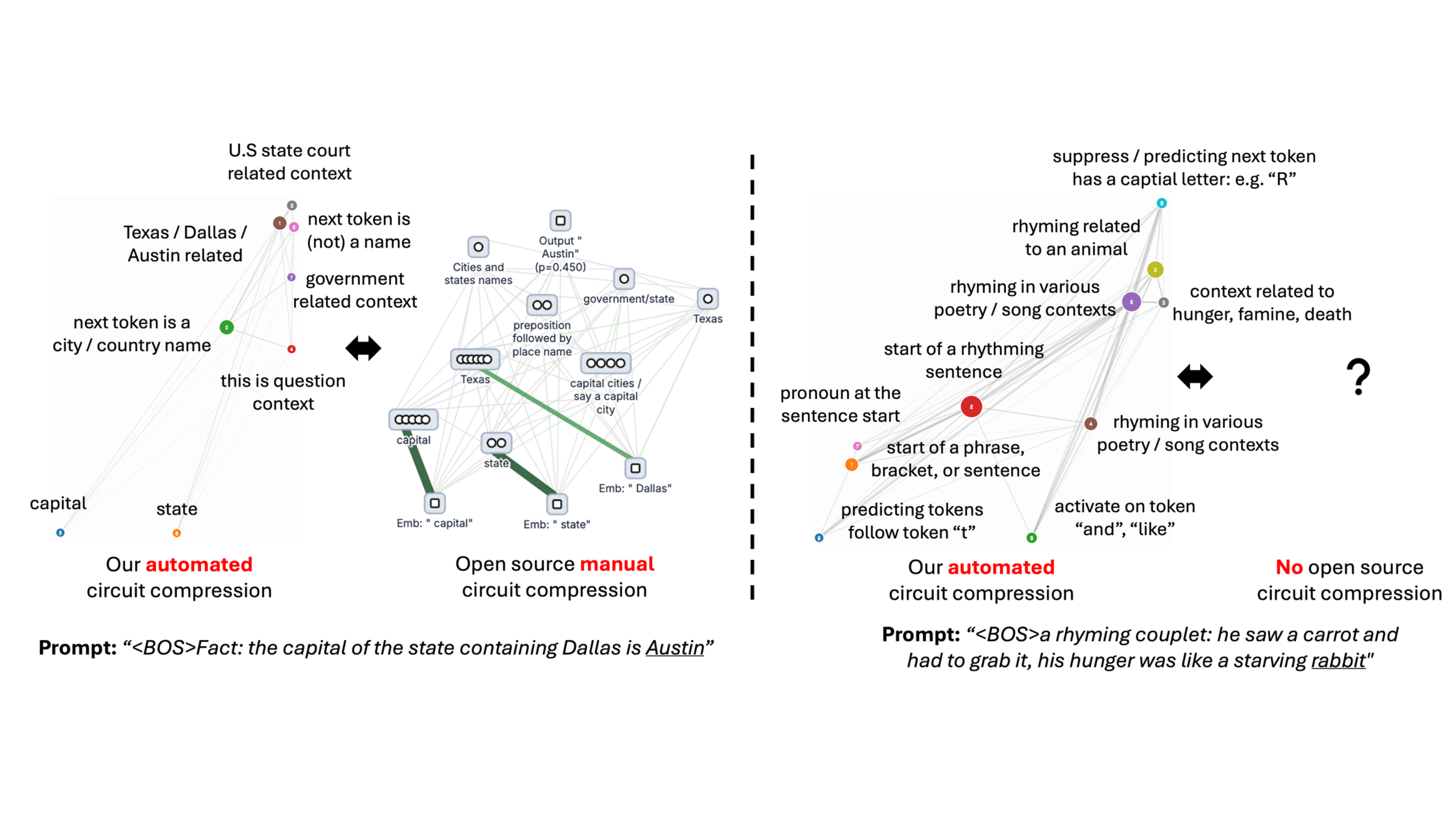}
    \caption{This figure compares our automated circuit compression and open source manual compression \cite{neuronpedia}. Our method identifies similar supernodes as open source version for ``Dallas" circuit and scalable to other circuits that have no open source implementation. More details in Section \ref{application:circuit_compression}.}
    \vspace{-5mm}
    \label{fig:circuit_compression_comparison}
\end{figure}

Although these two have been treated as separatedly, we identify that the core problem of both feature matching and circuit compression is approximating ``semantic distance" of features across multi-manifolds. The ``semantic distances" show how close the meaning of the features are, and thus, are the direct solution to both feature matching and circuit compressing. However, there are two key challenges to solve this problem. The first challenge is the different nature of the manifolds, where each manifold can have different support set and metrics, thus hindering the computation of across manifolds. The second problem is how to embed the ``semantic" of features into the manifold. The main assumption of the field is to embed a feature by one vector (e.g. decoder vectors of SAEs) \cite{mech_permute, monosemanticity}. However, this approach lose information of the feature meaning and cannot be used with multi-manifolds. A comprehensive discussion of related literature is provided in the related works (Appendix~\ref{sec:related_works_extend}).

To address these challenges, we abandon the standard vector-centric paradigm and instead frame feature comparison as an Optimal Transport (OT) problem \cite{beier2025joint, imfeldtransformer, melnyk2024distributional}. By representing feature as activation-weighted distribution rather than one vector, we prevent the loss of semantic information that occurs when collapsing complex feature behaviors into a single point. Furthermore, OT provides a mathematically rigorous foundation for cross-manifold mapping, allowing us to estimate semantic distances between features even when they reside in different latent spaces. Hence, this distributional approach naturally facilitates both accurate cross-layer feature matching and feature circuit compression.
Our theoretical analysis and empirical results show that:
\begin{itemize}[itemsep=4pt, topsep=2pt, parsep=0pt]


    \item \textbf{Theoretical guarantees:} We prove that this distributional representation is invariant to activation rescaling and its Wasserstein distances are robust to measure perturbations. Furthermore, we establish explicit margin conditions guaranteeing that empirical top-$K$ samples perfectly recover the true feature matches and semantic groups.

    \item \textbf{Cross Manifold Feature Matching:} Our method is more accurate than existing methods as well as LLM in matching features for both consecutive and across many layers. Notably, ours can identify the subtle differences in the meaning of many ``digit addition" features \cite{on_biology} while other failed.
    
    \item \textbf{Automatic circuit compression:} We can \textit{automatically} compress feature circuits into intuitive supernodes, enhancing circuits' interpretability. We apply to automatically find modular features groups that have the same functional role in across many circuits.
\end{itemize}

\section{Background and Problem Formulation}
\label{sec:problem}

\textbf{Sparse Autoencoders.}
Let $x^{(\ell)}\in\mathbb R^{d_\ell}$ denote the hidden state of a language model
at layer $\ell$. An SAE trained on layer $\ell$ represents $x^{(\ell)}$ using an
overcomplete sparse feature vector $a^{(\ell)}(x^{(\ell)})\in\mathbb R^{F_\ell}, 
F_\ell\gg d_\ell$. The encoder and decoder are given by
\begin{equation}
    a^{(\ell)}(x^{(\ell)})
=
\sigma\!\left(
W_{\mathrm{enc}}^{(\ell)}
\left(x^{(\ell)}-b_{\mathrm{enc}}^{(\ell)}\right)
\right),
\qquad
\widehat{x}^{(\ell)}
=
W_{\mathrm{dec}}^{(\ell)}
a^{(\ell)}(x^{(\ell)})
+
b_{\mathrm{dec}}^{(\ell)}.
\end{equation}
Here, $W_{\mathrm{enc}}^{(\ell)}\in\mathbb R^{F_\ell\times d_\ell}, 
W_{\mathrm{dec}}^{(\ell)}\in\mathbb R^{d_\ell\times F_\ell}$
are the encoder and decoder weights, and $b_{\mathrm{enc}}^{(\ell)}\in\mathbb R^{d_\ell}, 
b_{\mathrm{dec}}^{(\ell)}\in\mathbb R^{d_\ell}$
are the encoder and decoder biases. The function $\sigma$ is a
sparsity-inducing activation function, such as TopK \cite{topk}. We denote $a_i^{(\ell)}(x^{(\ell)})$ for the activation of feature $i\in\mathcal F^{(\ell)}$ on hidden state $x^{(\ell)}$. Note that, throughout this work, we consider nonnegative SAE activations $a_i^{(\ell)}(x^{(\ell)})\ge 0.$

\textbf{Empirical activation distributions.}
Let $\mathcal S_T
=
\left\{
(x_t^{(1)},\ldots,x_t^{(L)})
\right\}_{t=1}^{T}$ denote a corpus of $T$ token positions, where $x_t^{(\ell)}$ is the hidden state
of token position $t$ at layer $\ell$ and $L$ is the total number of layers in the language model. For a feature
$i\in\mathcal F^{(\ell)}$, define its top-$K$ activation index set by $\mathcal I_{i,T}^{(\ell),K}
=
\operatorname{TopK}_{t\in[T]}
a_i^{(\ell)}(x_t^{(\ell)})$. Throughout the paper, $\operatorname{TopK}$ denotes a deterministic selector of $K$ best candidates. For each selected token position
$t\in\mathcal I_{i,T}^{(\ell),K}$, define the normalized activation weight $\widehat w_{i,t}^{(\ell)}
=
a_i^{(\ell)}(x_t^{(\ell)})/
\sum_{s\in\mathcal I_{i,T}^{(\ell),K}}
a_i^{(\ell)}(x_s^{(\ell)}).$ The activation-weighted empirical distribution of feature $i$ at layer $\ell$ is $\widehat{\mu}_{i,T}^{(\ell)}
=
\sum_{t\in\mathcal I_{i,T}^{(\ell),K}}
\widehat w_{i,t}^{(\ell)}
\delta_{x_t^{(\ell)}}$.
Here, $\delta_{x_t^{(\ell)}}$ is the Dirac measure at the hidden state
$x_t^{(\ell)}$. Thus, $\widehat{\mu}_{i,T}^{(\ell)}$ records both the token
positions where feature $i$ activates most strongly and the relative activation
strengths over those positions.

\textbf{Problem Definition: Cross-Manifold Semantic Distance.}
\label{sec:problem_definition}
A fundamental problem is to estimate the semantic distance between SAE features, once the distance is available, downstream tasks such as: cross-layer feature matching can be solved by selecting nearest neighbors, or circuit compression can be solved by grouping circuit nodes are close. 

However, there are two key challenges: (1) features from different layers reside on distinct manifolds the meaning, compute the distance using decoder vectors trained of different manifolds \cite{mech_permute, featflow} would yield inaccurate result, (2) many works \cite{cunningham2023sparse, sae_steering, featflow} consider the semantic of the features as one vector (e.g. decoder vectors), this approach loses the information of activation context. 

To solve these challenges, we first define the problem as computing the \textit{cross-manifold} semantic distance of \textit{probability distributions} of features. The following formulation requires approximating true cross-manifold distance and preserving the semantic activation context of the features:

Let $\{\widehat{\mu}_{i,T}^{(\ell)} : i\in\mathcal F^{(\ell)}\}$ denote the empirical activation distributions of SAE features at layer $\ell$ lying on a feature–semantic manifold $\mathcal M^{(\ell)}\subseteq\mathcal P(\mathcal X^{(\ell)})$; where $\mathcal{P}(\mathcal{X}^{(\ell)})$ denotes the space of probability distributions over the hidden-state space $\mathcal X^{(\ell)}\subseteq\mathbb R^{d_\ell}$. 
Ideally, there exists a true but unknown semantic distance between any two features, which we refer to as the oracle distance:
$
D^\star_{\ell q}(i,j),
\:
i\in\mathcal F^{(\ell)},\: j\in\mathcal F^{(q)}.
$
This oracle distance is small when two features play similar semantic or
functional roles and large when their activation behavior differs. Since
$D^\star_{\ell q}$ is not directly observable, we seek a computable discrepancy 
$
\widehat D_{\ell q}:
\mathcal M^{(\ell)}\times\mathcal M^{(q)}
\to
\mathbb R_{\ge 0}
$
that approximates this oracle semantic distance. Conceptually, this corresponds
to minimizing:
\begin{equation}
\min_{\widehat D}
\sum_{\ell,q=1}^{L}
\sum_{i\in\mathcal F^{(\ell)}}
\sum_{j\in\mathcal F^{(q)}}
\omega_{ij}^{(\ell,q)}
\left[
\widehat D_{\ell q}
\left(
\widehat\mu_{i,T}^{(\ell)},
\widehat\mu_{j,T}^{(q)}
\right)
-
D^\star_{\ell q}(i,j)
\right]^2,
\end{equation}
where $\omega_{ij}^{(\ell,q)}\ge 0$ optionally weights feature pairs of interest.

\section{Methodology}
\label{sec:method}


Given the formulation in Section \ref{sec:problem_definition}, to compute distance across manifolds, we first project these layer-specific empirical distributions onto a shared reference space $\mathcal R$. Furthermore, to preserve semantic information, we compute the semantic distance between the features' empirical distributions using the Wasserstein distance. This unified pairwise distance then serves as the foundation for downstream tasks, including cross-layer feature matching and circuit compression.

\subsection{Shared Reference Space and Wasserstein Distance}
\label{sec:shared_reference_and_lifting}

To make feature activation distributions comparable across layers, we assign each token position a representation in a shared reference space $\mathcal R$. 
For a feature $i\in\mathcal F^{(\ell)}$ with top-$K$ index set
$\mathcal I_{i,T}^{(\ell),K}$ and normalized activation weights
$\widehat w_{i,t}^{(\ell)}$, we define its projected feature distribution $\widehat\mu_{i,T}^{(\ell\to\mathcal R)}
=
\sum_{t\in\mathcal I_{i,T}^{(\ell),K}}
\widehat w_{i,t}^{(\ell)}
\delta_{z_t^{(\mathcal R)}}$ in
the shared reference space $\mathcal R$ by replacing each layer-specific support
point $x_t^{(\ell)}\in\mathcal X^{(\ell)}$ with the shared-space representation
$z_t^{(\mathcal R)}\in\mathcal R$.
This new empirical distribution on shared space still defines the feature's activation profile: it chooses the contexts and their relative weights. The practical choice of reference space $\mathcal{R}$ are detailed in Section \ref{sec:method_matching_compression}.
For readability, when comparing a target feature $i\in\mathcal F^{(A)}$ with a
source feature $j\in\mathcal F^{(B)}$, we write
$\widehat\mu_{i,T}^{(A\to\mathcal R)}$ for the projected target distribution and
$\widehat\nu_{j,T}^{(B\to\mathcal R)}$ for the projected source distribution.
\label{sec:wasserstein_feature_geometry}

Once two features are represented as probability distributions over the shared
reference space $\mathcal R$, we compare them using OT. Let
$c:\mathcal R\times\mathcal R\to\mathbb R_{\ge0}$ be a generic ground cost on
$\mathcal R$. For a target feature $i\in\mathcal F^{(A)}$ and a source feature
$j\in\mathcal F^{(B)}$, we define their semantic distance as:
\begin{equation}
\widehat D_{\mathcal R}(i,j)
=
W_c\!\left(
\widehat\mu_{i,T}^{(A\to\mathcal R)},
\widehat\nu_{j,T}^{(B\to\mathcal R)}
\right),
\qquad
W_c(\mu,\nu)
=
\inf_{\gamma\in\Pi(\mu,\nu)}
\int_{\mathcal R\times\mathcal R}
c(z,z')\,d\gamma(z,z').
\end{equation}
Here, $\Pi(\mu,\nu)$ is the set of couplings between $\mu$ and $\nu$. The ground
cost $c(z,z')$ measures how expensive it is to match two selected context
representations $z,z'\in\mathcal R$, while the coupling $\gamma$ specifies how
probability mass from one feature distribution is transported to the other.

For the empirical distributions used in our method, this Wasserstein problem
becomes a finite discrete transport problem. For a fixed feature pair $(i,j)$,
the pairwise cost matrix over their selected top-$K$ contexts has entries
$C_{pq}^{(i,j)}=c(z_p^{(\mathcal R)},z_q^{(\mathcal R)})$, where
$p\in\mathcal I_{i,T}^{(A),K}$ indexes a selected context of the target feature
and $q\in\mathcal I_{j,T}^{(B),K}$ indexes a selected context of the source
feature. A discrete transport plan is a nonnegative matrix whose row and column
sums match the normalized activation weights:
$
\Pi_{ij}
=
\left\{
\pi\ge 0:
\sum_{q\in\mathcal I_{j,T}^{(B),K}}\pi_{pq}
=
\widehat w_{i,p}^{(A)}
\ \forall p\in\mathcal I_{i,T}^{(A),K},
\quad
\sum_{p\in\mathcal I_{i,T}^{(A),K}}\pi_{pq}
=
\widehat w_{j,q}^{(B)}
\ \forall q\in\mathcal I_{j,T}^{(B),K}
\right\}.
$
Therefore, the empirical Wasserstein feature distance is
$
\widehat D_{\mathcal R}(i,j)
=
\min_{\pi\in\Pi_{ij}}
\sum_{p\in\mathcal I_{i,T}^{(A),K}}
\sum_{q\in\mathcal I_{j,T}^{(B),K}}
\pi_{pq}
C_{pq}^{(i,j)}.
$
This objective quantifies the minimum cost required to align the activation-weighted supports of the two features in $\mathcal R$. It serves as a purely geometric measure of semantic similarity, however, our overall framework remains entirely agnostic to the specific choice of the ground cost $c$.
In Section~\ref{sec:theory_stability}, we analyze the stability of the Wasserstein score for a general cost $c$ satisfying coordinate-wise Lipschitz conditions. Appendix~\ref{app:metric_specific_stability} instantiates this general result for several concrete score choices and compares their stability constants.

\subsection{Feature Matching and Circuit Compression}
\label{sec:method_matching_compression}
We provide our choice of reference space $\mathcal{R}$ for feature matching and circuit compression problem, the full procedure for each application is provided in Algorithm \ref{alg:matching} and \ref{alg:compression} respectively:

\textbf{Cross-layer feature matching:} Given a target feature set $\mathcal{F}^{(A)}$ at layer $A$ and source feature set $\mathcal{F}^{(B)}$ at layer $B$, we match each target feature to its nearest source feature:
\begin{equation}
    \widehat M_{A\to B}(i)
=
\arg\min_{j\in\mathcal F^{(B)}}
\widehat D_{\mathcal R}(i,j).
\end{equation}
In practice, we choose the reference space $\mathcal{R}$ as the space of target layer $A$, where for each activation token of feature at layer $B$, we extract the representation at the same token in layer $A$. This method leverages the natural alignment via tokens between the layers without any computational overhead.

\textbf{Circuit compression:} Let $G=(V,E)$ be a feature circuit, where each node $u=(\ell,i)\in V$ corresponds to feature $i$ at layer $\ell$. We want to find a set of supernodes $\mathbb{C}=\{\mathcal C_1,\ldots,\mathcal C_m\}$, each supernode contains feature nodes of $V$, and $C_i\cap C_j=\emptyset, \: i\neq j$. We choose the partition by minimizing within-supernode semantic distance:
\begin{equation}
    {\widehat{\mathbb{C}}}
= \arg\min_{\mathbb C}
\sum_{\mathcal C\in\mathbb C}
\frac{1}{|\mathcal C|(|\mathcal C|-1)}
\sum_{u,v\in \mathcal C, u \neq v}
\widehat D_{\mathcal{R}}(u,v).
\end{equation}
In practice, this objective can be optimized with standard clustering procedures using the pairwise distance matrix over circuit nodes. We choose the shared space $\mathcal{R}$ as the concatenation of hidden state of all layers. Specifically,
for a feature activates on token $t$, the representation on $\mathcal{R}$ is 
$
z_t^{(\mathcal R)}
=
\operatorname{Concat}\!\left(x_t^{(1)},x_t^{(2)},\ldots,x_t^{(L)}\right).
$ 
This concatenation help the $\mathcal{R}$ to preserve the semantic across layers, leading to more accurate circuit compression.

\section{Theoretical Analysis}
\label{sec:theory}

We establish three theoretical guarantees for our framework (proofs in Appendix~\ref{sec:additional_theory}). First, the projected feature distribution is strictly invariant to feature activation scaling. Second, the Wasserstein distances are stable under measure perturbations, with variations bounded by the Lipschitz constants of the ground cost, justifying the stability of the distances. Finally, we provide margin-based recovery conditions, proving that true feature matches and circuit compression are perfectly preserved whenever the relevant Wasserstein or Voronoi margin exceeds the estimation error.

\subsection{Representation and Score Stability}
\label{sec:theory_stability}

Although features are compared in the shared reference space $\mathcal R$, their projected distributions are constructed from layer-specific SAE activations. The representation should therefore depend on which contexts a feature selects and their relative activation weights, not on the arbitrary numerical scale of the activations.

\begin{proposition}[Activation rescaling invariance]
\label{prop:activation_rescaling_invariance}
Let $i\in\mathcal F^{(A)}$ have activation function $a_i^{(A)}:\mathcal X^{(A)}\to\mathbb R_{\ge0}$. For any $\alpha>0$, define $\widetilde a_i^{(A)}(x)=\alpha a_i^{(A)}(x)$. Under deterministic $\operatorname{TopK}$ selection, $\widetilde\mu_{i,T}^{(A\to\mathcal R)}=\widehat\mu_{i,T}^{(A\to\mathcal R)}$. Consequently, for any source feature $j\in\mathcal F^{(B)}$ and any ground cost $c:\mathcal R\times\mathcal R\to\mathbb R_{\ge0}$, $W_c(\widetilde\mu_{i,T}^{(A\to\mathcal R)},\widehat\nu_{j,T}^{(B\to\mathcal R)})=W_c(\widehat\mu_{i,T}^{(A\to\mathcal R)},\widehat\nu_{j,T}^{(B\to\mathcal R)})$.
\end{proposition}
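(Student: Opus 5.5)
The plan is to check, one piece at a time, that every ingredient of $\widehat\mu_{i,T}^{(A\to\mathcal R)}$ is unchanged when $a_i^{(A)}$ is replaced by $\widetilde a_i^{(A)}=\alpha a_i^{(A)}$. By construction the projected distribution depends on only three things: the top-$K$ index set $\mathcal I_{i,T}^{(A),K}$, the normalized weights $\widehat w_{i,t}^{(A)}$, and the support points $z_t^{(\mathcal R)}$. The support points are hidden-state representations in $\mathcal R$ built from the token positions alone; they never involve the activation function, so they are unaffected. Hence it suffices to show that the index set and the weights are invariant.

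\textbf{Step 1 (selection).} Multiplication by $\alpha>0$ is strictly increasing on $\mathbb R_{\ge0}$. Therefore $a_i^{(A)}(x_s^{(A)})<a_i^{(A)}(x_t^{(A)})$ if and only if $\widetilde a_i^{(A)}(x_s^{(A)})<\widetilde a_i^{(A)}(x_t^{(A)})$, and equalities are preserved as well. So the two score vectors over $[T]$ induce the same weak order, with the same ties. This is the main point needing care. The statement says only that $\operatorname{TopK}$ is ``deterministic'', and I will read this as follows: the selector's output depends only on the ordering of the scores together with a fixed tie-breaking rule, for example by token index. Under that reading, $\widetilde{\mathcal I}_{i,T}^{(A),K}=\mathcal I_{i,T}^{(A),K}$. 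I will state this interpretation explicitly, since an arbitrary deterministic function of the numeric values could in principle break the invariance.

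\textbf{Step 2 (weights).} For $t$ in the common index set,
\[
\widetilde w_{i,t}^{(A)}=\frac{\alpha a_i^{(A)}(x_t^{(A)})}{\sum_{s\in\mathcal I_{i,T}^{(A),K}}\alpha a_i^{(A)}(x_s^{(A)})}=\widehat w_{i,t}^{(A)},
\]
because $\alpha$ cancels. This step assumes the denominator is positive, which is already implicit in the definition of $\widehat w$. If all selected activations are zero, the weights are undefined for both $a_i^{(A)}$ and $\widetilde a_i^{(A)}$ in exactly the same way, so that case can be set aside or handled by any common convention.

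\textbf{Step 3 (conclusion).} Combining Steps 1 and 2 with the unchanged support points gives
\[
\widetilde\mu_{i,T}^{(A\to\mathcal R)}=\sum_t \widetilde w_{i,t}^{(A)}\delta_{z_t^{(\mathcal R)}}=\widehat\mu_{i,T}^{(A\to\mathcal R)}
\]
as measures. Finally, $W_c(\cdot,\widehat\nu_{j,T}^{(B\to\mathcal R)})$ is a function of its measure argument only: the coupling set $\Pi(\mu,\nu)$ and the transport cost depend on nothing else. So equal inputs give equal Wasserstein values, for every ground cost $c$ and every source feature $j$.
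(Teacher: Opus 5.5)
Your proposal is correct and follows the paper's proof essentially step for step. Multiplication by $\alpha>0$ preserves the ordering and the ties, so deterministic tie-breaking selects the same top-$K$ index set; $\alpha$ cancels in the normalized weights; the support points $z_t^{(\mathcal R)}$ are unchanged; and $W_c$ depends only on its input measures. Your explicit reading of ``deterministic $\operatorname{TopK}$'' as an order-based selector with a fixed tie-breaking rule, and your remark about the positive denominator, make precise assumptions the paper uses only implicitly.
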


This proposition guarantees our representation is invariant to global activation scales, which naturally vary across layers due to differing hidden-state norms. Because our construction relies entirely on the relative ranking and proportional distribution of activations, it is unaffected by these arbitrary scalar shifts. Consequently, the method compares where a feature activates and how its activation mass is distributed across those contexts, rather than comparing raw activation magnitudes.

Let $\rho$ be a base metric on $\mathcal R$. For
$\mu,\widetilde\mu,\nu,\widetilde\nu\in\mathcal P(\mathcal R)$, define
$\mathcal S_\mu=\operatorname{supp}(\mu)\cup\operatorname{supp}(\widetilde\mu)$
and
$\mathcal S_\nu=\operatorname{supp}(\nu)\cup\operatorname{supp}(\widetilde\nu)$.
We say that $c:\mathcal R\times\mathcal R\to\mathbb R_{\ge0}$ is
coordinate-wise Lipschitz on the relevant support if there exist constants
$L_x,L_y<\infty$ such that
$|c(z,z')-c(\bar z,z')|\le L_x\rho(z,\bar z)$ for all
$z,\bar z\in\mathcal S_\mu$ and $z'\in\mathcal S_\nu$, and
$|c(z,z')-c(z,\bar z')|\le L_y\rho(z',\bar z')$ for all
$z\in\mathcal S_\mu$ and $z',\bar z'\in\mathcal S_\nu$.

\begin{theorem}[Stability of Wasserstein feature scores]
\label{thm:wasserstein_stability}
Let $c:\mathcal R\times\mathcal R\to\mathbb R_{\ge0}$ be coordinate-wise Lipschitz on the relevant support with constants $L_x,L_y$. Then $|W_c(\mu,\nu)-W_c(\widetilde\mu,\widetilde\nu)|\le L_x W_\rho(\mu,\widetilde\mu)+L_y W_\rho(\nu,\widetilde\nu)$.
\end{theorem}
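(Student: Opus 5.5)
The plan is to split the difference through an intermediate term and use the triangle inequality:
\[
|W_c(\mu,\nu)-W_c(\widetilde\mu,\widetilde\nu)|\le |W_c(\mu,\nu)-W_c(\widetilde\mu,\nu)|+|W_c(\widetilde\mu,\nu)-W_c(\widetilde\mu,\widetilde\nu)|.
\]
It then suffices to show the one-sided perturbation bound $|W_c(\mu,\nu)-W_c(\widetilde\mu,\nu)|\le L_x W_\rho(\mu,\widetilde\mu)$. The second term is handled by the symmetric argument in the second coordinate with constant $L_y$. Since everything is symmetric in $\mu\leftrightarrow\widetilde\mu$, I only need the upper bound $W_c(\widetilde\mu,\nu)\le W_c(\mu,\nu)+L_xW_\rho(\mu,\widetilde\mu)$.

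For that bound I will use a gluing construction. Let $\gamma\in\Pi(\mu,\nu)$ be an optimal (or $\varepsilon$-optimal) plan for $c$, and let $\eta\in\Pi(\widetilde\mu,\mu)$ be an optimal (or $\varepsilon$-optimal) plan for $\rho$. In our setting all measures are finitely supported, so optimal plans exist and gluing is elementary. Concretely, define
\[
\pi(\bar z,z,z')=\eta(\bar z,z)\,\gamma(z,z')/\mu(z)
\]
on $\operatorname{supp}(\mu)$. Its $(\bar z,z')$-marginal $\gamma'$ lies in $\Pi(\widetilde\mu,\nu)$. Then
\[
W_c(\widetilde\mu,\nu)\le\int c(\bar z,z')\,d\pi\le\int c(z,z')\,d\pi+\int|c(\bar z,z')-c(z,z')|\,d\pi .
\]
The first integral equals $W_c(\mu,\nu)$. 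Because $\bar z,z\in\mathcal S_\mu$ and $z'\in\operatorname{supp}(\nu)\subseteq\mathcal S_\nu$ on the support of $\pi$, the Lipschitz hypothesis bounds the second integral by $L_x\int\rho(\bar z,z)\,d\eta=L_xW_\rho(\widetilde\mu,\mu)$. Letting $\varepsilon\to0$ finishes this step. For general measures I would invoke the standard gluing lemma and take infima instead.

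The second step repeats the argument in the second coordinate, gluing $\nu$ to $\widetilde\nu$ while the first marginal is fixed at $\widetilde\mu$. Here I need $\bar z'$, $z'\in\mathcal S_\nu$ and $z\in\operatorname{supp}(\widetilde\mu)\subseteq\mathcal S_\mu$. This is exactly why the Lipschitz condition is stated on the unions of supports. Summing the two bounds gives the theorem.

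The main obstacle is making sure each Lipschitz evaluation only involves points from the allowed sets. Choosing the intermediate measure $(\widetilde\mu,\nu)$ rather than $(\mu,\widetilde\nu)$ keeps every evaluated point inside $\mathcal S_\mu\times\mathcal S_\nu$, so the local constants suffice. A secondary issue is measurability and existence of optimal plans in the general (non-discrete) case. I would either restrict to the finitely supported empirical measures used in the paper or use $\varepsilon$-optimal plans with the gluing lemma.
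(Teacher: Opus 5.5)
Your proof is correct and is essentially the paper's argument. The paper glues $\alpha\in\Pi(\mu,\widetilde\mu)$, a near-optimal plan for $(\widetilde\mu,\widetilde\nu)$, and $\beta\in\Pi(\widetilde\nu,\nu)$ into a single four-marginal coupling, then splits $c(z,z')-c(\widetilde z,\widetilde z')$ pointwise through $c(\widetilde z,z')$; this is your intermediate pair $(\widetilde\mu,\nu)$ handled in one step, instead of a triangle inequality plus two three-marginal gluings.

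One minor correction to your commentary: the other intermediate $(\mu,\widetilde\nu)$ would work equally well, since its supports also lie in $\mathcal S_\mu\times\mathcal S_\nu$. So that choice is not what makes the local Lipschitz constants suffice.
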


Theorem~\ref{thm:wasserstein_stability} guarantees that the feature distance is stable under smooth distributional shifts. This robustness is critical because top-$K$ activation sets naturally fluctuate across corpora, SAE retrainings, or representation noise. The theorem ensures the Wasserstein score changes substantially only if a nontrivial probability mass moves significantly within $\mathcal R$, or under a highly sensitive ground cost. Thus, our method removes the need for exact token overlap, requiring only that activation distributions remain proximate in the shared reference geometry. Metric-specific constants (Euclidean, Mahalanobis, cosine, flow-aware) are detailed in Appendix~\ref{app:metric_specific_stability}.

\subsection{Margin-Based Recovery of Feature Matching and Circuit Compressing}
\label{sec:theory_recovery}

Score stability alone does not guarantee that the final discrete decision is unchanged. A feature match can change if the best and second-best source features have nearly identical scores. We therefore state margin conditions under which stable scores imply stable matching or grouping decisions. For target feature $i\in\mathcal F^{(A)}$ and source feature $j\in\mathcal F^{(B)}$, define $D_{\mathcal R}(i,j)=W_c(\mu_i^{(A\to\mathcal R)},\nu_j^{(B\to\mathcal R)})$ and $\widehat D_{\mathcal R}(i,j)=W_c(\widehat\mu_{i,T}^{(A\to\mathcal R)},\widehat\nu_{j,T}^{(B\to\mathcal R)})$. Let $j_i^\star=\arg\min_{j\in\mathcal F^{(B)}}D_{\mathcal R}(i,j)$ and $\Delta_i=\min_{j\neq j_i^\star}[D_{\mathcal R}(i,j)-D_{\mathcal R}(i,j_i^\star)]$.

\begin{theorem}[Matching recovery under Wasserstein margin]
\label{thm:matching_recovery}
Fix $i\in\mathcal F^{(A)}$ and suppose $j_i^\star$ is unique with $\Delta_i>0$. If $\sup_{j\in\mathcal F^{(B)}}|\widehat D_{\mathcal R}(i,j)-D_{\mathcal R}(i,j)|\le\epsilon_i$ and $\Delta_i>2\epsilon_i$, then $\arg\min_{j\in\mathcal F^{(B)}}\widehat D_{\mathcal R}(i,j)=j_i^\star$.
\end{theorem}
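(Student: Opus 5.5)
The plan is the standard ``margin beats twice the error'' argument: a direct comparison of the empirical score of $j_i^\star$ against that of every competitor $j\neq j_i^\star$. Nothing from Theorem~\ref{thm:wasserstein_stability} is needed here. The uniform error bound $\epsilon_i$ is taken as a hypothesis; Theorem~\ref{thm:wasserstein_stability} is only one way one might later verify it.

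First, fix an arbitrary $j\in\mathcal F^{(B)}$ with $j\neq j_i^\star$. Using the uniform deviation hypothesis twice, once for $j$ and once for $j_i^\star$, I obtain
\begin{equation*}
\widehat D_{\mathcal R}(i,j)\ge D_{\mathcal R}(i,j)-\epsilon_i,
\qquad
\widehat D_{\mathcal R}(i,j_i^\star)\le D_{\mathcal R}(i,j_i^\star)+\epsilon_i .
\end{equation*}
Subtracting these gives
\begin{equation*}
\widehat D_{\mathcal R}(i,j)-\widehat D_{\mathcal R}(i,j_i^\star)\ge D_{\mathcal R}(i,j)-D_{\mathcal R}(i,j_i^\star)-2\epsilon_i .
\end{equation*}
By the definition of $\Delta_i$ as a minimum over $j\neq j_i^\star$, the right-hand side is at least $\Delta_i-2\epsilon_i$. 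This is strictly positive by the assumption $\Delta_i>2\epsilon_i$.

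It follows that $\widehat D_{\mathcal R}(i,j_i^\star)<\widehat D_{\mathcal R}(i,j)$ for every $j\neq j_i^\star$. Hence $j_i^\star$ is the unique minimizer of the empirical score. In particular the (possibly set-valued) $\arg\min$ equals $\{j_i^\star\}$, and no tie-breaking convention matters.

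There is no real obstacle. The only points needing care are the following:
\begin{itemize}
\item The minimum defining $\Delta_i$ must be attained. This holds because $\mathcal F^{(B)}$ is finite; more generally an infimum would suffice.
\item The inequality must be strict, so that uniqueness of the empirical minimizer follows. Strictness is supplied by $\Delta_i>2\epsilon_i$.
\item One should note that $\Delta_i>0$ and the uniqueness of $j_i^\star$ are implied by $\Delta_i>2\epsilon_i\ge 0$, so those hypotheses serve only to make the definitions well posed.
\end{itemize}
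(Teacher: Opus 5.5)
Your proposal is correct and follows essentially the same argument as the paper: bound the competitor's empirical score below by $D_{\mathcal R}(i,j)-\epsilon_i$ and the oracle winner's empirical score above by $D_{\mathcal R}(i,j_i^\star)+\epsilon_i$, subtract to get a gap of at least $\Delta_i-2\epsilon_i>0$, and conclude that $j_i^\star$ is the unique empirical minimizer. Your side remarks, that finiteness of $\mathcal F^{(B)}$ guarantees the minimum is attained and that strictness gives uniqueness without any tie-breaking, are accurate points the paper leaves implicit.
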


This theorem ensures the matching decision are reliable and not merely artifacts of representation noise by converting score stability into matching stability. It introduces the matching margin $\Delta_i$, which measures how much closer the true source is compared to the nearest competing source. If empirical estimation perturbs every score by at most $\epsilon_i$, the worst-case scenario shrinks this gap by at most $2\epsilon_i$. Consequently, whenever $\Delta_i > 2\epsilon_i$, the empirical nearest-neighbor rule is mathematically guaranteed to preserve the true match. In practice, this allows us to confidently accept large-margin matches as highly robust to finite-sample perturbations, while identifying and flagging inherently unstable matches.


For circuit compression, each semantic group can be represented by a semantic
center in the shared reference space. A semantic center is a representative point
of one semantic cell or supernode. Let
$\xi_1,\ldots,\xi_m\in\mathcal R$ be semantic centers, one for each supernode.
For a projected distribution $\mu\in\mathcal P(\mathcal R)$, define the
assignment score
$\mathcal Q_k(\mu)=\mathbb E_{z\sim\mu}[\rho(z,\xi_k)]$.
For feature $i$, let
$k_i^\star=\arg\min_{k\in[m]}\mathcal Q_k(\mu_i^{(A\to\mathcal R)})$,
$\widehat k_{i,T}=\arg\min_{k\in[m]}\mathcal Q_k(\widehat\mu_{i,T}^{(A\to\mathcal R)})$,
and
$\Gamma_i=\min_{r\neq k_i^\star}
[\mathcal Q_r(\mu_i^{(A\to\mathcal R)})
-
\mathcal Q_{k_i^\star}(\mu_i^{(A\to\mathcal R)})]$.

\begin{theorem}[Semantic supernode recovery]
\label{thm:voronoi_recovery}
Suppose the population semantic assignment $k_i^\star$ is unique with margin
$\Gamma_i>0$. If $W_\rho\!\left(
\widehat\mu_{i,T}^{(A\to\mathcal R)},
\mu_i^{(A\to\mathcal R)}
\right)
< \Gamma_i/2, $
then the empirical semantic assignment recovers the population assignment:
$\widehat k_{i,T}=k_i^\star$.
\end{theorem}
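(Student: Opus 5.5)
The plan is to reduce the statement to the same margin argument as Theorem~\ref{thm:matching_recovery}, with the uniform score perturbation supplied by a Lipschitz bound on the assignment scores. Since $W_\rho$ is a $1$-Wasserstein distance with respect to the metric $\rho$, the natural tool is the dual (Kantorovich--Rubinstein) characterization. Alternatively, and more in the spirit of the paper, we can derive the bound directly from a coupling, or read it off Theorem~\ref{thm:wasserstein_stability}.

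The first step shows that each score map $\mu\mapsto\mathcal Q_k(\mu)$ is $1$-Lipschitz with respect to $W_\rho$. Fix $k$ and let $\gamma$ be any coupling of $\widehat\mu_{i,T}^{(A\to\mathcal R)}$ and $\mu_i^{(A\to\mathcal R)}$. Then
\[
\bigl|\mathcal Q_k(\widehat\mu_{i,T}^{(A\to\mathcal R)})-\mathcal Q_k(\mu_i^{(A\to\mathcal R)})\bigr|
=\Bigl|\int \bigl(\rho(z,\xi_k)-\rho(z',\xi_k)\bigr)\,d\gamma(z,z')\Bigr|
\le \int \rho(z,z')\,d\gamma(z,z').
\]
The inequality is the triangle inequality for $\rho$. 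Taking the infimum over $\gamma$ gives the bound $\epsilon:=W_\rho(\widehat\mu_{i,T}^{(A\to\mathcal R)},\mu_i^{(A\to\mathcal R)})$.

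Equivalently, this step is Theorem~\ref{thm:wasserstein_stability} applied with $\nu=\widetilde\nu=\delta_{\xi_k}$. The cost $c=\rho$ has $L_x=1$, and $W_\rho(\mu,\delta_{\xi_k})=\mathcal Q_k(\mu)$ because the only coupling with a Dirac measure is the product coupling. I would cite it this way and mention the direct argument as a check.

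The one technical point is that $\mathcal Q_k$ must be finite, so that the subtraction is legitimate. This holds whenever $\mu_i^{(A\to\mathcal R)}$ has a finite first moment. That condition is implicit, since otherwise $\Gamma_i$ is undefined; the empirical measure is finitely supported, so its scores are finite in any case.

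The second step is the margin argument. For any $r\neq k_i^\star$,
\[
\mathcal Q_r(\widehat\mu)-\mathcal Q_{k_i^\star}(\widehat\mu)
\ge \mathcal Q_r(\mu)-\mathcal Q_{k_i^\star}(\mu)-2\epsilon
\ge \Gamma_i-2\epsilon>0,
\]
using the hypothesis $\epsilon<\Gamma_i/2$. Hence $k_i^\star$ is the unique minimizer of the empirical scores, so $\widehat k_{i,T}=k_i^\star$ regardless of how argmin ties would otherwise be broken.

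I do not expect a real obstacle here. The only care needed is in the first step, namely the integrability condition and the identification of $\mathcal Q_k$ with a transport cost to a Dirac measure. Everything else is the same $2\epsilon$ margin argument already used for Theorem~\ref{thm:matching_recovery}.
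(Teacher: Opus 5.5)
Your proof is correct and follows the paper's own proof: the paper also shows $|\mathcal Q_k(\eta)-\mathcal Q_k(\eta')|\le W_\rho(\eta,\eta')$ by integrating the reverse triangle inequality $|\rho(z,\xi_k)-\rho(z',\xi_k)|\le\rho(z,z')$ against an arbitrary coupling and taking the infimum, and then finishes with the same add-and-subtract margin argument giving $\Gamma_i-2\varepsilon_i>0$. Your shortcut of reading the Lipschitz bound off Theorem~\ref{thm:wasserstein_stability} with $\nu=\widetilde\nu=\delta_{\xi_k}$ and $c=\rho$ is also valid, though the paper does not use it, and your finite-first-moment remark makes explicit a point the paper leaves implicit.
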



To confidently compress complex feature circuits into semantic supernodes, we need guarantees that the grouping decision is stable under finite-sample noise. Theorem~\ref{thm:voronoi_recovery} provides such a guarantee through the Voronoi margin $\Gamma_i$, which measures the assignment-score gap between the correct semantic center and the nearest competing semantic center in $\mathcal R$. This margin induces a recovery radius $\Gamma_i/2$: if the empirical projected distribution satisfies
$W_\rho(\widehat\mu_{i,T}^{(A\to\mathcal R)},\mu_i^{(A\to\mathcal R)})<\Gamma_i/2$, then the empirical semantic assignment is guaranteed to match the population assignment. Thus, large-margin features can be merged into supernodes with confidence, while small-margin features are close to semantic boundaries and should be left ungrouped or marked as uncertain. Moreover, this recovery condition requires the empirical projected distribution
to be sufficiently close to its population counterpart, which naturally raises
the question of how large the sample size $T$ must be for the condition to hold
with high probability. To answer this question, we refer readers to
Appendix~\ref{app:voronoi_recovery}
(Table~\ref{tab:voronoi_convergence_rates}) for high-probability recovery
variants and sufficient sample complexity guarantees, and to
Appendix~\ref{app:matching_recovery} for combined distribution-and-cost
perturbation bounds.

\section{Experiments}
\label{exps}
We conduct the experiments on open source JumpReLU SAEs \cite{gemma_scope} trained on the output of MLP block of Gemma-2-2b \cite{gemma_2} and our own TopK SAEs \cite{topk} trained on Residual stream of GPT2 \cite{gpt2}, the training details are in Appendix \ref{sec:exp_details}. 
We implement the following baselines, implementation details are in Appendix \ref{sec:baseline_implementation}: 
\begin{itemize} [leftmargin=*, itemsep=2pt, topsep=2pt, parsep=0pt]
    \item SAE Match \cite{mech_permute} (\textbf{Match}): computes the distance between two features via $L_2$ of the decoder vectors. This method contains two setups: directly matching two layers, or ``composition permutation" where we recursively match two consecutive layers till we reach the target layer, we denote this setup as \textbf{Match ML}.
    \item Feature Flow \cite{featflow} (\textbf{FeatFlow}): uses cosine similarity of the decoder vectors as the feature distance. This method contains also contains two setups of direct matching and ``composition permutation", we denote this setup as \textbf{FeatFlow ML}.
    \item Attribution Patching \cite{attrib, feature_circuit} (\textbf{Attrib}): selects important feature for a target feature via using attribution patching.
    \item LLM selection (\textbf{LLM Sel}): we prompt LLM (gpt-oss-120b \cite{oss}) to select which feature is the most relevant to a target feature. Note that prompting LLM to select among all pairs of features are very costly, thus we do not evaluate this method on feature matching (Section \ref{exps: feature matching}).
\end{itemize}

Lastly, Appendix \ref{sec:feat_as_dist} illustrates that representing feature as distribution outperforms representing as one vector for all of main experiments, showing that activation distribution preserves more semantic information.

\subsection{Feature Matching Experiments}
\label{exps: feature matching}
We test the feature matching under three setups, matching SAEs from: consecutive layers, middle to last layer, and first to last layer detailed in Table \ref{tab:feature_matching_setup}. Each setup we randomly sample 100 feature pairs to evaluate, these pairs are shared across all methods to ensure fairness. Since LLM rating fluctuates, we evaluate 5 times and report the 90\% confident interval. Other details about LLM prompt and implementation are in Appendix \ref{sec:setup_feature_matching}. Given the target SAE at layer $A$ and source SAE at layer $B$, we follow \cite{mech_permute} and evaluate using:
\begin{itemize} [leftmargin=*, itemsep=2pt, topsep=2pt, parsep=0pt]
    \item LLM evaluation and Accuracy (\textbf{LLM Eval} and \textbf{Acc}): We use a language model to caption two features separately given their activation tokens. Given the captions, for the LLM Eval metric, we ask LLM if the two features are ``similar", ``maybe", or ``different"; and we assign 3, 2 or 1 point for the judgment respectively, the higher the point the better. For the Acc metric, we report the percentage of accurate matching (LLM judge the feature pair as ``similar"). We choose gpt-oss-120b \cite{oss} for this experiment.

    
    \item Cross Entropy loss and Variance Explained (\textbf{CE} and \textbf{VE}): If we correctly match similar feature, we can replace the target feature in layer $A$ with source feature in $B$. We denote the reconstruction of SAE $A$ given the feature replacement as $\widehat x^{(A)}_{\widehat M_{A\to B}}$. The CE measures Cross Entropy loss of the language model when replacing the original hidden state $x^{(A)}$ with $\widehat x^{(A)}_{\widehat M_{A\to B}}$, while the VE measures how well $x^{(A)}_{\widehat M_{A\to B}}$ captures $x^{(A)}$ via variance explained $Var(x^{(A)}-\widehat x^{(A)}_{\widehat M_{A\to B}}) / Var(x^{(A)})$.

\end{itemize}

The results are shown in Table \ref{tab:feature_matching}. Our method performs well on LLM Eval and Matching Accuracy, while maintaining competitive performance on downstream cross entropy loss and variance explained. Most noticeably, our method has substantially more similar matched features across all setups compared to other baselines, suggesting the effective of using reference space. Moreover, consistently reaching strong results on CE and VE shows that our method identifies the correct feature pairs with similar functional role, not just mere similar in activated tokens. On the other hand, other methods are less consistent in CE and VE metrics as well as are less accurate in multi-layer matching.

\begin{table}[h]
    \footnotesize
    \centering
    \vspace{-5mm}
    \caption{Feature matching experiment results on GPT2 and Gemma-2-2B.}
    \begin{tabular}{l|c|c|c|c|c|c|c|c}
        \toprule
        & \multicolumn{4}{c|}{\textbf{GPT2}} & \multicolumn{4}{c}{\textbf{Gemma-2-2B}} \\
        \textbf{Method} & \textbf{LLM Eval} $\uparrow$ & \textbf{Acc} $\uparrow$ & \textbf{CE} $\downarrow$ & \textbf{VE} $\uparrow$ & \textbf{LLM Eval} $\uparrow$ & \textbf{Acc} $\uparrow$ & \textbf{CE} $\downarrow$ & \textbf{VE} $\uparrow$ \\
        \midrule
        \multicolumn{1}{c|}{} & \multicolumn{4}{c|}{\textbf{Layer 5 match with Layer 6}} & \multicolumn{4}{c}{\textbf{Layer 12 match with Layer 13}} \\
        \midrule
        \textbf{Our} & \textbf{2.53} $\pm$ \textbf{0.04} & \textbf{68.2} $\pm$ \textbf{1.9} & \underline{3.775} & \underline{0.9866} & \textbf{2.32} $\pm$ \textbf{0.04} & \textbf{58.6} $\pm$ \textbf{2.0} & 2.74784 & \textbf{0.7196} \\
        FeatFlow & \underline{2.46} $\pm$ \underline{0.04} & \underline{66.2} $\pm$ \underline{1.8} & \textbf{3.774} & \underline{0.9866} & 1.51 $\pm$ 0.03 & 20.4 $\pm$ 1.5 & \textbf{2.74774} & 0.7181 \\
        Match & 2.25 $\pm$ 0.04 & 55.6 $\pm$ 2.6 & 3.776 & \textbf{0.9867} & 1.41 $\pm$ 0.01 & 17.2 $\pm$ 1.5 & \underline{2.74781} & \underline{0.7188} \\
        Attrib & 1.21 $\pm$ 0.01 & 8.6 $\pm$ 0.7 & 3.785 & 0.9863 & \underline{1.65} $\pm$ \underline{0.04} & \underline{27.2} $\pm$ \underline{1.8} & 2.74804 & 0.7105 \\
        \midrule
        \multicolumn{1}{c|}{} & \multicolumn{4}{c|}{\textbf{Layer 5 match with Layer 11}} & \multicolumn{4}{c}{\textbf{Layer 12 match with Layer 25}} \\
        \midrule
        \textbf{Our} & \textbf{1.56} $\pm$ \textbf{0.04} & \textbf{23.2} $\pm$ \textbf{2.2} & \underline{3.806} & \underline{0.9127} & \textbf{1.83} $\pm$ \textbf{0.04} & \textbf{33.6} $\pm$ \textbf{2.9} & \textbf{2.74809} & \textbf{0.9115} \\
        FeatFlow & \underline{1.49} $\pm$ \underline{0.02} & 20.0 $\pm$ 1.7 & 3.810 & 0.9126 & 1.26 $\pm$ 0.01 & 10.0 $\pm$ 1.0 & 2.74813 & \underline{0.9114} \\
        Match & \underline{1.49} $\pm$ \underline{0.03} & \underline{22.2} $\pm$ \underline{1.9} & \textbf{3.800} & \textbf{0.9130} & 1.18 $\pm$ 0.03 & 6.6 $\pm$ 1.1 & \underline{2.74811} & \underline{0.9114} \\
        FeatFlow ML & 1.18 $\pm$ 0.01 & 7.8 $\pm$ 1.1 & 3.808 & \underline{0.9127} & 1.23 $\pm$ 0.04 & 9.4 $\pm$ 2.3 & \underline{2.74811} & \underline{0.9114} \\
        Match ML & 1.17 $\pm$ 0.04 & 6.8 $\pm$ 1.9 & 3.808 & \underline{0.9127} & 1.26 $\pm$ 0.04 & 11.2 $\pm$ 1.2 & 2.74812 & \underline{0.9114} \\
        Attrib & 1.26 $\pm$ 0.02 & 10.6 $\pm$ 1.2 & 3.955 & 0.8911 & \underline{1.34} $\pm$ \underline{0.04} & \underline{13.0} $\pm$ \underline{1.3} & 2.74820 & 0.9109 \\
        \midrule
        \multicolumn{1}{c|}{} & \multicolumn{4}{c|}{\textbf{Layer 0 match with Layer 11}} & \multicolumn{4}{c}{\textbf{Layer 0 match with Layer 25}} \\
        \midrule
        \textbf{Our} & \textbf{1.39} $\pm$ \textbf{0.03} & \textbf{17.2} $\pm$ \textbf{2.3} & \textbf{3.828} & \textbf{0.9111} & \textbf{1.83} $\pm$ \textbf{0.05} & \textbf{34.8} $\pm$ \textbf{2.1} & \underline{2.74866} & \textbf{0.9094} \\
        FeatFlow & \underline{1.34} $\pm$ \underline{0.02} & \underline{12.4} $\pm$ \underline{1.0} & 3.855 & 0.9095 & 1.23 $\pm$ 0.04 & 8.4 $\pm$ 1.5 & 2.74884 & 0.9091 \\
        Match & 1.27 $\pm$ 0.02 & 10.0 $\pm$ 1.3 & 3.833 & \underline{0.9110} & 1.21 $\pm$ 0.01 & 8.0 $\pm$ 0.8 & 2.74874 & \textbf{0.9094} \\
        FeatFlow ML & 1.16 $\pm$ 0.04 & 5.4 $\pm$ 2.4 & \underline{3.831} & \underline{0.9110} & 1.11 $\pm$ 0.03 & 4.4 $\pm$ 1.2 & 2.74889 & 0.9085 \\
        Match ML & 1.14 $\pm$ 0.03 & 6.0 $\pm$ 2.1 & 3.911 & 0.9063 & 1.13 $\pm$ 0.04 & 5.2 $\pm$ 1.8 & \textbf{2.74863} & \textbf{0.9094} \\
        Attrib & 1.18 $\pm$ 0.01 & 6.2 $\pm$ 1.3 & 3.837 & 0.9105 & \underline{1.28} $\pm$ \underline{0.05} & \underline{11.6} $\pm$ \underline{2.6} & 2.74885 & \underline{0.9092} \\
        \bottomrule
    \end{tabular}
    \vspace{-3mm}
    \label{tab:feature_matching}
\end{table}

\subsection{Circuit Compressing Experiments}
\label{sec:circuit_compression}
To evaluate, we first construct feature circuits following \cite{feature_circuit} with 50 nodes and compress down to 5 supernodes. Then, for each supernode, we randomly sample half of the nodes to prompt an LLM to caption the shared meaning and the other half are mixed with random features from the remaining supernodes to create a test set. We then ask the LLM to pick the node in the test set that is from the original supernode given the caption. The more accurate prediction, the more intuitive the compression. We evaluate over 200 circuits and report 90\% confident interval. Further implementation details are in Appendix \ref{sec:circuit_compression_setup}.

The results are in table \ref{tab:circuit_compression}. We find that the our method compresses circuit more accurately for both GPT2 and Gemma-2-2B than other methods. While compressing using LLM selection is competitive, it is more costly in large circuits and still less accurate than our method. This shows that our method are the best choice for automated circuit compressing.

\begin{figure}[htbp]
    \centering
    \vspace{-5mm}
    \begin{minipage}[c]{0.5\textwidth}
        \centering
        \captionof{table}{Circuit compressing experiment results on GPT2 and Gemma-2-2B.}
        \small
        \label{tab:circuit_compression}
        
        \setlength{\tabcolsep}{4pt} 
        
        
        \begin{tabular}{l|c|c}
            \toprule
            \textbf{Method} & \textbf{Gemma-2-2B} & \textbf{GPT2-small} \\
            \midrule
            \textbf{Our} & \textbf{0.6151} $\pm$ \textbf{0.0153} & \textbf{0.6812} $\pm$ \textbf{0.0173}\\
            FeatFlow & 0.5349 $\pm$ 0.0137 & \underline{0.6655} $\pm$ \underline{0.0167}\\
            Match & 0.5563 $\pm$ 0.0200 & 0.5555 $\pm$ 0.0183\\
            LLM Sel & \underline{0.6019} $\pm$ \underline{0.0131} & 0.6572 $\pm$ 0.0171\\
            Attrib & 0.5125 $\pm$ 0.0129 & 0.5405 $\pm$ 0.0189\\
            \bottomrule
        \end{tabular}
    \end{minipage}\hfill
    \begin{minipage}[c]{0.48\textwidth}
        \centering
        \includegraphics[width=0.8\linewidth]{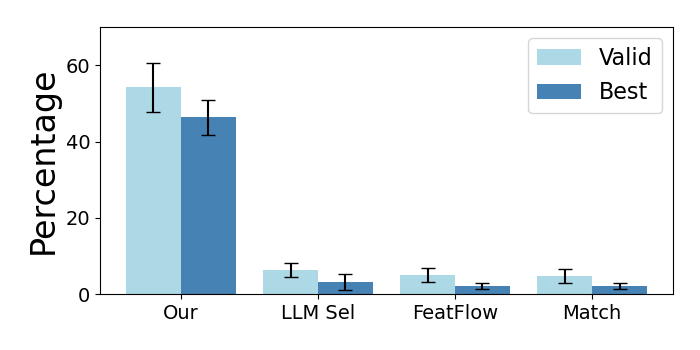}
        \caption{Digit addition experiment results. ``Valid" means the user agree with the matching, and ``Best" means the method has the best match among all methods.}
        \label{fig:digit_addition}
    \end{minipage}
    \vspace{-5mm}
\end{figure}

\subsection{Matching Feature With Subtle Meaning}
\label{sec:digit_addition}
We test the methods ability to recognize subtle meanings in the features, where we know the ``ground truth" meanings. Specifically, we use ``two number addition" feature \cite{on_biology} in the format ``$p_1+p_2=$". We limit $p_1, p_2 \in [0,9]$ for a less costly experiment to run LLM Selection method. We extract the features' activation on the ``=" token over 100 datapoints ($[0,9]\times[0,9]$) and plot the activation pattern of each feature. The patterns tell the ground truth of feature meaning, i.e. feature: ``first number is 5" would have activation pattern of ``only row $p_1=5$ has positive values", one such pattern is in Figure \ref{fig:digit_addition_qualitative}. Based on the patterns as the ground truth, we can identify if the matching correctly identify the functional meaning or merely ``matching in math" context only.

We create blind ranking API for the methods and ask volunteers to rank which candidate feature is better given a target feature, randomly ranking 100 target features. We test on matching feature of the same layer (Layer 19 MLP from Gemma-2-2b) to ensure other methods can capture on the same manifold. Details are in Appendix \ref{sec:digit_addition_setup}.

The quantitative results in Figure \ref{fig:digit_addition} and qualitative in Figure \ref{fig:digit_addition_qualitative}. Our methods are ranked substantially higher than other matching methods by the people in the survey. The qualitative figure \ref{fig:digit_addition_qualitative} shows that our method correctly capture the hidden meaning of the features while others cannot.

\begin{figure}
    \centering
    \vspace{-4mm}
    \includegraphics[width=0.9\linewidth]{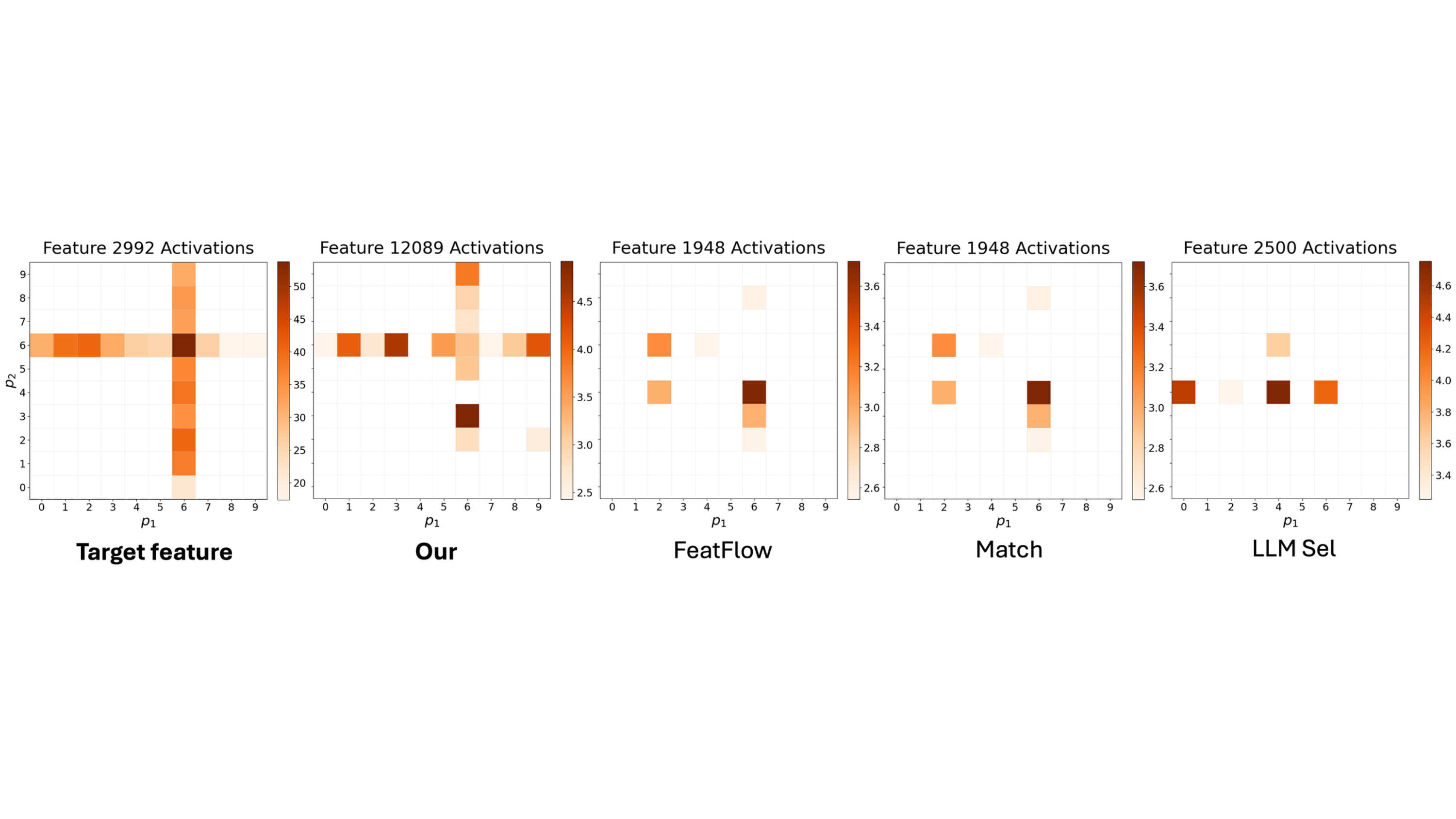}
    \caption{Qualitative example of ``digit addition" (Section \ref{sec:digit_addition}). Our method correctly identifies the similar feature while other methods fail.}
    \label{fig:digit_addition_qualitative}
    \vspace{-3mm}
\end{figure}

\section{Applications}
We demonstrate two applications of our framework. We show qualitative results of automatic circuit compression in Section \ref{application:circuit_compression}. Furthermore, we can identify feature groups that consistently perform similar function role in many circuits, Section \ref{application:modular}.

\subsection{Case Study: Automated Circuit Compression}
\label{application:circuit_compression}
We demonstrate qualitative results in \textit{automatically} compressing feature circuits to reveal the mechanisms of the model. We follow the circuit examples from Neuropedia \cite{neuronpedia} and \citet{on_biology} in Figure \ref{fig:dallas_circuit}, \ref{fig:rhythm_circuit}, \ref{fig:big_small_circuit}. Specifically, we found that our compression of ``Dallas circuit" (Figure \ref{fig:dallas_circuit}) found similar supernodes as in open source \textit{manual} implementation \cite{neuronpedia}. We find that the model retrieve information about Dallas as the Texas state, it then retrieve various government / U.S court related contexts and output the correct answer ``Austin". We identify compressed circuit that \textit{do not have open source compression} in ``rhyming circuit" (Figure \ref{fig:rhythm_circuit}), we find that the model first recognizes the poem context and thinks about a noun that rhythm and related to ``hunger". It then thinks about rhythm in ``animal" context, ultimately outputting ``rabbit". Details are in Appendix \ref{sec:automated_circuit_compress_setup}.

\subsection{Case Study: Finding Modular Feature Group}
\label{application:modular}

\begin{figure}
    \centering
    \vspace{-5mm}
    \includegraphics[width=1\linewidth]{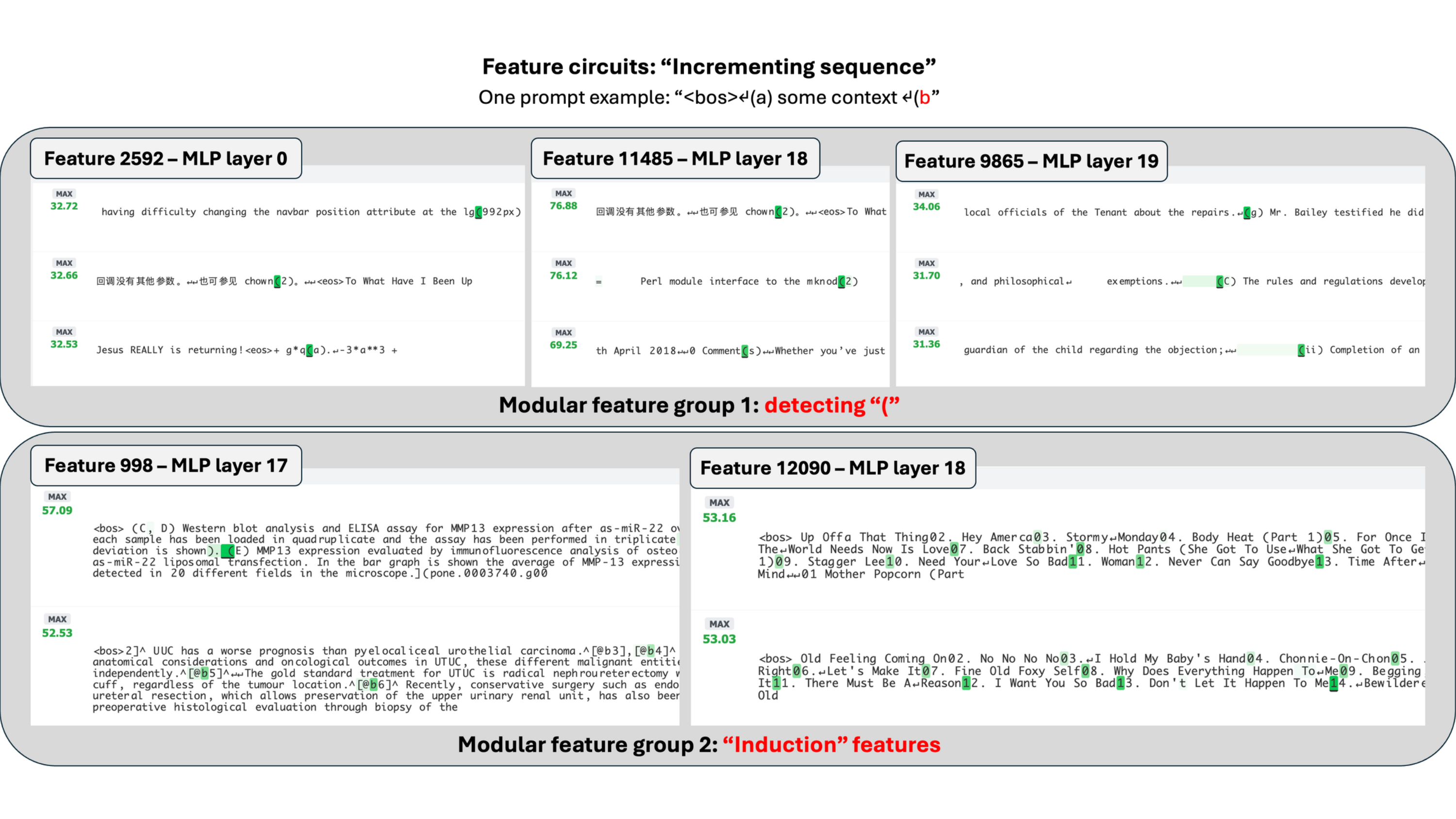}
    \caption{Two modular circuits identified form ``incrementing sequences" circuits \cite{feature_circuit}. We find two distinct modular function among the features, namely detecting ``(" features and ``Induction" features \cite{induction}. For example, feature 12090 layer 18 sees ``02" of previous text (without activate on ``02"), it recognizes this is increment sequence and activates on ``03", etc.}
    \label{fig:incrementing_seq_modular}.
    \vspace{-4mm}
\end{figure}

Beyond unsupervisedly finding circuits that perform similar task \cite{feature_circuit}, we can further automatically detect feature group that the features consistently perform the same functional role and appear in similar circuits.
Specifically, we hypothesize that features that are frequently assigned into the same supernodes have the same functional role. To check this hypothesis, we propose the following procedure: we use method in \cite{feature_circuit} to identify circuits that perform similar task. We then perform circuit compression on all circuits. Lastly, we apply clustering based on how frequent the features are in the same supernodes throughout all compressions, creating groups of features that we term as ``modular feature groups". Details on the procedure are in Appendix \ref{sec:modular_setup}. 

Figure \ref{fig:incrementing_seq_modular}, \ref{fig:html_tag_modular}, \ref{fig:apostrophe_modular} show the modular feature groups. For the interesting behavior of ``incrementing sequences" in \cite{feature_circuit}, we can automatically identify two interesting groups (Figure \ref{fig:incrementing_seq_modular}). Group 1 help recognize token ``("; and group 2 contain ``induction features" - the core mechanism in recognizing repeated context \cite{induction}. This shows that our circuit compression correctly identifies the subtle functional role of the features, demonstrating the capability of unsupervisedly exploring complex behavior. Similar examples can be seen in Figure \ref{fig:html_tag_modular} and \ref{fig:apostrophe_modular}.

\section{Conclusion}
\label{sec: discussion}
In this work, we introduced a distributional framework for comparing Sparse Autoencoder features across multi-layer language models. We framed cross-manifold feature comparison as an Optimal Transport problem. Our approach achieves strong multi-layer feature matching and enables the automated feature circuits compression into interpretable semantic supernodes.






{
\small

\bibliography{cite}
\bibliographystyle{unsrtnat}
}

\newpage
\section*{NeurIPS Paper Checklist}

The checklist is designed to encourage best practices for responsible machine learning research, addressing issues of reproducibility, transparency, research ethics, and societal impact. Do not remove the checklist: {\bf The papers not including the checklist will be desk rejected.} The checklist should follow the references and follow the (optional) supplemental material.  The checklist does NOT count towards the page
limit. 

Please read the checklist guidelines carefully for information on how to answer these questions. For each question in the checklist:
\begin{itemize}
    \item You should answer \answerYes{}, \answerNo{}, or \answerNA{}.
    \item \answerNA{} means either that the question is Not Applicable for that particular paper or the relevant information is Not Available.
    \item Please provide a short (1--2 sentence) justification right after your answer (even for \answerNA). 
\end{itemize}

{\bf The checklist answers are an integral part of your paper submission.} They are visible to the reviewers, area chairs, senior area chairs, and ethics reviewers. You will also be asked to include it (after eventual revisions) with the final version of your paper, and its final version will be published with the paper.

The reviewers of your paper will be asked to use the checklist as one of the factors in their evaluation. While \answerYes{} is generally preferable to \answerNo{}, it is perfectly acceptable to answer \answerNo{} provided a proper justification is given (e.g., error bars are not reported because it would be too computationally expensive'' or ``we were unable to find the license for the dataset we used''). In general, answering \answerNo{} or \answerNA{} is not grounds for rejection. While the questions are phrased in a binary way, we acknowledge that the true answer is often more nuanced, so please just use your best judgment and write a justification to elaborate. All supporting evidence can appear either in the main paper or the supplemental material, provided in appendix. If you answer \answerYes{} to a question, in the justification please point to the section(s) where related material for the question can be found.

IMPORTANT, please:
\begin{itemize}
    \item {\bf Delete this instruction block, but keep the section heading ``NeurIPS Paper Checklist"},
    \item  {\bf Keep the checklist subsection headings, questions/answers and guidelines below.}
    \item {\bf Do not modify the questions and only use the provided macros for your answers}.
\end{itemize}


\begin{enumerate}

\item {\bf Claims}
    \item[] Question: Do the main claims made in the abstract and introduction accurately reflect the paper's contributions and scope?
    \item[] Answer: \answerYes{} 
    \item[] Justification: See Abstract and Introduction (Section \ref{sec:introduction}). 
    \item[] Guidelines:
    \begin{itemize}
        \item The answer \answerNA{} means that the abstract and introduction do not include the claims made in the paper.
        \item The abstract and/or introduction should clearly state the claims made, including the contributions made in the paper and important assumptions and limitations. A \answerNo{} or \answerNA{} answer to this question will not be perceived well by the reviewers. 
        \item The claims made should match theoretical and experimental results, and reflect how much the results can be expected to generalize to other settings. 
        \item It is fine to include aspirational goals as motivation as long as it is clear that these goals are not attained by the paper. 
    \end{itemize}

\item {\bf Limitations}
    \item[] Question: Does the paper discuss the limitations of the work performed by the authors?
    \item[] Answer: \answerYes{} 
    \item[] Justification: See Appendix \ref{sec:limitation}.
    \item[] Guidelines:
    \begin{itemize}
        \item The answer \answerNA{} means that the paper has no limitation while the answer \answerNo{} means that the paper has limitations, but those are not discussed in the paper. 
        \item The authors are encouraged to create a separate ``Limitations'' section in their paper.
        \item The paper should point out any strong assumptions and how robust the results are to violations of these assumptions (e.g., independence assumptions, noiseless settings, model well-specification, asymptotic approximations only holding locally). The authors should reflect on how these assumptions might be violated in practice and what the implications would be.
        \item The authors should reflect on the scope of the claims made, e.g., if the approach was only tested on a few datasets or with a few runs. In general, empirical results often depend on implicit assumptions, which should be articulated.
        \item The authors should reflect on the factors that influence the performance of the approach. For example, a facial recognition algorithm may perform poorly when image resolution is low or images are taken in low lighting. Or a speech-to-text system might not be used reliably to provide closed captions for online lectures because it fails to handle technical jargon.
        \item The authors should discuss the computational efficiency of the proposed algorithms and how they scale with dataset size.
        \item If applicable, the authors should discuss possible limitations of their approach to address problems of privacy and fairness.
        \item While the authors might fear that complete honesty about limitations might be used by reviewers as grounds for rejection, a worse outcome might be that reviewers discover limitations that aren't acknowledged in the paper. The authors should use their best judgment and recognize that individual actions in favor of transparency play an important role in developing norms that preserve the integrity of the community. Reviewers will be specifically instructed to not penalize honesty concerning limitations.
    \end{itemize}

\item {\bf Theory assumptions and proofs}
    \item[] Question: For each theoretical result, does the paper provide the full set of assumptions and a complete (and correct) proof?
    \item[] Answer: \answerYes{} 
    \item[] Justification: See Appendix \ref{sec:additional_theory}.
    \item[] Guidelines:
    \begin{itemize}
        \item The answer \answerNA{} means that the paper does not include theoretical results. 
        \item All the theorems, formulas, and proofs in the paper should be numbered and cross-referenced.
        \item All assumptions should be clearly stated or referenced in the statement of any theorems.
        \item The proofs can either appear in the main paper or the supplemental material, but if they appear in the supplemental material, the authors are encouraged to provide a short proof sketch to provide intuition. 
        \item Inversely, any informal proof provided in the core of the paper should be complemented by formal proofs provided in appendix or supplemental material.
        \item Theorems and Lemmas that the proof relies upon should be properly referenced. 
    \end{itemize}

    \item {\bf Experimental result reproducibility}
    \item[] Question: Does the paper fully disclose all the information needed to reproduce the main experimental results of the paper to the extent that it affects the main claims and/or conclusions of the paper (regardless of whether the code and data are provided or not)?
    \item[] Answer: \answerYes{} 
    \item[] Justification: See Appendix \ref{sec:exp_details}.
    \item[] Guidelines:
    \begin{itemize}
        \item The answer \answerNA{} means that the paper does not include experiments.
        \item If the paper includes experiments, a \answerNo{} answer to this question will not be perceived well by the reviewers: Making the paper reproducible is important, regardless of whether the code and data are provided or not.
        \item If the contribution is a dataset and\slash or model, the authors should describe the steps taken to make their results reproducible or verifiable. 
        \item Depending on the contribution, reproducibility can be accomplished in various ways. For example, if the contribution is a novel architecture, describing the architecture fully might suffice, or if the contribution is a specific model and empirical evaluation, it may be necessary to either make it possible for others to replicate the model with the same dataset, or provide access to the model. In general. releasing code and data is often one good way to accomplish this, but reproducibility can also be provided via detailed instructions for how to replicate the results, access to a hosted model (e.g., in the case of a large language model), releasing of a model checkpoint, or other means that are appropriate to the research performed.
        \item While NeurIPS does not require releasing code, the conference does require all submissions to provide some reasonable avenue for reproducibility, which may depend on the nature of the contribution. For example
        \begin{enumerate}
            \item If the contribution is primarily a new algorithm, the paper should make it clear how to reproduce that algorithm.
            \item If the contribution is primarily a new model architecture, the paper should describe the architecture clearly and fully.
            \item If the contribution is a new model (e.g., a large language model), then there should either be a way to access this model for reproducing the results or a way to reproduce the model (e.g., with an open-source dataset or instructions for how to construct the dataset).
            \item We recognize that reproducibility may be tricky in some cases, in which case authors are welcome to describe the particular way they provide for reproducibility. In the case of closed-source models, it may be that access to the model is limited in some way (e.g., to registered users), but it should be possible for other researchers to have some path to reproducing or verifying the results.
        \end{enumerate}
    \end{itemize}

\item {\bf Open access to data and code}
    \item[] Question: Does the paper provide open access to the data and code, with sufficient instructions to faithfully reproduce the main experimental results, as described in supplemental material?
    \item[] Answer: \answerNo{} 
    \item[] Justification: We will provide the code once the paper get published.
    \item[] Guidelines:
    \begin{itemize}
        \item The answer \answerNA{} means that paper does not include experiments requiring code.
        \item Please see the NeurIPS code and data submission guidelines (\url{https://neurips.cc/public/guides/CodeSubmissionPolicy}) for more details.
        \item While we encourage the release of code and data, we understand that this might not be possible, so \answerNo{} is an acceptable answer. Papers cannot be rejected simply for not including code, unless this is central to the contribution (e.g., for a new open-source benchmark).
        \item The instructions should contain the exact command and environment needed to run to reproduce the results. See the NeurIPS code and data submission guidelines (\url{https://neurips.cc/public/guides/CodeSubmissionPolicy}) for more details.
        \item The authors should provide instructions on data access and preparation, including how to access the raw data, preprocessed data, intermediate data, and generated data, etc.
        \item The authors should provide scripts to reproduce all experimental results for the new proposed method and baselines. If only a subset of experiments are reproducible, they should state which ones are omitted from the script and why.
        \item At submission time, to preserve anonymity, the authors should release anonymized versions (if applicable).
        \item Providing as much information as possible in supplemental material (appended to the paper) is recommended, but including URLs to data and code is permitted.
    \end{itemize}

\item {\bf Experimental setting/details}
    \item[] Question: Does the paper specify all the training and test details (e.g., data splits, hyperparameters, how they were chosen, type of optimizer) necessary to understand the results?
    \item[] Answer: \answerYes{} 
    \item[] Justification: See Appendix \ref{sec:exp_details}.
    \item[] Guidelines:
    \begin{itemize}
        \item The answer \answerNA{} means that the paper does not include experiments.
        \item The experimental setting should be presented in the core of the paper to a level of detail that is necessary to appreciate the results and make sense of them.
        \item The full details can be provided either with the code, in appendix, or as supplemental material.
    \end{itemize}

\item {\bf Experiment statistical significance}
    \item[] Question: Does the paper report error bars suitably and correctly defined or other appropriate information about the statistical significance of the experiments?
    \item[] Answer: \answerYes{} 
    \item[] Justification: We report 90\% confident interval, see Section \ref{exps}.
    \item[] Guidelines:
    \begin{itemize}
        \item The answer \answerNA{} means that the paper does not include experiments.
        \item The authors should answer \answerYes{} if the results are accompanied by error bars, confidence intervals, or statistical significance tests, at least for the experiments that support the main claims of the paper.
        \item The factors of variability that the error bars are capturing should be clearly stated (for example, train/test split, initialization, random drawing of some parameter, or overall run with given experimental conditions).
        \item The method for calculating the error bars should be explained (closed form formula, call to a library function, bootstrap, etc.)
        \item The assumptions made should be given (e.g., Normally distributed errors).
        \item It should be clear whether the error bar is the standard deviation or the standard error of the mean.
        \item It is OK to report 1-sigma error bars, but one should state it. The authors should preferably report a 2-sigma error bar than state that they have a 96\% CI, if the hypothesis of Normality of errors is not verified.
        \item For asymmetric distributions, the authors should be careful not to show in tables or figures symmetric error bars that would yield results that are out of range (e.g., negative error rates).
        \item If error bars are reported in tables or plots, the authors should explain in the text how they were calculated and reference the corresponding figures or tables in the text.
    \end{itemize}

\item {\bf Experiments compute resources}
    \item[] Question: For each experiment, does the paper provide sufficient information on the computer resources (type of compute workers, memory, time of execution) needed to reproduce the experiments?
    \item[] Answer: \answerYes{} 
    \item[] Justification: See Appendix \ref{sec:sae_training}.
    \item[] Guidelines:
    \begin{itemize}
        \item The answer \answerNA{} means that the paper does not include experiments.
        \item The paper should indicate the type of compute workers CPU or GPU, internal cluster, or cloud provider, including relevant memory and storage.
        \item The paper should provide the amount of compute required for each of the individual experimental runs as well as estimate the total compute. 
        \item The paper should disclose whether the full research project required more compute than the experiments reported in the paper (e.g., preliminary or failed experiments that didn't make it into the paper). 
    \end{itemize}
    
\item {\bf Code of ethics}
    \item[] Question: Does the research conducted in the paper conform, in every respect, with the NeurIPS Code of Ethics \url{https://neurips.cc/public/EthicsGuidelines}?
    \item[] Answer: \answerYes{} 
    \item[] Justification: We follow code of ethics guidelines and only use open source data and models.
    \item[] Guidelines:
    \begin{itemize}
        \item The answer \answerNA{} means that the authors have not reviewed the NeurIPS Code of Ethics.
        \item If the authors answer \answerNo, they should explain the special circumstances that require a deviation from the Code of Ethics.
        \item The authors should make sure to preserve anonymity (e.g., if there is a special consideration due to laws or regulations in their jurisdiction).
    \end{itemize}

\item {\bf Broader impacts}
    \item[] Question: Does the paper discuss both potential positive societal impacts and negative societal impacts of the work performed?
    \item[] Answer: \answerNA{} 
    \item[] Justification: Our method do not tie to any deployments.
    \item[] Guidelines:
    \begin{itemize}
        \item The answer \answerNA{} means that there is no societal impact of the work performed.
        \item If the authors answer \answerNA{} or \answerNo, they should explain why their work has no societal impact or why the paper does not address societal impact.
        \item Examples of negative societal impacts include potential malicious or unintended uses (e.g., disinformation, generating fake profiles, surveillance), fairness considerations (e.g., deployment of technologies that could make decisions that unfairly impact specific groups), privacy considerations, and security considerations.
        \item The conference expects that many papers will be foundational research and not tied to particular applications, let alone deployments. However, if there is a direct path to any negative applications, the authors should point it out. For example, it is legitimate to point out that an improvement in the quality of generative models could be used to generate Deepfakes for disinformation. On the other hand, it is not needed to point out that a generic algorithm for optimizing neural networks could enable people to train models that generate Deepfakes faster.
        \item The authors should consider possible harms that could arise when the technology is being used as intended and functioning correctly, harms that could arise when the technology is being used as intended but gives incorrect results, and harms following from (intentional or unintentional) misuse of the technology.
        \item If there are negative societal impacts, the authors could also discuss possible mitigation strategies (e.g., gated release of models, providing defenses in addition to attacks, mechanisms for monitoring misuse, mechanisms to monitor how a system learns from feedback over time, improving the efficiency and accessibility of ML).
    \end{itemize}
    
\item {\bf Safeguards}
    \item[] Question: Does the paper describe safeguards that have been put in place for responsible release of data or models that have a high risk for misuse (e.g., pre-trained language models, image generators, or scraped datasets)?
    \item[] Answer: \answerNA{} 
    \item[] Justification: We do not release new data or model.
    \item[] Guidelines:
    \begin{itemize}
        \item The answer \answerNA{} means that the paper poses no such risks.
        \item Released models that have a high risk for misuse or dual-use should be released with necessary safeguards to allow for controlled use of the model, for example by requiring that users adhere to usage guidelines or restrictions to access the model or implementing safety filters. 
        \item Datasets that have been scraped from the Internet could pose safety risks. The authors should describe how they avoided releasing unsafe images.
        \item We recognize that providing effective safeguards is challenging, and many papers do not require this, but we encourage authors to take this into account and make a best faith effort.
    \end{itemize}

\item {\bf Licenses for existing assets}
    \item[] Question: Are the creators or original owners of assets (e.g., code, data, models), used in the paper, properly credited and are the license and terms of use explicitly mentioned and properly respected?
    \item[] Answer: \answerYes{} 
    \item[] Justification: We use open source dataset and models that are sufficiently cited.
    \item[] Guidelines:
    \begin{itemize}
        \item The answer \answerNA{} means that the paper does not use existing assets.
        \item The authors should cite the original paper that produced the code package or dataset.
        \item The authors should state which version of the asset is used and, if possible, include a URL.
        \item The name of the license (e.g., CC-BY 4.0) should be included for each asset.
        \item For scraped data from a particular source (e.g., website), the copyright and terms of service of that source should be provided.
        \item If assets are released, the license, copyright information, and terms of use in the package should be provided. For popular datasets, \url{paperswithcode.com/datasets} has curated licenses for some datasets. Their licensing guide can help determine the license of a dataset.
        \item For existing datasets that are re-packaged, both the original license and the license of the derived asset (if it has changed) should be provided.
        \item If this information is not available online, the authors are encouraged to reach out to the asset's creators.
    \end{itemize}

\item {\bf New assets}
    \item[] Question: Are new assets introduced in the paper well documented and is the documentation provided alongside the assets?
    \item[] Answer: \answerNA{} 
    \item[] Justification: We do not release new assets.
    \item[] Guidelines:
    \begin{itemize}
        \item The answer \answerNA{} means that the paper does not release new assets.
        \item Researchers should communicate the details of the dataset\slash code\slash model as part of their submissions via structured templates. This includes details about training, license, limitations, etc. 
        \item The paper should discuss whether and how consent was obtained from people whose asset is used.
        \item At submission time, remember to anonymize your assets (if applicable). You can either create an anonymized URL or include an anonymized zip file.
    \end{itemize}

\item {\bf Crowdsourcing and research with human subjects}
    \item[] Question: For crowdsourcing experiments and research with human subjects, does the paper include the full text of instructions given to participants and screenshots, if applicable, as well as details about compensation (if any)? 
    \item[] Answer: \answerYes{} 
    \item[] Justification: See Appendix \ref{sec:digit_addition_setup}.
    \item[] Guidelines:
    \begin{itemize}
        \item The answer \answerNA{} means that the paper does not involve crowdsourcing nor research with human subjects.
        \item Including this information in the supplemental material is fine, but if the main contribution of the paper involves human subjects, then as much detail as possible should be included in the main paper. 
        \item According to the NeurIPS Code of Ethics, workers involved in data collection, curation, or other labor should be paid at least the minimum wage in the country of the data collector. 
    \end{itemize}

\item {\bf Institutional review board (IRB) approvals or equivalent for research with human subjects}
    \item[] Question: Does the paper describe potential risks incurred by study participants, whether such risks were disclosed to the subjects, and whether Institutional Review Board (IRB) approvals (or an equivalent approval/review based on the requirements of your country or institution) were obtained?
    \item[] Answer: \answerYes{} 
    \item[] Justification: Our survey poses no risk for study participants.
    \item[] Guidelines:
    \begin{itemize}
        \item The answer \answerNA{} means that the paper does not involve crowdsourcing nor research with human subjects.
        \item Depending on the country in which research is conducted, IRB approval (or equivalent) may be required for any human subjects research. If you obtained IRB approval, you should clearly state this in the paper. 
        \item We recognize that the procedures for this may vary significantly between institutions and locations, and we expect authors to adhere to the NeurIPS Code of Ethics and the guidelines for their institution. 
        \item For initial submissions, do not include any information that would break anonymity (if applicable), such as the institution conducting the review.
    \end{itemize}

\item {\bf Declaration of LLM usage}
    \item[] Question: Does the paper describe the usage of LLMs if it is an important, original, or non-standard component of the core methods in this research? Note that if the LLM is used only for writing, editing, or formatting purposes and does \emph{not} impact the core methodology, scientific rigor, or originality of the research, declaration is not required.
    \item[] Answer: \answerNA{} 
    \item[] Justification: We do not use LLM for core methodology.
    \item[] Guidelines:
    \begin{itemize}
        \item The answer \answerNA{} means that the core method development in this research does not involve LLMs as any important, original, or non-standard components.
        \item Please refer to our LLM policy in the NeurIPS handbook for what should or should not be described.
    \end{itemize}

\end{enumerate}

\clearpage
\noindent\rule{\textwidth}{1.5pt}  

\begin{center}
    {\LARGE\bfseries Supplementary for {Semantic Optimal Transport for Sparse Autoencoder Feature Matching and Circuit Compression}}
\end{center}

\noindent\rule{\textwidth}{1.5pt}  

\addcontentsline{toc}{section}{Appendix}

\textbf{This supplementary material serves as the appendix to the main paper to provide full detailed pseudocode of algorithms, derivations and proofs of key theoretical results, comprehensive ablation studies and hyperparameter settings, as well as expanded discussions on implementation to reproduce results, scalability, and generalization. It is organized as follows:}

\begin{itemize}[leftmargin=*, label=]

    \item \phantomsection \textbf{\ref{sec:limitation}. \nameref{sec:limitation}} 
    \dotfill \pageref{sec:limitation}

    \item \phantomsection \textbf{\ref{sec:related_works_extend}. \nameref{sec:related_works_extend}} 
    \dotfill \pageref{sec:related_works_extend}

    \item \phantomsection \textbf{\ref{sec:exp_details}. \nameref{sec:exp_details}} 
    \dotfill \pageref{sec:exp_details}
    \begin{itemize}
        \item[] \ref{sec:main_algo}. \nameref{sec:main_algo} 
        \dotfill \pageref{sec:main_algo}
        
        \item[] \ref{sec:baseline_implementation}. \nameref{sec:baseline_implementation} 
        \dotfill \pageref{sec:baseline_implementation}

        \item[] \ref{sec:setup_feature_matching}. \nameref{sec:setup_feature_matching} 
        \dotfill \pageref{sec:setup_feature_matching}

        \item[] \ref{sec:circuit_compression_setup}. \nameref{sec:circuit_compression_setup} 
        \dotfill \pageref{sec:circuit_compression_setup}

        \item[] \ref{sec:digit_addition_setup}. \nameref{sec:digit_addition_setup} 
        \dotfill \pageref{sec:digit_addition_setup}

        \item[] \ref{sec:automated_circuit_compress_setup}. \nameref{sec:automated_circuit_compress_setup} 
        \dotfill \pageref{sec:automated_circuit_compress_setup}

        \item[] \ref{sec:modular_setup}. \nameref{sec:modular_setup} 
        \dotfill \pageref{sec:modular_setup}

        \item[] \ref{sec:sae_training}. \nameref{sec:sae_training} 
        \dotfill \pageref{sec:sae_training}

        \item[] \ref{sec:prompt}. \nameref{sec:prompt} 
        \dotfill \pageref{sec:prompt}
    \end{itemize}

    \item \phantomsection \textbf{\ref{sec:feat_as_dist}. \nameref{sec:feat_as_dist}} 
    \dotfill \pageref{sec:feat_as_dist}

    \item \phantomsection \textbf{\ref{sec:additional_theory}. \nameref{sec:additional_theory}} 
    \dotfill \pageref{sec:additional_theory}
    \begin{itemize}
        \item[] \ref{app:activation_rescaling_invariance}. \nameref{app:activation_rescaling_invariance} 
        \dotfill \pageref{app:activation_rescaling_invariance}

        \item[] \ref{app:flow_cost_refinement}. \nameref{app:flow_cost_refinement} 
        \dotfill \pageref{app:flow_cost_refinement}

        \item[] \ref{app:wasserstein_stability}. \nameref{app:wasserstein_stability} 
        \dotfill \pageref{app:wasserstein_stability}

        \item[] \ref{app:metric_specific_stability}. \nameref{app:metric_specific_stability} 
        \dotfill \pageref{app:metric_specific_stability}

        \item[] \ref{app:ground_cost_perturbation}. \nameref{app:ground_cost_perturbation} 
        \dotfill \pageref{app:ground_cost_perturbation}

        \item[] \ref{app:matching_recovery}. \nameref{app:matching_recovery} 
        \dotfill \pageref{app:matching_recovery}

        \item[] \ref{app:voronoi_recovery}. \nameref{app:voronoi_recovery} 
        \dotfill \pageref{app:voronoi_recovery}

        \item[] \ref{sec:voronoi_synthetic_verification}. \nameref{sec:voronoi_synthetic_verification} 
        \dotfill \pageref{sec:voronoi_synthetic_verification}
    \end{itemize}

\end{itemize}

\vspace{1em}
\hrule
\vspace{1em}

\pagebreak

\section{Appendix}

\appendix

\begin{figure}
    \centering
    \includegraphics[width=1\linewidth]{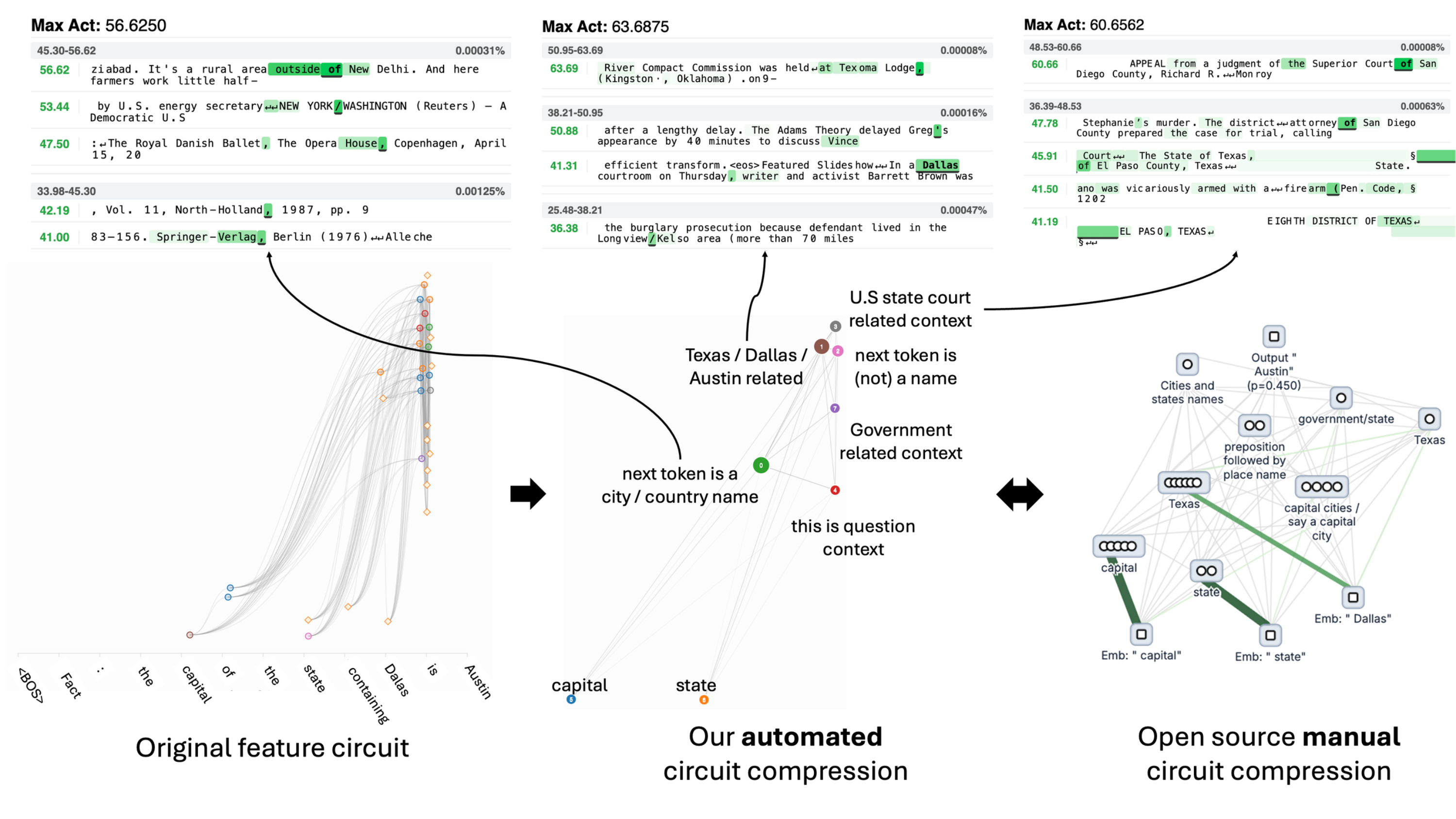}
    \caption{This figures compares compares our automated circuit compression and open source manual compression \cite{neuronpedia} on ``Dallas" circuit. Our method correctly finds supernodes as the open source.}
    \label{fig:dallas_circuit}
\end{figure}

\begin{figure}
    \centering
    \includegraphics[width=1\linewidth]{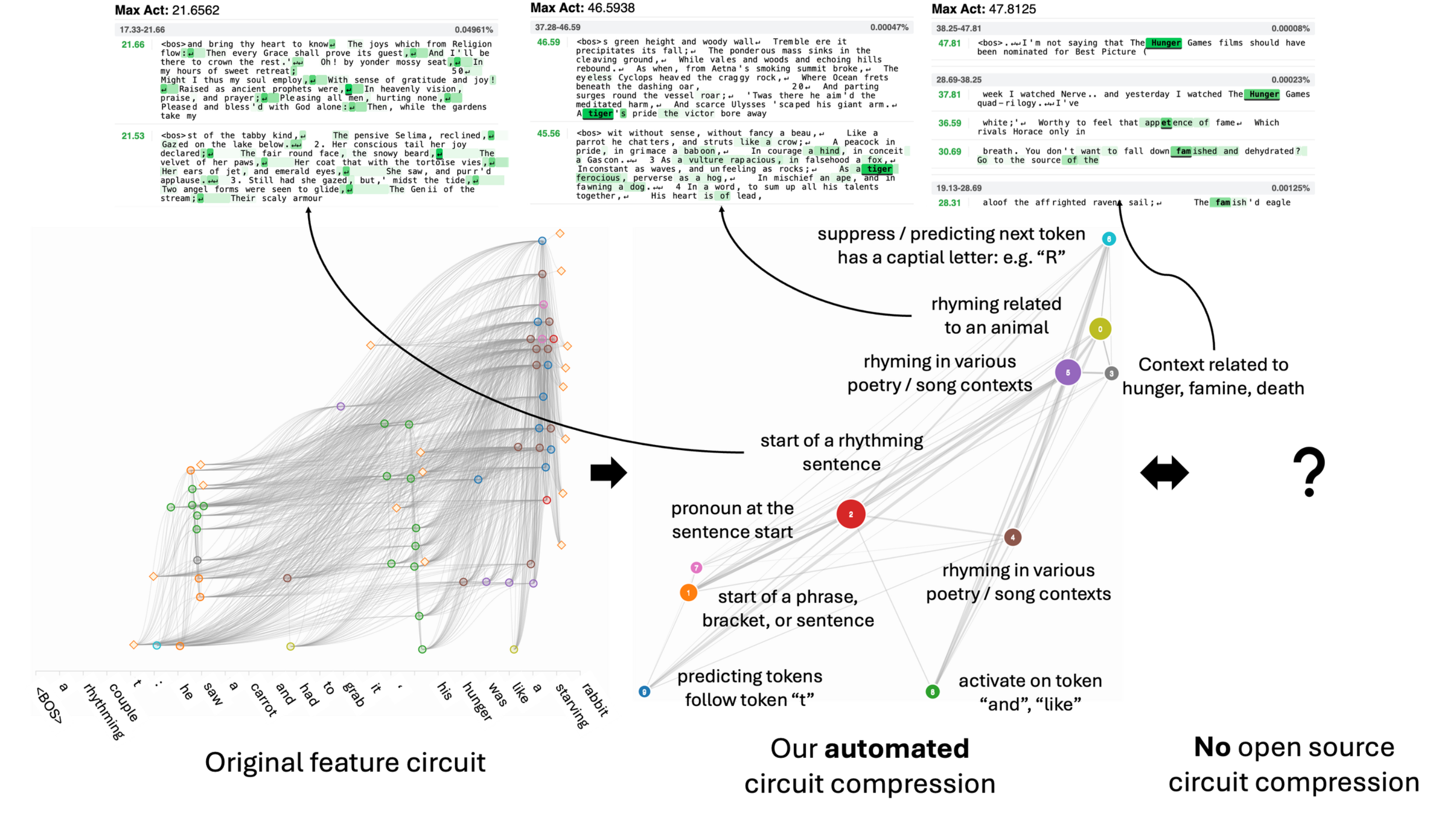}
    \caption{This figures compares compares our automated circuit compression and open source manual compression \cite{neuronpedia} on ``Rhythm" circuit. Our method can scale up while open source implementation cannot.}
    \label{fig:rhythm_circuit}
\end{figure}

\begin{figure}
    \centering
    \includegraphics[width=1\linewidth]{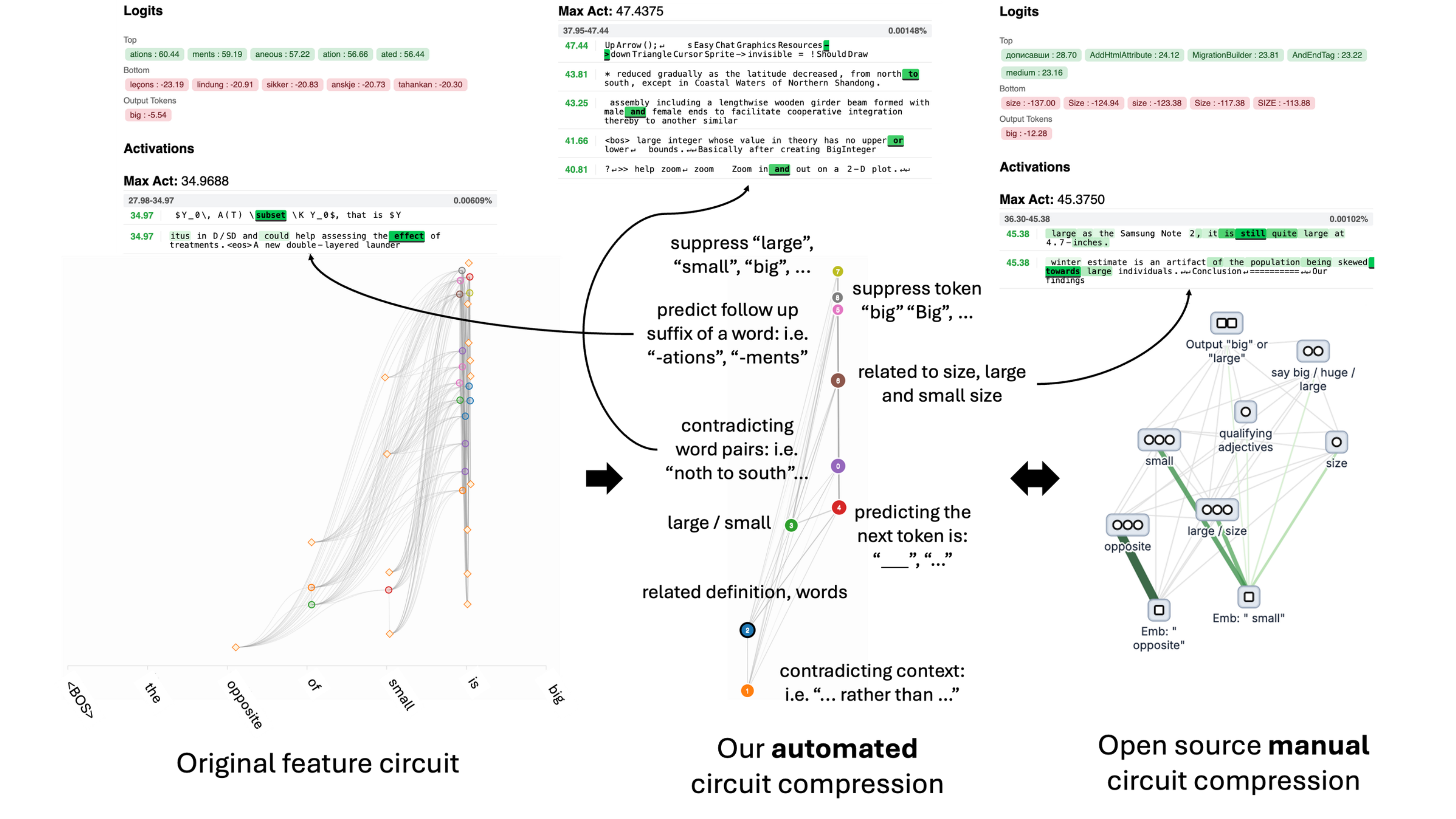}
    \caption{This figures compares compares our automated circuit compression and open source manual compression \cite{neuronpedia} on ``Opposite" circuit. Our method finds supernodes that are not in open source implementation such as ``predict next token is '$\_\_\_$' or '\dots'.}
    \label{fig:big_small_circuit}
\end{figure}

\begin{figure}
    \centering
    \includegraphics[width=1\linewidth]{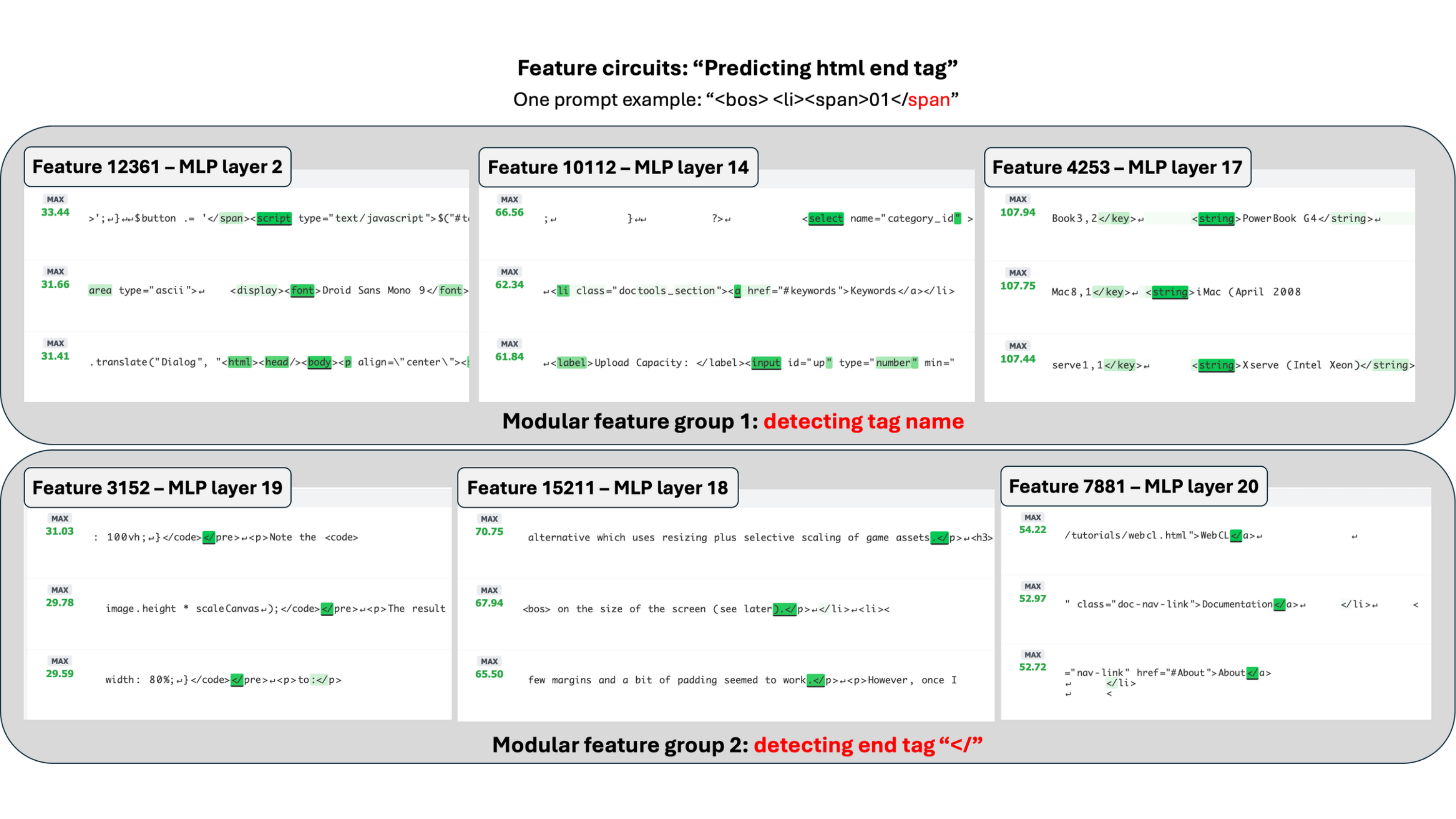}
    \caption{Two modular circuits identified form ``predicting html end tags" circuits. We find two distinct modular function among the features, namely detecting tag names and detecting end tags.}
    \label{fig:html_tag_modular}
\end{figure}

\begin{figure}
    \centering
    \includegraphics[width=1\linewidth]{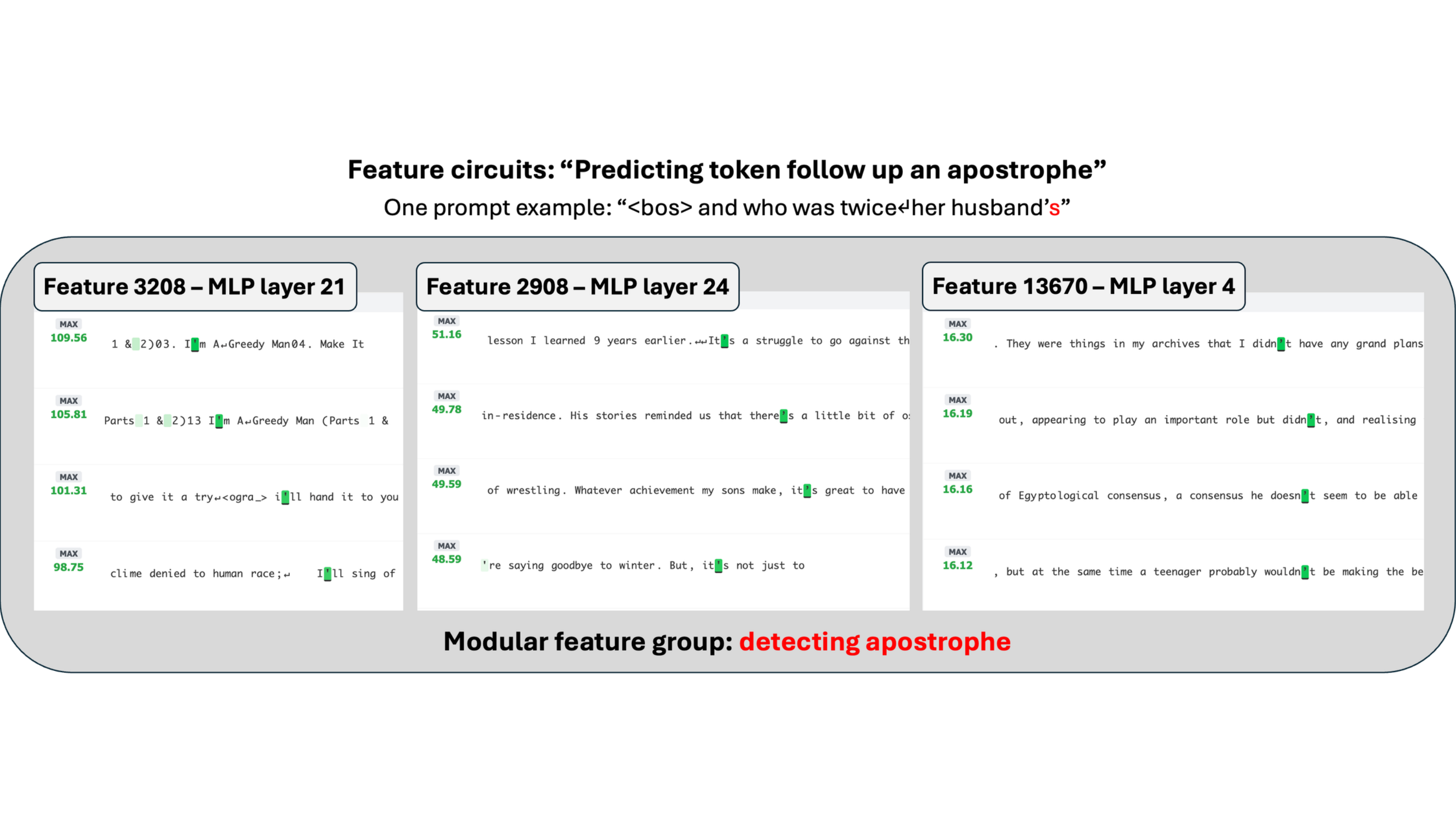}
    \caption{Two modular circuits identified form ``predict token follow up an apostrophe" circuits. We find a group of features consistently detecting apostrophes.}
    \label{fig:apostrophe_modular}
\end{figure}

\section{Limitations}
\label{sec:limitation}
A primary limitation of our current framework lies in the choice of the ground cost $c(\cdot, \cdot)$ used in our Optimal Transport formulation. Currently, we compute the Wasserstein distance using the standard $L_2$ distance in the shared reference space. This implicitly assumes a flat, Euclidean geometry, which may not fully capture the complex, non-linear manifold structure of language model representations. A promising direction for future work is to replace the $L_2$ metric with a more general, geometry-aware ground cost. For instance, leveraging the velocity field derived from flow matching models could define a principled, geodesic-based transport cost, allowing the feature distributions to be aligned along the true continuous dynamics of the latent space.

\section{Related Works}
\label{sec:related_works_extend}
\textbf{Sparse Autoencoder:} SAE is proposed to extract interpretable features from a latent space of one layer of a deep model \cite{jumprelu, topk, batchtopk, cunningham2023sparse}. It shows to be effective in many applications, including model steering \cite{sae_steering}, model diffing \cite{diffing}, study hallucination \cite{do_i_know_this_entity}, study reasoning \cite{interpreting_reasoning_feature} etc. 

\textbf{Feature Matching:} To understands how features evolve across layers and decision making of a model, recent works come up with feature matching \cite{mech_permute, featflow}. The feature matching works aim to find a mapping between SAE features at one layer and features of a SAE at another layer so that the mapped features have similar meaning with the target feature. \citet{mech_permute} formulate the mapping as a 1-to-1 linear assignment problem that minimizes the mean squared error between target feature's and mapped feature's decoder vectors. \cite{featflow} further propose mapping with a feature that induces the highest cosine similarity between decoder vectors. While these methods can be effective in matching feature from consecutive layers where the decoder vectors are trained on more similar latent space distributions, in matching across multi-layer, their methods are less accurate \cite{mech_permute} due to the distribution difference.

\textbf{Feature Circuit and Circuit Compression:} Feature Circuit \cite{feature_circuit, transcoder_circuit, on_biology} is a causal graph with nodes as features and edges show causal relation with a sink feature node, thereby revealing how features interact that affect the decision of a model. \cite{feature_circuit} construct the circuit by first filter out important features to model prediction via attribution patching \cite{attrib}. For each filtered features, they find the edge weights by approximate the causal relations between the remaining features with attribution patching, only the strong connections are retained. \cite{transcoder_circuit} propose Transcoder, a variant of SAE, to approximate the language model as a linear interaction between features that is input independent, which aids the understanding feature interaction generalized across any input. Based on their work, \cite{on_biology} identify various interesting circuits and feature interations. The main downside of circuits is the complexity of manually interpreting hundreds of features and thousands of connections. Researchers have to group circuits' nodes that share the same meaning into supernodes, helping to reveal high level mechanisms of the model that are often drown by the circuit complexity. However, all of this effort is done manually \cite{on_biology, neuronpedia}, hence, unscalable to large circuits. 

\textbf{Optimal Transport (OT):}
OT provides a geometry for comparing probability distributions by measuring the minimum cost of moving mass between them \cite{halmoshierarchical}. It has been widely used in machine learning when the object of interest is more naturally represented as a distribution than as a single vector, including domain adaptation \cite{naram2025theoretical}, representation learning \cite{liu2025hgot, lv2024wasserstein}, dataset comparison \cite{nguyen2025lightspeed, bonet2025flowing}, graph matching \cite{ratnayakalearning}, shape correspondence \cite{le2024integrating}, and NLP similarity \cite{kishino2025quantifying, truong2025emo}. These works show that OT is useful when the geometry of the support matters and pointwise vector comparison fails to capture important distributional structure.

Most OT-based alignment methods compare distributions over external objects, such as samples from different domains, graphs, shapes, documents, or embedding spaces. In our setting, the object being compared is an internal SAE feature of language model. Rather than treating a feature as a decoder vector, we build a distribution from the contexts on which the feature activates and the relative strength of those activations. These activation-induced distributions are then projected into a shared reference space, allowing features from different layers to be compared in a common geometry. This makes Wasserstein distance a semantic distance between feature activation patterns, and supports both cross-layer feature matching and automatic grouping of circuit nodes into semantic supernodes.

\section{Experiment details}
\label{sec:exp_details}
In this section we detail feature matching, circuit compression, the baseline implementations, our experiment setups, SAE training, and LLM prompts.

\subsection{Algorithm for Feature Matching and Circuit Compression}
\label{sec:main_algo}
\textbf{Feature Matching:} In algorithm \ref{alg:matching}, we compute the distance for feature matching by first project the distributions of both target and source features set on the reference space. We then compute the centroids of the distribution on the space. The top-50 source features that are the closest to the target feature are retained, we discard the rest. This step is crucial to speed up the algorithm by not having to compute most of faraway source feature distributions. We then compute the top-1 feature using Wasserstein distance with Sinkhorn algorithm \cite{sinkhorn}.

\begin{algorithm}[H]
    \caption{Feature Matching Algorithm.}
    \label{alg:matching}
    \textbf{Input}: Target feature $i \in \mathcal{F}^{(A)}$, source features $ \mathcal{F}^{(B)}$, and empirical distributions $\widehat\mu_{i,T}^{(A)}, \: i \in \mathcal{F}^{(A)}$ and $\widehat\mu_{j,T}^{(B)}, \: \forall j \in \mathcal{F}^{(B)}$.
    \begin{algorithmic}[1]
        \STATE Compute reference space projected distributions $\widehat\mu_{i,T}^{(A \rightarrow \mathcal{R})}, \: i \in \mathcal{F}^{(A)}$ and $\widehat\mu_{j,T}^{(B\rightarrow \mathcal{R})}, \: \forall j \in \mathcal{F}^{(B)}$.
        \STATE Compute feature distributions' centroids $\bar\mu_{i,T}^{(A \rightarrow \mathcal{R})} = \sum_{t\in\mathcal{I}_{i, T}^{(A), K}}\widehat w_{i, t}^{(A)}z_t^{(A)}, \: i \in \mathcal{F}^{(A)}$ and $\bar\mu_{j, T}^{(B\rightarrow \mathcal{R})} = \sum_{t\in\mathcal{I}_{j, T}^{(B), K}}\widehat w_{j, t}^{(B)}z_t^{(B)}, \: \forall j \in \mathcal{F}^{(B)}$.
        \STATE Extract top-50 source features that have the closest centroids to target feature centroid $\mathrm{Top50}_{j\in\mathcal{F}^{(B)}}\left(-c\left(\bar\mu_{j,T}^{(B \rightarrow \mathcal{R})}, \bar\mu_{i,T}^{(A \rightarrow \mathcal{R})}\right)\right)$ \hfill $\triangleright$/ \textit{We do this to filter out faraway distributions, speeding up the algorithm. The choice of top-50 is arbitrary.}
        \STATE Compute the Wasserstein distance between target feature projected distribution $\widehat\mu_{i,T}^{(A \rightarrow \mathcal{R})}, \: i \in \mathcal{F}^{(A)}$ and source features projected distributions within the top-50 $\widehat\mu_{p,T}^{(B\rightarrow \mathcal{R})}, \: \forall p \in \mathrm{Top50}_{j\in\mathcal{F}^{(B)}}\left(-c\left(\bar\mu_{j,T}^{(B \rightarrow \mathcal{R})}, \bar\mu_{i,T}^{(A \rightarrow \mathcal{R})}\right)\right)$ via Sinkhorn algorithm \cite{sinkhorn}.
        \STATE $\widehat M_{A\to B}(i)$ is the closest source feature.
    \end{algorithmic}
    \textbf{Return}: $\widehat M_{A\to B}(i)$.
\end{algorithm}

\textbf{Circuit Compression:} We detail our circuit compression as follows: in algorithm \ref{alg:compression}, we first project the distribution onto reference space $\mathcal{R}$. We then compute the Wasserstein distance between any node pairs in the circuit using Sinkhorn \cite{sinkhorn}. Lastly, we apply Agglomerative clustering \cite{clustering} to find the supernodes.

\begin{algorithm}[H]
    \caption{Circuit Compression Algorithm.}
    \label{alg:compression}
    \textbf{Input}: Circuit $G=(V,E)$, number of supernodes $m \leq |V|$, and empirical distributions $\widehat\mu_{i,T}^{(\ell)}, \: (i, \ell) \in V$.
    \begin{algorithmic}[1]
        \STATE Compute reference space projected distributions $\widehat\mu_{i,T}^{(\ell\rightarrow \mathcal{R})}, \: (i, \ell) \in V$
        \STATE Compute Wasserstein distance via Sinkhorn \cite{sinkhorn} between any two pairs $u, v \in V$ resulting in $\widehat{\mathcal{D}}_{\mathcal{R}}(u,v), \:\forall u,v\in V, u\neq v$.
        \STATE Apply Agglomerate clustering \cite{clustering} to find supernodes set: $\mathbb{C}=\{\mathcal{C}_1, \dots, \mathcal{C}_m\}$.
    \end{algorithmic}
    \textbf{Return}: $\mathbb{C}$.
\end{algorithm}

\subsection{Baseline Implementations}
\label{sec:baseline_implementation}

We compute the distance matrix $\widehat{D}$ for each baseline (except for LLM Selection) as follows:

\textbf{SAE Match \cite{mech_permute}.} This method computes the feature distance via the $L_2$ norm of the threshold-normalized decoder vectors. We define threshold $a_{\min, i}^{(A)}$ as the minimum positive activation of feature $i$ across a corpus of $T$ tokens. We then compute the threshold-normalized decoder vector \cite{mech_permute} $\widetilde{W}_{dec}^{(A)}[:, i]$ by scaling the raw decoder direction by this minimum activation:
\begin{equation}
    \widetilde{W}_{dec}^{(A)}[:, i] = a_{\min, i}^{(A)} W_{dec}^{(A)}[:, i].
\end{equation}
The entries of the distance matrix are given by:
\begin{equation}
    \left(\widehat{D}_{\text{Match}}\right)_{i,j} = \left\| \widetilde{W}_{dec}^{(A)}[:, i] - \widetilde{W}_{dec}^{(B)}[:, j] \right\|_2
\end{equation}

\textbf{Feature Flow \cite{featflow}.} This method relies on the cosine similarity of the decoder vectors. Because cosine similarity is invariant to positive scalar multiplication, we can compute it using the raw decoder vectors. To convert this similarity into a valid distance metric for nearest-neighbor matching, we compute:
\begin{equation}
    \left(\widehat{D}_{\text{FeatFlow}}\right)_{i,j} = 1 - \frac{\left\langle W_{dec}^{(A)}[:, i], W_{dec}^{(B)}[:, j] \right\rangle}{\left\| W_{dec}^{(A)}[:, i] \right\|_2 \left\| W_{dec}^{(B)}[:, j] \right\|_2}
\end{equation}

\textbf{Attribution Patching \cite{attrib, feature_circuit}.} We consider method that find important features via gradient. For two feature $i \in \mathcal{F}^{(A)}$ and $j \in \mathcal{F}^{(B)}$ and $A > B$, feature $i$ has the gradient over $j$. We approximate the contribution of feature $a_j^{(B)}$ to feature $a_i^{(A)}$ via gradient over a dataset of tokens. The higher the contribution, the more important feature $j$ is to $i$. This method is used to find relevant features from previous layers \cite{feature_circuit}. Concretely, we define the distance between the two features as:
\begin{equation}
    \left(\widehat{D}_{\text{Attrib}}\right)_{i,j} = \frac{1}{T}\sum^T_{t=1}\nabla_{a_j^{(B)}} a_i^{(A)}(x^{(A)}_t) \times a_j^{(B)}(x^{(B)}_t)
\end{equation}

\textbf{LLM Selection.} Given a set of source features $\mathcal{F}^{(B)}$, we prompt the LLM to caption for each feature using LLM Captioning Prompt Appendix \ref{sec:prompt}. For each target feature $i \in \mathcal{F}^{(A)}$, we ask the LLM, from all captions of feature set $\mathcal{F}^{(B)}$, to pick the caption that most resembles the activation of feature $i$, using LLM Selection Prompt Appendix \ref{sec:prompt}.

\subsection{Feature Matching Setups}
\label{sec:setup_feature_matching}
We randomly sample 100 pairs shared across all baselines where the features have at least 30 non-zero activation over 128M tokens to avoid dead features. We evaluate three main setups: consecutive layers, middle to last layers, first to last layers, shown in Table \ref{tab:feature_matching_setup}.

\begin{table}[h]
    \caption{Layer setups in feature matching experiment.}
    \centering
    \begin{tabular}{|l|c|c|c|c|}
        \toprule
        \textbf{Models} &  \multicolumn{2}{c|}{\textbf{GPT2}} &  \multicolumn{2}{c|}{\textbf{Gemma-2-2b}} \\
        \midrule
        & Target layer & Source layer & Target layer & Source Layer \\
        \midrule
        Consecutive & 6 & 5 & 13 & 12 \\
        Middle to last & 11 & 5 & 25 & 12 \\
        First to last & 11 & 0 & 25 & 0 \\
        \bottomrule
    \end{tabular}
    \label{tab:feature_matching_setup}
\end{table}

\subsection{Circuit Compression Setups}
\label{sec:circuit_compression_setup}
\textbf{Circuit construction and compressing:} We follow \cite{feature_circuit} to compute feature circuits, the node scores are approximated using attribution patching instead of integrated gradient to speed up the running process. We gather prompts by chunking the validation set of The PILE dataset \cite{thepile} into 1000 sequences, each with 128 tokens. We extract the token sequences that the model correctly predicts the next tokens with Cross Entropy loss less than 0.1 nats. We then randomly sample 200 prompt examples shared across all baselines and compute feature circuits with 50 nodes. For each method except for LLM Sel, we compute the distance matrix between each feature pair, then use agglomerate clustering \cite{clustering} to form 5 supernodes. For LLM Sel, we show all the activations of features in the circuit and ask the model to caption each feature (using LLM Captioning Prompt Appendix \ref{sec:prompt}), we then ask the LLM to cluster the features that have similar caption into 5 supernodes (using LLM Selection Prompt Appendix \ref{sec:prompt}). 

\textbf{Evaluation:} To evaluate the compression, for each supernode, we extract two equal sets of features. The first set is used to prompt a judge LLM - we use Llama-3.3-70b-instruct \cite{meta_llama}, where we ask the judge LLM to caption all features of the first set and give the caption for the supernode. We then use the second set, randomly mixed with equal number of features that are not in the supernode, and ask the judge to classify which feature is in the original supernode. The better the prediction, the more intuitive the compression. In our experiment, we extract 6 random features from the supernode, splitted into two sets, and 3 random features not from the supernode. We skip supernodes that do not have at least 2 features. 

\subsection{Digit Addition Experiment Setups}
\label{sec:digit_addition_setup}
\textbf{Dataset creation:} We use ``two number addition" feature \cite{on_biology} in the format $p_1+p_2=$. We limit $p_1, p_2 \in [0,9]$ for a less costly experiment to run LLM Selection method. We extract the features' activation on the ``=" token over 100 datapoints ($[0,9]\times[0,9]$) and plot the activation pattern of each feature. We filter features that have the number of positive activation in the range of [5, 50] to avoid overly dense or dead features. 

\textbf{Setup:} We conduct our experiment on layer 19 MLP on Gemma-2-2b. For each target feature in the filtered set, we run the matching baselines to find the most similar source feature within the same filtered set that is not the target. Note that, since this experiment is conducted on a single layer, we cannot apply Attribution Patching \cite{attrib} for this setup. We then create blind ranking API for all of the methods and let 10 volunteers to rate over 100 random target features.

\textbf{Blind ranking API:} The API example is in Figure \ref{fig:blind_ranking}. There are two choices, ``valid" means that the source feature has the same meaning with the target feature, and ``best" means that the source feature is the best selection among all blind ranking matches. 

\begin{figure}
    \centering
    \includegraphics[width=0.8\linewidth]{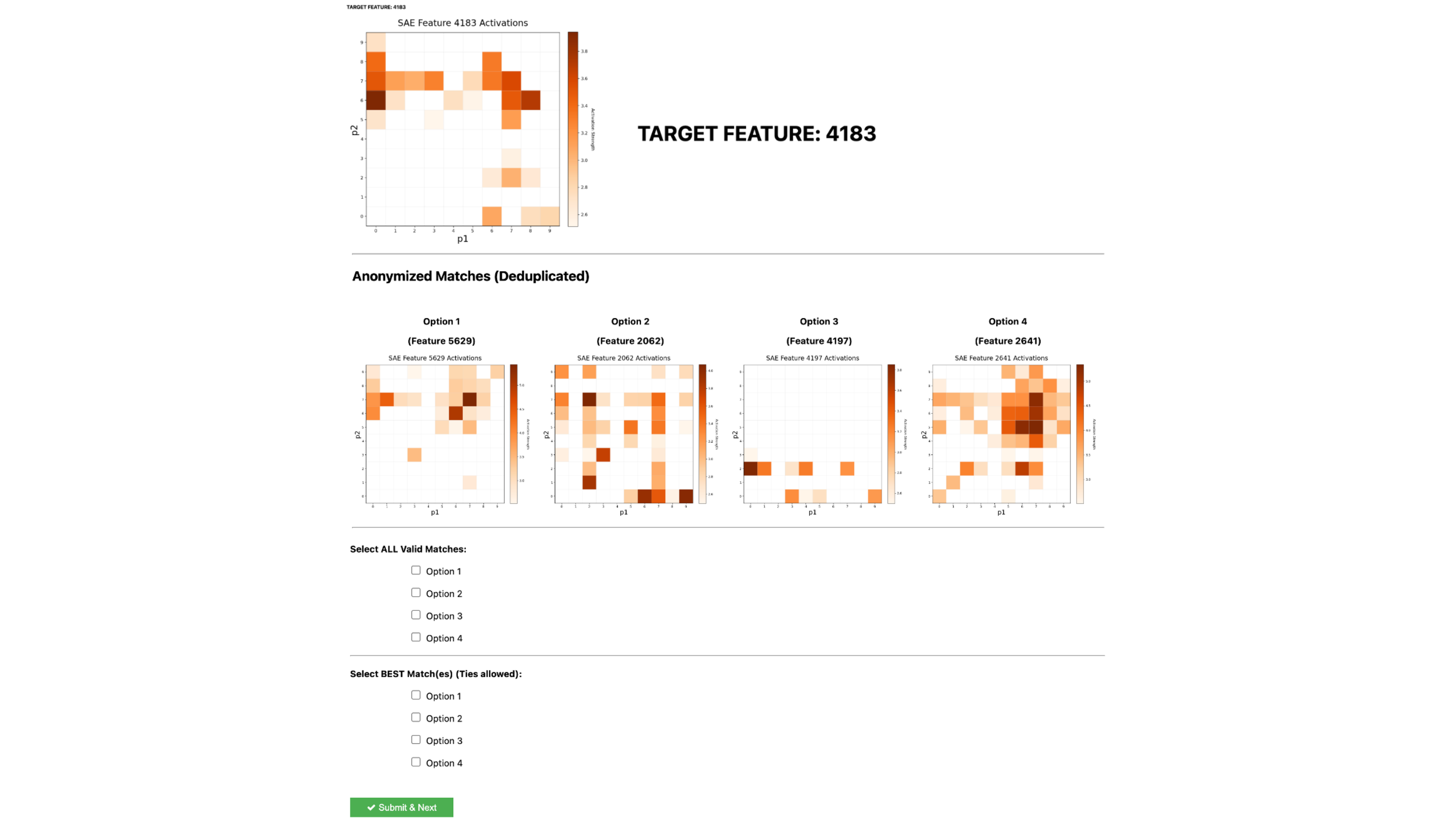}
    \caption{Blind ranking API for ``Digit Addition" experiment (Section \ref{sec:digit_addition}). There are two options of ``Valid" and ``Best" for a valid matching and best overall pick respectively. The volunteers can choose multiple choices or skip if no feature candidate matches.}
    \label{fig:blind_ranking}
\end{figure}

\subsection{Automated Circuit Compression Setups}
\label{sec:automated_circuit_compress_setup}
We use three prompts shown on Neuropedia for Gemma-2-2b, namely: \textit{``<BOS>Fact: the capital of the state containing Dallas is Austin"}, \textit{``<BOS>A rhyming couplet: he saw a carrot and had to grab it, his hunger was like a starving rabbit"}, and \textit{``<BOS>The opposite of small is large"}. Following the use of corrupt prompt \cite{feature_circuit}, we use corrupt prompt of \textit{``Fact: the city of the country containing Paris is"} and not use corrupt prompt for the other two circuits (this means the corrupt activation of each feature is zero \cite{feature_circuit}). We compute the circuit using attribution patching to find important nodes, the number of nodes and supernodes are in Table \ref{tab:circuit_setup}.
\begin{table}[h]
    \caption{Exemplar circuit setups for Section \ref{application:circuit_compression}.}
    \centering
    \begin{tabular}{|l|c|c|c|c|}
        \toprule
        \textbf{Models} &  \multicolumn{2}{c|}{\textbf{Gemma-2-2b}} \\
        \midrule
        & Nodes & Supernodes \\
        \midrule
        ``Dallas" circuit & 50 & 7 \\
        ``Rhythm" circuit & 100 & 10\\
        ``Opposite" circuit & 40 & 9 \\
        \bottomrule
    \end{tabular}
    \label{tab:circuit_setup}
\end{table}

\subsection{Finding Modular Feature Group Setups}
\label{sec:modular_setup}
\textbf{Behavior clustering:} The following procedure follows \cite{feature_circuit}. We gather prompts by chunking the validation set of The PILE dataset \cite{thepile} into 1000 sequences, each with 128 tokens. We extract the token sequences that the model correctly predicts the next tokens with Cross Entropy loss less than 0.1 nats. For each prompt, we extract the last token representation (which is the token with low Cross Entropy loss) of every layers and concatenate into one vector. We then use these vectors to cluster using K-means clustering. We identify around 18000 prompt examples that are clustered into 500 groups.

\textbf{Finding Modular Feature:} We select a few interesting groups such as in Figure \ref{fig:html_tag_modular}, \ref{fig:incrementing_seq_modular}, and \ref{fig:apostrophe_modular}. For each prompt example in a group, we then find a circuit (using method in \cite{feature_circuit} with attribution patching to select important nodes) with 200 nodes and compress it down to 10 supernodes. For each node pairs $u, v$ in the circuit, we track the following Jaccard similarity: $\frac{number\_of\_co\_occurence}{number\_of\_u \: + \: number\_of\_v \: - \: number\_of\_co\_occurence}$ where $number\_of\_co\_occurence$ counts when $u$ and $v$ are in the same supernode, and $number\_of\_u$ tracks the number of $u$ appears in the circuit group. After this, we form an ``edge" matrix between the all feature node pairs, and we apply Spectral clustering to find the modular group. Optionally, we can filter out ``edge" between feature pairs that have low co-occurence rate before clustering to avoid unimportant pairs. 

\subsection{SAE Training}
\label{sec:sae_training}
We train TopK SAE \cite{topk} on The Pile training set for 500M tokens on GPT2 at ``pre-residual-stream" (before entering transformer blocks) of every layers. The learning rate is $10^{-4}$, batch size of 2048 tokens, and auxiliary loss coefficient is 0.03125 and use Adam Optimizer. We run on A100 GPU.

\subsection{LLM Prompt}
\label{sec:prompt}
\textbf{LLM Compressing Prompt:} 
{\itshape
"""

You are an expert AI interpretability researcher. Your task is to analyze a list of Sparse Autoencoder (SAE) features extracted from a transformer language model and group them into semantically meaningful clusters.

DATA PROVIDED:

- Feature ID: Unique identifier for the feature.

- Act Context: Shows text intervals where the feature fired (activating tokens). THIS IS THE GOLD STANDARD for interpretation. Look for patterns (e.g., tracking plural nouns, specific concepts, or syntax).

- Logits: Shows tokens the feature promotes or suppresses. Use this to confirm the feature's meaning.

YOUR TASK:

1. Analyze the "Act Context" and "Logits" for every feature provided.

2. Write a short, general 1-sentence caption explaining what each feature detects or represents.

3. Group features that track similar concepts, syntactical roles, or semantic meanings into the same cluster. 

4. Output the result STRICTLY as a JSON object matching the schema below. Do not include any markdown formatting outside the JSON block.

CRITICAL RULES:

- EVERY feature ID provided in the prompt MUST be assigned to exactly one cluster. Do not leave any feature behind.

- You MUST group the features into EXACTLY {n$\_$clusters} clusters. Number the cluster$\_$id from 0 to \{n$\_$clusters - 1\}.

- Each cluster has AT LEAST one feature.
"""
}

\textbf{LLM Captioning Prompt:}
{\itshape
"""

You are an AI interpretability assistant analyzing neural network features. You must respond in valid JSON.

Here are the highest activating text snippets for Feature \{feat$\_$idx\}:\{context$\_$str$\_$joined\}

Describe what concept, pattern, or semantic meaning this feature activates on in EXACTLY ONE SHORT SENTENCE. 

Do not add any conversational filler.

Output your response strictly as a JSON object with a single key 'caption'.
"""
}

\textbf{LLM Selection Prompt:}
{\itshape
"""

You are an AI interpretability assistant analyzing neural network features. You must respond in valid JSON.

You are given a full catalog of features and their single-sentence descriptions.

FULL FEATURE CATALOG:\{full$\_$catalog$\_$str\}

Your task is to find the top \{k\} most semantically related features for the following target features.

TARGET FEATURES TO MATCH: \{target$\_$str\}

RULES:

1. Return EXACTLY \{k\} matches for each target feature.

2. DO NOT match a target feature to itself.

3. Output your response STRICTLY as a JSON object where keys are the target feature IDs (as strings) and values are lists of \{k\} integer feature IDs.
"""
}

\section{Ablation Study: Why do we need feature as distribution?}
\label{sec:feat_as_dist}

In this section, we run ablation experiments showing that representing feature as distribution preserves more semantic information than using one vector. We implement the following baseline, which first project the activation hidden states to the reference space but then average the distribution into one singular vector. By comparing with this baseline, we will show that our method surpass representing via one vector, even when projecting onto the same manifold:

\textbf{Naive Manifold Projection (Naive).} This baseline projects features into the shared reference space $\mathcal{R}$ but ignores the full distributional geometry. Instead of computing the Wasserstein transport plan, it collapses each projected feature distribution into a single activation-weighted centroid. For a source feature $j \in \mathcal{F}^{(B)}$, the centroid is defined as the expected position of its activated contexts in the reference space:
\begin{equation}
    \bar{z}_j^{(B)} = \sum_{t \in \{t, \: a_{j, t}^{(B)} \geq 0\}} \widehat{w}_{j,t}^{(B)} z_t^{(\mathcal{R})}
\end{equation}
We similarly define the centroid $\bar{z}_i^{(A)}$ for a target feature $i \in \mathcal{F}^{(A)}$. The distance matrix is then computed by evaluating the ground cost $c(\cdot, \cdot)$ strictly between these centroids, discarding the multi-modal activation structure:
\begin{equation}
    \left(\widehat{D}_{\text{Naive}}\right)_{i,j} = c\left(\bar{z}_i^{(A)}, \bar{z}_j^{(B)}\right)
\end{equation}

We run the same experiments as in Section \ref{exps: feature matching}, \ref{sec:circuit_compression}, \ref{sec:digit_addition}; the results are in Table \ref{tab:feature_matching_compare_naive}, Table \ref{tab:circuit_compression_compare_naive}, and Figure \ref{fig:digit_compare_naive} respectively. We find that for feature matching, our method have stronger performance on the majority of the setups. Similarly, for circuit compression and digit addition, our method is better than naive method. This shows that represent features as distributions preserve more semantic information than representing with one vector, even in the setup of projecting to the same manifold.

\begin{table}[h]
    \footnotesize
    \centering
    \vspace{-2mm}
    \caption{Feature matching experiment results when comparing with Naive method.}
    \begin{tabular}{l|c|c|c|c|c|c|c|c}
        \toprule
        & \multicolumn{4}{c|}{\textbf{GPT2}} & \multicolumn{4}{c}{\textbf{Gemma-2-2B}} \\
        \textbf{Method} & \textbf{LLM Eval} $\uparrow$ & \textbf{Acc} $\uparrow$ & \textbf{CE} $\downarrow$ & \textbf{VE} $\uparrow$ & \textbf{LLM Eval} $\uparrow$ & \textbf{Acc} $\uparrow$ & \textbf{CE} $\downarrow$ & \textbf{VE} $\uparrow$ \\
        \midrule
        \multicolumn{1}{c|}{} & \multicolumn{4}{c|}{\textbf{Layer 5 match with Layer 6}} & \multicolumn{4}{c}{\textbf{Layer 12 match with Layer 13}} \\
        \midrule
        \textbf{Our} & \textbf{2.53} $\pm$ \textbf{0.04} & \textbf{68.2} $\pm$ \textbf{1.9} & \textbf{3.775} & \textbf{0.9866} & \textbf{2.32} $\pm$ \textbf{0.04} & \textbf{58.6} $\pm$ \textbf{2.0} & \textbf{2.74784} & \textbf{0.7196} \\
        Our Naive & 2.51 $\pm$ 0.04 & 66.4 $\pm$ 2.3 & 3.777 & 0.9865 & 2.22 $\pm$ 0.04 & 55.2 $\pm$ 2.1 & 2.74799 & 0.7183 \\
        \midrule
        \multicolumn{1}{c|}{} & \multicolumn{4}{c|}{\textbf{Layer 5 match with Layer 11}} & \multicolumn{4}{c}{\textbf{Layer 12 match with Layer 25}} \\
        \midrule
        \textbf{Our} & \textbf{1.56} $\pm$ \textbf{0.04} & \textbf{23.2} $\pm$ \textbf{2.2} & \textbf{3.806} & \textbf{0.9127} & 1.83 $\pm$ 0.04 & 33.6 $\pm$ 2.9 & 2.74809 & \textbf{0.9115} \\
        Our Naive & \textbf{1.56} $\pm$ \textbf{0.04} & 22.4 $\pm$ 2.3 & 3.822 & 0.9117 & \textbf{1.94} $\pm$ \textbf{0.02} & \textbf{39.6} $\pm$ \textbf{0.7} & \textbf{2.74806} & \textbf{0.9115} \\
        \midrule
        \multicolumn{1}{c|}{} & \multicolumn{4}{c|}{\textbf{Layer 0 match with Layer 11}} & \multicolumn{4}{c}{\textbf{Layer 0 match with Layer 25}} \\
        \midrule
        \textbf{Our} & 1.39 $\pm$ 0.03 & 17.2 $\pm$ 2.3 & 3.828 & 0.9111 & \textbf{1.83} $\pm$ \textbf{0.05} & \textbf{34.8} $\pm$ \textbf{2.1} & 2.74866 & \textbf{0.9094} \\
        Our Naive & \textbf{1.48} $\pm$ \textbf{0.01} & \textbf{19.6} $\pm$ \textbf{1.1} & \textbf{3.820} & \textbf{0.9116} & 1.78 $\pm$ 0.03 & 32.6 $\pm$ 1.6 & \textbf{2.74862} & \textbf{0.9094} \\
        \bottomrule
    \end{tabular}
    \vspace{-3mm}
    \label{tab:feature_matching_compare_naive}
\end{table}

\begin{table}
    \centering
    \caption{Circuit compression result comparing with Naive method.}
    \begin{tabular}{l|c|c}
        \toprule
        \textbf{Method} & \textbf{Gemma-2-2B} & \textbf{GPT2-small} \\
        \midrule
        \textbf{Our} & \textbf{0.6151} $\pm$ \textbf{0.0153} & \textbf{0.6812} $\pm$ \textbf{0.0173}\\
        Our Naive & 0.5950 $\pm$ 0.0128 & 0.6421 $\pm$ 0.0177\\
        \bottomrule
    \end{tabular}
    \label{tab:circuit_compression_compare_naive}
\end{table}

\begin{figure}
    \centering
    \includegraphics[width=0.6\linewidth]{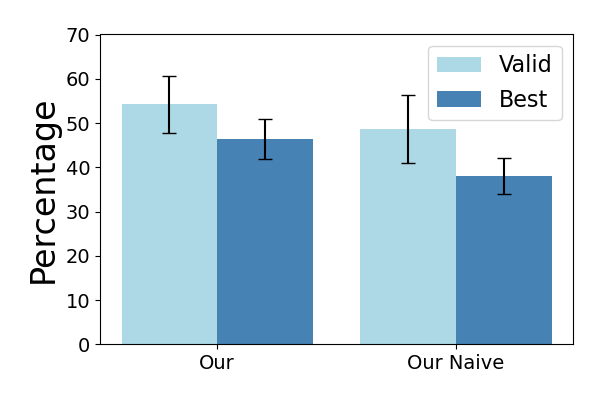}
    \caption{Digit addition comparing with Naive method. ``Valid" means the user agree with the matching, and ``Best" means the method has the best match among all methods.}
    \label{fig:digit_compare_naive}
\end{figure}

\section{Additional Theoretical Properties}
\label{sec:additional_theory}

\subsection{Activation Rescaling Invariance of Projected Feature Distributions}
\label{app:activation_rescaling_invariance}

\noindent
\textbf{Proposition~\ref{prop:activation_rescaling_invariance}
(Activation rescaling invariance of projected feature distributions).}
\emph{
Let $i\in\mathcal{F}^{(A)}$ be a target feature with activation function
$a_i^{(A)}:\mathcal{X}^{(A)}\to\mathbb{R}_{\ge 0}$. For any $\alpha>0$, define
$\widetilde a_i^{(A)}(x)=\alpha a_i^{(A)}(x)$. Under the deterministic
$\operatorname{TopK}$ convention, the projected empirical distribution induced
by $\widetilde a_i^{(A)}$ is identical to the one induced by $a_i^{(A)}$:
$\widetilde{\mu}_{i,T}^{(A\to\mathcal R)}
=
\widehat{\mu}_{i,T}^{(A\to\mathcal R)}$.
Consequently, for any source feature $j\in\mathcal{F}^{(B)}$ and any ground cost
$c:\mathcal R\times\mathcal R\to\mathbb{R}_{\ge 0}$,
$W_c(
\widetilde{\mu}_{i,T}^{(A\to\mathcal R)},
\widehat{\nu}_{j,T}^{(B\to\mathcal R)})
=
W_c(
\widehat{\mu}_{i,T}^{(A\to\mathcal R)},
\widehat{\nu}_{j,T}^{(B\to\mathcal R)})$.
}

\begin{proof}
Let $\mathcal I=\mathcal{I}_{i,T}^{(A),K}$ denote the top-$K$ index set selected
by the original activation values $a_i^{(A)}(x_t^{(A)})$, and let
$\widetilde{\mathcal I}=\widetilde{\mathcal{I}}_{i,T}^{(A),K}$ denote the
top-$K$ index set selected by the rescaled activation values
$\widetilde a_i^{(A)}(x_t^{(A)})=\alpha a_i^{(A)}(x_t^{(A)})$. We first show
that the selected indices are unchanged. For any token positions $p,q\in[T]$,
because $\alpha>0$,
\[
\begin{aligned}
a_i^{(A)}(x_p^{(A)})\ge a_i^{(A)}(x_q^{(A)})
&\Longleftrightarrow
\alpha a_i^{(A)}(x_p^{(A)})\ge \alpha a_i^{(A)}(x_q^{(A)})\\
&\Longleftrightarrow
\widetilde a_i^{(A)}(x_p^{(A)})\ge
\widetilde a_i^{(A)}(x_q^{(A)}).
\end{aligned}
\]
Thus every pairwise ordering relation among token activations is preserved.
Moreover, equality is also preserved:
\[
a_i^{(A)}(x_p^{(A)})=a_i^{(A)}(x_q^{(A)})
\Longleftrightarrow
\widetilde a_i^{(A)}(x_p^{(A)})=
\widetilde a_i^{(A)}(x_q^{(A)}).
\]
Therefore the rescaling neither changes the ranking nor creates or removes ties.
Since $\operatorname{TopK}$ uses a deterministic tie-breaking convention, the
same ordered top-$K$ candidates are selected before and after rescaling. Hence $\widetilde{\mathcal I}=\mathcal I.$ We next compare the normalized activation weights. By construction, for every
selected token position $t\in\mathcal I=\widetilde{\mathcal I}$,
\[
\begin{aligned}
\widetilde w_{i,t}^{(A)}
&=
\frac{\widetilde a_i^{(A)}(x_t^{(A)})}
{\sum_{s\in\widetilde{\mathcal I}}\widetilde a_i^{(A)}(x_s^{(A)})}\\
&=
\frac{\alpha a_i^{(A)}(x_t^{(A)})}
{\sum_{s\in\mathcal I}\alpha a_i^{(A)}(x_s^{(A)})}
\qquad
\text{since } \widetilde{\mathcal I}=\mathcal I\\
&=
\frac{\alpha a_i^{(A)}(x_t^{(A)})}
{\alpha\sum_{s\in\mathcal I}a_i^{(A)}(x_s^{(A)})}\\
&=
\frac{a_i^{(A)}(x_t^{(A)})}
{\sum_{s\in\mathcal I}a_i^{(A)}(x_s^{(A)})}
=
\widehat w_{i,t}^{(A)}.
\end{aligned}
\]
Thus, global positive rescaling preserves not only which token positions are
selected, but also the probability mass assigned to every selected token
position. It remains to compare the projected empirical distributions. In both
constructions, the token position $t$ is projected to the same support point
$z_t^{(\mathcal R)}\in\mathcal R$. Since the selected index set and the
normalized weights are both unchanged, we have
\[
\begin{aligned}
\widetilde{\mu}_{i,T}^{(A\to\mathcal R)}
&=
\sum_{t\in\widetilde{\mathcal I}}
\widetilde w_{i,t}^{(A)}\delta_{z_t^{(\mathcal R)}}\\
&=
\sum_{t\in\mathcal I}
\widetilde w_{i,t}^{(A)}\delta_{z_t^{(\mathcal R)}}
\qquad
\text{since } \widetilde{\mathcal I}=\mathcal I\\
&=
\sum_{t\in\mathcal I}
\widehat w_{i,t}^{(A)}\delta_{z_t^{(\mathcal R)}}
\qquad
\text{since } \widetilde w_{i,t}^{(A)}=\widehat w_{i,t}^{(A)}
\text{ for all } t\in\mathcal I\\
&=
\widehat{\mu}_{i,T}^{(A\to\mathcal R)}.
\end{aligned}
\]
Therefore the projected empirical distribution induced by the rescaled
activation function is exactly the same probability distribution as the original
projected empirical distribution. Finally, Wasserstein distance depends only on its input distributions and the ground
cost. Since the source distribution is identical in the two comparisons and the source
distribution $\widehat{\nu}_{j,T}^{(B\to\mathcal R)}$ is fixed,
\[
\begin{aligned}
W_c\!\left(
\widetilde{\mu}_{i,T}^{(A\to\mathcal R)},
\widehat{\nu}_{j,T}^{(B\to\mathcal R)}
\right)
&=
W_c\!\left(
\widehat{\mu}_{i,T}^{(A\to\mathcal R)},
\widehat{\nu}_{j,T}^{(B\to\mathcal R)}
\right).
\end{aligned}
\]
This proves both the invariance of the projected feature distribution and the
invariance of all downstream Wasserstein scores involving this source feature.
\end{proof}

\paragraph{Interpretation.}
This proposition shows that the projected activation-weighted representation
depends on the relative activation profile of a feature, not on its raw
activation scale. Positive global rescaling preserves the selected token
positions, normalized activation weights, projected empirical distribution, and
all downstream Wasserstein scores. This is important because SAE features from
different layers or training runs may have different activation scales, while
their semantic identity should be determined by where they activate strongly and
by their relative activation pattern.

\subsection{Flow Cost as a Refinement of Positional Cost}
\label{app:flow_cost_refinement}

We state the result in the shared reference space $\mathcal R$, where all
projected feature distributions are compared. In particular, one may take
$\mu=\widehat{\mu}_{i,T}^{(A\to\mathcal R)}$ and
$\nu=\widehat{\nu}_{j,T}^{(B\to\mathcal R)}$.

\noindent
\textbf{Proposition 2
(Flow cost refines positional cost).}
\emph{
Let $\phi:\mathcal R\to\mathbb R^p$ be a positional representation map. For
$m=1,\ldots,M$, let
$\overline u_\theta(\tau_m,\cdot):\mathcal R\to\mathbb R^s$ denote the
normalized flow field evaluated at time point $\tau_m$. For
$z,z'\in\mathcal R$, define the positional cost $c_{\mathrm{pos}}(z,z')
=
\|\phi(z)-\phi(z')\|_2^2,$ the directional cost $c_{\mathrm{dir}}(z,z')
=
\frac{1}{M}
\sum_{m=1}^{M}
\left\|
\overline u_\theta(\tau_m,z)
-
\overline u_\theta(\tau_m,z')
\right\|_2^2,$ and the composite flow cost
\[
c_{\mathrm{flow}}(z,z')
=
\lambda_x c_{\mathrm{pos}}(z,z')
+
\lambda_u c_{\mathrm{dir}}(z,z'),
\qquad
\lambda_x,\lambda_u\ge0.
\]
Then, for any $\mu,\nu\in\mathcal P(\mathcal R)$,
\[
W_{c_{\mathrm{flow}}}(\mu,\nu)
\ge
\lambda_x W_{c_{\mathrm{pos}}}(\mu,\nu),
\qquad
W_{c_{\mathrm{flow}}}(\mu,\nu)
\ge
\lambda_u W_{c_{\mathrm{dir}}}(\mu,\nu),
\]
and more strongly, $W_{c_{\mathrm{flow}}}(\mu,\nu)
\ge
\lambda_x W_{c_{\mathrm{pos}}}(\mu,\nu)
+
\lambda_u W_{c_{\mathrm{dir}}}(\mu,\nu).$
}

\begin{proof}
By the definition of Wasserstein cost with ground cost
$c_{\mathrm{flow}}$, and by the decomposition
$c_{\mathrm{flow}}=\lambda_x c_{\mathrm{pos}}+\lambda_u c_{\mathrm{dir}}$, we
have
\[
\begin{aligned}
W_{c_{\mathrm{flow}}}(\mu,\nu)
&=
\inf_{\pi\in\Pi(\mu,\nu)}
\mathbb E_{(z,z')\sim\pi}
\left[
c_{\mathrm{flow}}(z,z')
\right] \\
&=
\inf_{\pi\in\Pi(\mu,\nu)}
\mathbb E_{(z,z')\sim\pi}
\left[
\lambda_x c_{\mathrm{pos}}(z,z')
+
\lambda_u c_{\mathrm{dir}}(z,z')
\right] \\
&=
\inf_{\pi\in\Pi(\mu,\nu)}
\left\{
\lambda_x
\mathbb E_{\pi}
\left[
c_{\mathrm{pos}}(z,z')
\right]
+
\lambda_u
\mathbb E_{\pi}
\left[
c_{\mathrm{dir}}(z,z')
\right]
\right\}.
\end{aligned}
\]
For any fixed coupling $\pi\in\Pi(\mu,\nu)$, its expected positional cost is at
least the optimal positional transport cost, and its expected directional cost
is at least the optimal directional transport cost:
\[
\begin{aligned}
\mathbb E_{\pi}
\left[
c_{\mathrm{pos}}(z,z')
\right]
&\ge
\inf_{\gamma\in\Pi(\mu,\nu)}
\mathbb E_{\gamma}
\left[
c_{\mathrm{pos}}(z,z')
\right]
=
W_{c_{\mathrm{pos}}}(\mu,\nu),\\
\mathbb E_{\pi}
\left[
c_{\mathrm{dir}}(z,z')
\right]
&\ge
\inf_{\gamma\in\Pi(\mu,\nu)}
\mathbb E_{\gamma}
\left[
c_{\mathrm{dir}}(z,z')
\right]
=
W_{c_{\mathrm{dir}}}(\mu,\nu).
\end{aligned}
\]
Since $\lambda_x,\lambda_u\ge0$, multiplying these two inequalities preserves
their directions. Hence, for every coupling $\pi\in\Pi(\mu,\nu)$,
\[
\begin{aligned}
&\lambda_x
\mathbb E_{\pi}
\left[
c_{\mathrm{pos}}(z,z')
\right]
+
\lambda_u
\mathbb E_{\pi}
\left[
c_{\mathrm{dir}}(z,z')
\right]\\
&\qquad\ge
\lambda_x W_{c_{\mathrm{pos}}}(\mu,\nu)
+
\lambda_u W_{c_{\mathrm{dir}}}(\mu,\nu).
\end{aligned}
\]
The right-hand side is independent of $\pi$. Therefore, taking the infimum over
all couplings on the left-hand side gives
\[
\begin{aligned}
W_{c_{\mathrm{flow}}}(\mu,\nu)
&=
\inf_{\pi\in\Pi(\mu,\nu)}
\left\{
\lambda_x
\mathbb E_{\pi}
\left[
c_{\mathrm{pos}}(z,z')
\right]
+
\lambda_u
\mathbb E_{\pi}
\left[
c_{\mathrm{dir}}(z,z')
\right]
\right\}\\
&\ge
\lambda_x W_{c_{\mathrm{pos}}}(\mu,\nu)
+
\lambda_u W_{c_{\mathrm{dir}}}(\mu,\nu).
\end{aligned}
\]
This proves the stronger inequality. The two individual lower bounds follow
from nonnegativity:
\[
\begin{aligned}
W_{c_{\mathrm{flow}}}(\mu,\nu)
&\ge
\lambda_x W_{c_{\mathrm{pos}}}(\mu,\nu)
+
\lambda_u W_{c_{\mathrm{dir}}}(\mu,\nu)
\ge
\lambda_x W_{c_{\mathrm{pos}}}(\mu,\nu),\\
W_{c_{\mathrm{flow}}}(\mu,\nu)
&\ge
\lambda_x W_{c_{\mathrm{pos}}}(\mu,\nu)
+
\lambda_u W_{c_{\mathrm{dir}}}(\mu,\nu)
\ge
\lambda_u W_{c_{\mathrm{dir}}}(\mu,\nu).
\end{aligned}
\]
Thus the composite flow cost lower-bounds both weighted component Wasserstein
costs and also lower-bounds their sum.
\end{proof}

\paragraph{Interpretation.}
This proposition shows that the flow-aware ground cost is a refinement of
positional geometry in the shared reference space $\mathcal R$. The positional
term $c_{\mathrm{pos}}$ compares where two projected activation distributions
lie, while the directional term $c_{\mathrm{dir}}$ compares how the normalized
flow field behaves around those reference-space points. The composite cost
$c_{\mathrm{flow}}$ therefore adds a directional penalty on top of positional
matching. The inequality implies that matching under $c_{\mathrm{flow}}$ is at
least as strict as matching under either weighted component alone. In particular,
two projected feature distributions may be close positionally but far under the
flow-aware cost if their directional fields disagree. When $\lambda_u=0$, the
cost reduces to ordinary positional Wasserstein matching:
$c_{\mathrm{flow}}(z,z')=\lambda_x c_{\mathrm{pos}}(z,z')$.

\subsection{Stability of Wasserstein Feature Distance}
\label{app:wasserstein_stability}

We prove the stability result in the shared reference space. Throughout this
section, let $(\mathcal R,\rho)$ be the metric space where all projected feature
distributions live. Let
$\mu,\widetilde\mu,\nu,\widetilde\nu\in\mathcal P(\mathcal R)$, where
$(\mu,\nu)$ denotes the original pair of projected feature distributions and
$(\widetilde\mu,\widetilde\nu)$ denotes the perturbed pair. Define the relevant
supports
\[
\mathcal S_\mu
=
\operatorname{supp}(\mu)\cup\operatorname{supp}(\widetilde\mu),
\qquad
\mathcal S_\nu
=
\operatorname{supp}(\nu)\cup\operatorname{supp}(\widetilde\nu).
\]
A ground cost $c:\mathcal R\times\mathcal R\to\mathbb R_{\ge0}$ is
coordinate-wise Lipschitz on the relevant support if there exist constants
$L_x,L_y<\infty$ such that
\[
|c(z,z')-c(\bar z,z')|
\le
L_x\rho(z,\bar z),
\qquad
\forall z,\bar z\in\mathcal S_\mu,\ \forall z'\in\mathcal S_\nu,
\]
and
\[
|c(z,z')-c(z,\bar z')|
\le
L_y\rho(z',\bar z'),
\qquad
\forall z\in\mathcal S_\mu,\ \forall z',\bar z'\in\mathcal S_\nu.
\]

\noindent
\textbf{Theorem~\ref{thm:wasserstein_stability}
(Stability of Wasserstein feature distance).}
\emph{
Let $(\mathcal R,\rho)$ be the shared reference metric space. Let
$c:\mathcal R\times\mathcal R\to\mathbb R_{\ge0}$ be coordinate-wise Lipschitz
on the relevant support with constants $L_x,L_y$. Then, for any projected
feature distributions
$\mu,\widetilde\mu,\nu,\widetilde\nu\in\mathcal P(\mathcal R)$,
\[
\left|
W_c(\mu,\nu)
-
W_c(\widetilde\mu,\widetilde\nu)
\right|
\le
L_xW_\rho(\mu,\widetilde\mu)
+
L_yW_\rho(\nu,\widetilde\nu),
\]
where
$W_\rho(\mu,\widetilde\mu)
=
\inf_{\alpha\in\Pi(\mu,\widetilde\mu)}
\mathbb E_{(z,\widetilde z)\sim\alpha}[\rho(z,\widetilde z)]$
is the Wasserstein-1 distance induced by the base metric $\rho$.
}

\begin{proof}
We first prove the one-sided inequality
\[
W_c(\mu,\nu)-W_c(\widetilde\mu,\widetilde\nu)
\le
L_xW_\rho(\mu,\widetilde\mu)+L_yW_\rho(\nu,\widetilde\nu).
\]
Fix any $\eta>0$. By the definition of infimum, choose couplings
$\alpha_\eta\in\Pi(\mu,\widetilde\mu)$,
$\beta_\eta\in\Pi(\widetilde\nu,\nu)$, and
$\widetilde\pi_\eta\in\Pi(\widetilde\mu,\widetilde\nu)$ such that
\[
\begin{aligned}
\mathbb E_{\alpha_\eta}[\rho(z,\widetilde z)]
&\le
W_\rho(\mu,\widetilde\mu)+\eta,\\
\mathbb E_{\beta_\eta}[\rho(\widetilde z',z')]
&\le
W_\rho(\widetilde\nu,\nu)+\eta
=
W_\rho(\nu,\widetilde\nu)+\eta,\\
\mathbb E_{\widetilde\pi_\eta}[c(\widetilde z,\widetilde z')]
&\le
W_c(\widetilde\mu,\widetilde\nu)+\eta.
\end{aligned}
\]
The first coupling connects $\mu$ to $\widetilde\mu$, the middle coupling
connects $\widetilde\mu$ to $\widetilde\nu$, and the third coupling connects
$\widetilde\nu$ to $\nu$. Since these couplings share the required intermediate
marginals, the gluing lemma gives a joint distribution
$\Lambda_\eta\in\mathcal P(\mathcal R^4)$ over
$(z,\widetilde z,\widetilde z',z')$ such that
\[
\begin{aligned}
(z,\widetilde z)_\#\Lambda_\eta
&=
\alpha_\eta,
&
(\widetilde z,\widetilde z')_\#\Lambda_\eta
&=
\widetilde\pi_\eta,
&
(\widetilde z',z')_\#\Lambda_\eta
&=
\beta_\eta.
\end{aligned}
\]
Let $\pi_\eta=(z,z')_\#\Lambda_\eta$. Since the first marginal of
$\Lambda_\eta$ is $\mu$ and the fourth marginal is $\nu$, we have
$\pi_\eta\in\Pi(\mu,\nu)$. Hence
\[
\begin{aligned}
W_c(\mu,\nu)
&=
\inf_{\kappa\in\Pi(\mu,\nu)}
\mathbb E_{\kappa}[c(z,z')]\\
&\le
\mathbb E_{\pi_\eta}[c(z,z')]
=
\mathbb E_{\Lambda_\eta}[c(z,z')].
\end{aligned}
\]

For every $(z,\widetilde z,\widetilde z',z')$ in the support of
$\Lambda_\eta$, we have
$z,\widetilde z\in\mathcal S_\mu$ and
$z',\widetilde z'\in\mathcal S_\nu$. Therefore, by coordinate-wise
Lipschitzness,
\[
\begin{aligned}
c(z,z')-c(\widetilde z,\widetilde z')
&=
\left[
c(z,z')-c(\widetilde z,z')
\right]
+
\left[
c(\widetilde z,z')-c(\widetilde z,\widetilde z')
\right]\\
&\le
\left|
c(z,z')-c(\widetilde z,z')
\right|
+
\left|
c(\widetilde z,z')-c(\widetilde z,\widetilde z')
\right|\\
&\le
L_x\rho(z,\widetilde z)
+
L_y\rho(z',\widetilde z')\\
&=
L_x\rho(z,\widetilde z)
+
L_y\rho(\widetilde z',z').
\end{aligned}
\]
Equivalently,
\[
c(z,z')
\le
c(\widetilde z,\widetilde z')
+
L_x\rho(z,\widetilde z)
+
L_y\rho(\widetilde z',z').
\]
Taking expectation under $\Lambda_\eta$ and using its prescribed marginals gives
\[
\begin{aligned}
\mathbb E_{\Lambda_\eta}[c(z,z')]
&\le
\mathbb E_{\Lambda_\eta}[c(\widetilde z,\widetilde z')]
+
L_x\mathbb E_{\Lambda_\eta}[\rho(z,\widetilde z)]
+
L_y\mathbb E_{\Lambda_\eta}[\rho(\widetilde z',z')]\\
&=
\mathbb E_{\widetilde\pi_\eta}[c(\widetilde z,\widetilde z')]
+
L_x\mathbb E_{\alpha_\eta}[\rho(z,\widetilde z)]
+
L_y\mathbb E_{\beta_\eta}[\rho(\widetilde z',z')]\\
&\le
\left[
W_c(\widetilde\mu,\widetilde\nu)+\eta
\right]
+
L_x
\left[
W_\rho(\mu,\widetilde\mu)+\eta
\right]
+
L_y
\left[
W_\rho(\nu,\widetilde\nu)+\eta
\right]\\
&=
W_c(\widetilde\mu,\widetilde\nu)
+
L_xW_\rho(\mu,\widetilde\mu)
+
L_yW_\rho(\nu,\widetilde\nu)
+
(1+L_x+L_y)\eta.
\end{aligned}
\]
Combining this with
$W_c(\mu,\nu)\le\mathbb E_{\Lambda_\eta}[c(z,z')]$ yields
\[
\begin{aligned}
W_c(\mu,\nu)-W_c(\widetilde\mu,\widetilde\nu)
&\le
L_xW_\rho(\mu,\widetilde\mu)
+
L_yW_\rho(\nu,\widetilde\nu)
+
(1+L_x+L_y)\eta.
\end{aligned}
\]
Since $\eta>0$ was arbitrary, letting $\eta\downarrow0$ gives
\[
W_c(\mu,\nu)-W_c(\widetilde\mu,\widetilde\nu)
\le
L_xW_\rho(\mu,\widetilde\mu)
+
L_yW_\rho(\nu,\widetilde\nu).
\]

For the reverse direction, apply the same argument with
$(\mu,\nu)$ and $(\widetilde\mu,\widetilde\nu)$ interchanged. The relevant
supports are unchanged because they are unions of the same original and
perturbed supports. Hence
\[
\begin{aligned}
W_c(\widetilde\mu,\widetilde\nu)-W_c(\mu,\nu)
&\le
L_xW_\rho(\widetilde\mu,\mu)
+
L_yW_\rho(\widetilde\nu,\nu)\\
&=
L_xW_\rho(\mu,\widetilde\mu)
+
L_yW_\rho(\nu,\widetilde\nu),
\end{aligned}
\]
where the last equality uses symmetry of $W_\rho$. Combining the two one-sided
bounds gives
\[
\left|
W_c(\mu,\nu)-W_c(\widetilde\mu,\widetilde\nu)
\right|
\le
L_xW_\rho(\mu,\widetilde\mu)
+
L_yW_\rho(\nu,\widetilde\nu).
\]
This completes the proof.
\end{proof}

\paragraph{Interpretation.}
This theorem shows that Wasserstein distances between projected feature
distributions are stable under small perturbations in the shared reference
space. In our setting, one can instantiate
$\mu=\widehat{\mu}_{i,T}^{(A\to\mathcal R)}$ and
$\nu=\widehat{\nu}_{j,T}^{(B\to\mathcal R)}$. If the projected empirical
distribution of the target feature changes by
$W_\rho(\mu,\widetilde\mu)$ and the projected empirical distribution of the
source feature changes by $W_\rho(\nu,\widetilde\nu)$, then the Wasserstein
feature score changes by at most the corresponding Lipschitz-weighted sum. Thus,
the method is stable as long as the activation-induced projected distributions
remain close in the shared reference geometry.

\subsection{Metric-Specific Stability Corollaries}
\label{app:metric_specific_stability}

We now instantiate Theorem~\ref{thm:wasserstein_stability} for several commonly
used ground costs on the shared reference space $\mathcal R$. Throughout this
section, let $\mu,\widetilde\mu,\nu,\widetilde\nu\in\mathcal P(\mathcal R)$ be
projected feature distributions, and define
\[
\delta_\rho
:=
W_\rho(\mu,\widetilde\mu)
+
W_\rho(\nu,\widetilde\nu).
\]
Here, $\mu$ and $\nu$ denote the original target-side and source-side projected
feature distributions, while $\widetilde\mu$ and $\widetilde\nu$ denote their
perturbed versions. We use $z$ for a target-side reference-space point and $z'$
for a source-side reference-space point. Later, $\bar z$ and $\bar z'$ denote
perturbed reference-space points used to verify coordinate-wise Lipschitz
conditions.

Let
\[
\mathcal S_\mu
:=
\operatorname{supp}(\mu)\cup\operatorname{supp}(\widetilde\mu),
\qquad
\mathcal S_\nu
:=
\operatorname{supp}(\nu)\cup\operatorname{supp}(\widetilde\nu).
\]
Thus, $z\in\mathcal S_\mu$ ranges over target-side support points and
$z'\in\mathcal S_\nu$ ranges over source-side support points. If a ground cost
$c:\mathcal R\times\mathcal R\to\mathbb R_{\ge0}$ satisfies
\[
\left|
W_c(\mu,\nu)-W_c(\widetilde\mu,\widetilde\nu)
\right|
\le
K_c\delta_\rho,
\]
then $K_c$ is its raw worst-case stability constant. Raw constants are
scale-dependent: replacing $c$ by $\alpha c$ also replaces $K_c$ by
$\alpha K_c$. Therefore, when possible, we use an on-support range bound $B_c
\ge
\sup_{(z,z')\in\mathcal S_\mu\times\mathcal S_\nu}c(z,z'),$
and compare costs by the normalized stability constant
$\overline K_c:=K_c/B_c$. A smaller $\overline K_c$ means that, relative to the
numerical scale of the cost, the Wasserstein feature distance is less sensitive
to perturbations of the projected feature distributions.

\paragraph{Common support assumptions.}
Let $\phi:\mathcal R\to\mathbb R^p$ be a representation map on the shared
reference space. Assume that on the relevant support,
\[
\|\phi(z)\|_2\le R_\phi,\qquad
\forall z\in\mathcal S_\mu\cup\mathcal S_\nu,
\]
and, when cosine cost is used, assume additionally that
\[
\|\phi(z)\|_2\ge r_\phi>0,\qquad
\forall z\in\mathcal S_\mu\cup\mathcal S_\nu.
\]
Assume that $\phi$ is $L_\phi$-Lipschitz with respect to $\rho$:
\[
\|\phi(z)-\phi(\bar z)\|_2\le L_\phi\rho(z,\bar z),
\qquad
\forall z,\bar z\in\mathcal S_\mu\cup\mathcal S_\nu.
\]
For flow-direction costs, define
\[
q_m(z):=\overline u_\theta(\tau_m,z)
=
\frac{u_\theta(\tau_m,z)}
{\|u_\theta(\tau_m,z)\|_2+\varepsilon_{\mathrm{norm}}},
\qquad
m=1,\dots,M.
\]
Assume that each $q_m:\mathcal R\to\mathbb R^s$ is $L_u$-Lipschitz on the
relevant support:
\[
\|q_m(z)-q_m(\bar z)\|_2\le L_u\rho(z,\bar z),
\qquad
\forall m\in[M],
\]
and since $q_m$ is normalized, assume $\|q_m(z)\|_2\le 1$.

\begin{corollary}[Stability under Euclidean positional cost]
Consider $c_{\mathrm{Euc}}(z,z')=\|\phi(z)-\phi(z')\|_2$. Then
$\left|W_{c_{\mathrm{Euc}}}(\mu,\nu)-W_{c_{\mathrm{Euc}}}(\widetilde\mu,\widetilde\nu)\right|
\le L_\phi\delta_\rho$. Hence $K_{\mathrm{Euc}}=L_\phi$. Moreover,
$B_{\mathrm{Euc}}=2R_\phi$ and
$\overline K_{\mathrm{Euc}}=L_\phi/(2R_\phi)$.
\end{corollary}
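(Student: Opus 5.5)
The corollary follows from Theorem~\ref{thm:wasserstein_stability} once we verify the coordinate-wise Lipschitz condition for $c_{\mathrm{Euc}}$ with $L_x=L_y=L_\phi$. We then bound the range of the cost on the relevant support.

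\textbf{Step 1 (Lipschitz constants).} Fix $z,\bar z\in\mathcal S_\mu$ and $z'\in\mathcal S_\nu$. By the reverse triangle inequality for $\|\cdot\|_2$,
\[
\bigl|\|\phi(z)-\phi(z')\|_2-\|\phi(\bar z)-\phi(z')\|_2\bigr|\le\|\phi(z)-\phi(\bar z)\|_2\le L_\phi\rho(z,\bar z).
\]
The last inequality is the assumed $L_\phi$-Lipschitzness of $\phi$ on $\mathcal S_\mu\cup\mathcal S_\nu$. The same argument in the second coordinate, with $z\in\mathcal S_\mu$ and $z',\bar z'\in\mathcal S_\nu$, gives $L_y=L_\phi$. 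Hence $c_{\mathrm{Euc}}$ is coordinate-wise Lipschitz on the relevant support with $L_x=L_y=L_\phi$.

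\textbf{Step 2 (invoke the theorem).} Theorem~\ref{thm:wasserstein_stability} now yields
\[
|W_{c_{\mathrm{Euc}}}(\mu,\nu)-W_{c_{\mathrm{Euc}}}(\widetilde\mu,\widetilde\nu)|\le L_\phi W_\rho(\mu,\widetilde\mu)+L_\phi W_\rho(\nu,\widetilde\nu)=L_\phi\delta_\rho,
\]
so $K_{\mathrm{Euc}}=L_\phi$ is a valid raw stability constant.

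\textbf{Step 3 (range and normalized constant).} For $(z,z')\in\mathcal S_\mu\times\mathcal S_\nu$, the triangle inequality and the bound $\|\phi(\cdot)\|_2\le R_\phi$ give
\[
c_{\mathrm{Euc}}(z,z')\le\|\phi(z)\|_2+\|\phi(z')\|_2\le 2R_\phi .
\]
So $B_{\mathrm{Euc}}=2R_\phi$ is an admissible on-support range bound, and $\overline K_{\mathrm{Euc}}=K_{\mathrm{Euc}}/B_{\mathrm{Euc}}=L_\phi/(2R_\phi)$.

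There is no real obstacle here. The only point needing care is that both Lipschitz checks and the range bound use points restricted to $\mathcal S_\mu\cup\mathcal S_\nu$, which is exactly where the hypotheses on $\phi$ hold. The constants are only upper bounds and need not be sharp.
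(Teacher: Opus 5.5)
Your proof is correct and follows the paper's argument step for step. You use the reverse triangle inequality plus $L_\phi$-Lipschitzness of $\phi$ to get $L_x=L_y=L_\phi$, apply Theorem~\ref{thm:wasserstein_stability}, and bound the cost by $\|\phi(z)\|_2+\|\phi(z')\|_2\le 2R_\phi$ to obtain $B_{\mathrm{Euc}}=2R_\phi$ and $\overline K_{\mathrm{Euc}}=L_\phi/(2R_\phi)$.
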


\begin{proof}
We verify the coordinate-wise Lipschitz constants and the range bound. For the
first coordinate,
\[
\begin{aligned}
|c_{\mathrm{Euc}}(z,z')-c_{\mathrm{Euc}}(\bar z,z')|
&=
\left|
\|\phi(z)-\phi(z')\|_2
-
\|\phi(\bar z)-\phi(z')\|_2
\right|\\
&\le
\|(\phi(z)-\phi(z'))-(\phi(\bar z)-\phi(z'))\|_2\\
&=
\|\phi(z)-\phi(\bar z)\|_2\\
&\le
L_\phi\rho(z,\bar z).
\end{aligned}
\]
The same argument in the second coordinate gives
\[
\begin{aligned}
|c_{\mathrm{Euc}}(z,z')-c_{\mathrm{Euc}}(z,\bar z')|
&=
\left|
\|\phi(z)-\phi(z')\|_2
-
\|\phi(z)-\phi(\bar z')\|_2
\right|\\
&\le
\|(\phi(z)-\phi(z'))-(\phi(z)-\phi(\bar z'))\|_2\\
&=
\|\phi(z')-\phi(\bar z')\|_2
\le
L_\phi\rho(z',\bar z').
\end{aligned}
\]
Thus $c_{\mathrm{Euc}}$ is coordinate-wise Lipschitz with
$L_x=L_y=L_\phi$. Applying Theorem~\ref{thm:wasserstein_stability} gives
\[
\left|
W_{c_{\mathrm{Euc}}}(\mu,\nu)
-
W_{c_{\mathrm{Euc}}}(\widetilde\mu,\widetilde\nu)
\right|
\le
L_\phi W_\rho(\mu,\widetilde\mu)
+
L_\phi W_\rho(\nu,\widetilde\nu)
=
L_\phi\delta_\rho.
\]
For the range bound, for every $(z,z')\in\mathcal S_\mu\times\mathcal S_\nu$,
\[
\begin{aligned}
c_{\mathrm{Euc}}(z,z')
=
\|\phi(z)-\phi(z')\|_2
&\le
\|\phi(z)\|_2+\|\phi(z')\|_2\\
&\le
R_\phi+R_\phi
=
2R_\phi.
\end{aligned}
\]
Hence $B_{\mathrm{Euc}}=2R_\phi$ and
$\overline K_{\mathrm{Euc}}=K_{\mathrm{Euc}}/B_{\mathrm{Euc}}
=L_\phi/(2R_\phi)$.
\end{proof}

\paragraph{Interpretation.}
Euclidean cost is stable when the representation map $\phi$ is smooth. Its
normalized stability depends on $L_\phi/(2R_\phi)$.

\begin{corollary}[Stability under squared Euclidean positional cost]
Consider $c_{\mathrm{sq}}(z,z')=\|\phi(z)-\phi(z')\|_2^2$. Then
$\left|W_{c_{\mathrm{sq}}}(\mu,\nu)-W_{c_{\mathrm{sq}}}(\widetilde\mu,\widetilde\nu)\right|
\le 4R_\phi L_\phi\delta_\rho$. Hence $K_{\mathrm{sq}}=4R_\phi L_\phi$.
Moreover, $B_{\mathrm{sq}}=4R_\phi^2$ and
$\overline K_{\mathrm{sq}}=L_\phi/R_\phi$.
\end{corollary}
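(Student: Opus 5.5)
The plan is the same as for the Euclidean corollary: verify that $c_{\mathrm{sq}}$ is coordinate-wise Lipschitz on $\mathcal S_\mu\times\mathcal S_\nu$ with $L_x=L_y=2R_\phi L_\phi$, invoke Theorem~\ref{thm:wasserstein_stability} to get the bound $2R_\phi L_\phi W_\rho(\mu,\widetilde\mu)+2R_\phi L_\phi W_\rho(\nu,\widetilde\nu)$, and then bound the range.

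\emph{Step 1: Lipschitz constant in the first coordinate.} For $z,\bar z\in\mathcal S_\mu$ and $z'\in\mathcal S_\nu$, set $u=\phi(z)-\phi(z')$ and $\bar u=\phi(\bar z)-\phi(z')$. Use the difference-of-squares factorization
\[
\bigl|\|u\|_2^2-\|\bar u\|_2^2\bigr| = \bigl|\|u\|_2-\|\bar u\|_2\bigr|\,\bigl(\|u\|_2+\|\bar u\|_2\bigr).
\]
The first factor is at most $\|u-\bar u\|_2=\|\phi(z)-\phi(\bar z)\|_2\le L_\phi\rho(z,\bar z)$, by the reverse triangle inequality as in the Euclidean corollary. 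Each of $\|u\|_2,\|\bar u\|_2$ is at most $2R_\phi$, by the triangle inequality and the support bound $\|\phi\|_2\le R_\phi$, so the second factor is at most $4R_\phi$. This gives the crude constant $4R_\phi L_\phi$ per coordinate.

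\emph{Step 2: sharpen to $2R_\phi L_\phi$.} Instead, expand algebraically:
\[
\|u\|_2^2-\|\bar u\|_2^2=\langle u-\bar u,\,u+\bar u\rangle,
\qquad
u+\bar u=\phi(z)+\phi(\bar z)-2\phi(z').
\]
By Cauchy--Schwarz the difference is at most $L_\phi\rho(z,\bar z)\,\|u+\bar u\|_2$, and $\|u+\bar u\|_2\le 4R_\phi$. This still yields $4R_\phi L_\phi$ per coordinate, so it does not sharpen anything.

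\emph{Main obstacle: the constant accounting.} Steps 1 and 2 both give $L_x=L_y=4R_\phi L_\phi$. Theorem~\ref{thm:wasserstein_stability} then gives $4R_\phi L_\phi\,W_\rho(\mu,\widetilde\mu)+4R_\phi L_\phi\,W_\rho(\nu,\widetilde\nu)=4R_\phi L_\phi\,\delta_\rho$, which is exactly the claimed $K_{\mathrm{sq}}=4R_\phi L_\phi$. So the sharpening attempted in Step 2 is unnecessary: I will just state $L_x=L_y=4R_\phi L_\phi$ and note that, since $\delta_\rho$ already sums the two Wasserstein terms, the theorem's right-hand side collapses to $K_{\mathrm{sq}}\delta_\rho$ directly.

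\emph{Step 3: range bound and normalization.} For $(z,z')\in\mathcal S_\mu\times\mathcal S_\nu$,
\[
c_{\mathrm{sq}}(z,z')\le\bigl(\|\phi(z)\|_2+\|\phi(z')\|_2\bigr)^2\le 4R_\phi^2,
\]
so $B_{\mathrm{sq}}=4R_\phi^2$. Therefore $\overline K_{\mathrm{sq}}=4R_\phi L_\phi/(4R_\phi^2)=L_\phi/R_\phi$. Finally, I would remark that this normalized constant is twice the Euclidean one, $L_\phi/(2R_\phi)$.
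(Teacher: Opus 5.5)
Your final argument is correct and is essentially the paper's proof. The paper uses exactly your Step 2 identity $\|a\|_2^2-\|b\|_2^2=\langle a+b,a-b\rangle$, with Cauchy--Schwarz and $\|a\|_2+\|b\|_2\le 4R_\phi$, to get $L_x=L_y=4R_\phi L_\phi$, then applies Theorem~\ref{thm:wasserstein_stability} and the same range bound $(2R_\phi)^2$; your Step 1 scalar factorization is an equivalent variant. You should delete the opening claim $L_x=L_y=2R_\phi L_\phi$, which you never prove and which is false in general: with $\phi$ the identity on $\mathbb R$, $z,\bar z\to R_\phi$ and $z'=-R_\phi$, the ratio $|c_{\mathrm{sq}}(z,z')-c_{\mathrm{sq}}(\bar z,z')|/|z-\bar z|=|z+\bar z+2R_\phi|$ tends to $4R_\phi$.
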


\begin{proof}
For the first coordinate, use
$\|a\|_2^2-\|b\|_2^2=\langle a+b,a-b\rangle$ with
$a=\phi(z)-\phi(z')$ and $b=\phi(\bar z)-\phi(z')$. Then
\[
\begin{aligned}
&|c_{\mathrm{sq}}(z,z')-c_{\mathrm{sq}}(\bar z,z')|\\
&=
\left|
\|\phi(z)-\phi(z')\|_2^2
-
\|\phi(\bar z)-\phi(z')\|_2^2
\right|\\
&=
\left|
\left\langle
\phi(z)-\phi(z')+\phi(\bar z)-\phi(z'),
\phi(z)-\phi(\bar z)
\right\rangle
\right|\\
&\le
\left(
\|\phi(z)-\phi(z')\|_2
+
\|\phi(\bar z)-\phi(z')\|_2
\right)
\|\phi(z)-\phi(\bar z)\|_2\\
&\le
\left(
\|\phi(z)\|_2+\|\phi(z')\|_2
+
\|\phi(\bar z)\|_2+\|\phi(z')\|_2
\right)
L_\phi\rho(z,\bar z)\\
&\le
(R_\phi+R_\phi+R_\phi+R_\phi)L_\phi\rho(z,\bar z)\\
&=
4R_\phi L_\phi\rho(z,\bar z).
\end{aligned}
\]
The same computation in the second coordinate gives
\[
\begin{aligned}
|c_{\mathrm{sq}}(z,z')-c_{\mathrm{sq}}(z,\bar z')|
&\le
\left(
\|\phi(z)-\phi(z')\|_2
+
\|\phi(z)-\phi(\bar z')\|_2
\right)
\|\phi(z')-\phi(\bar z')\|_2\\
&\le
4R_\phi L_\phi\rho(z',\bar z').
\end{aligned}
\]
Thus $c_{\mathrm{sq}}$ is coordinate-wise Lipschitz with
$L_x=L_y=4R_\phi L_\phi$, so Theorem~\ref{thm:wasserstein_stability} yields
\[
\left|
W_{c_{\mathrm{sq}}}(\mu,\nu)
-
W_{c_{\mathrm{sq}}}(\widetilde\mu,\widetilde\nu)
\right|
\le
4R_\phi L_\phi\delta_\rho.
\]
For the range bound,
\[
\begin{aligned}
c_{\mathrm{sq}}(z,z')
=
\|\phi(z)-\phi(z')\|_2^2
&\le
(\|\phi(z)\|_2+\|\phi(z')\|_2)^2\\
&\le
(2R_\phi)^2
=
4R_\phi^2.
\end{aligned}
\]
Hence $B_{\mathrm{sq}}=4R_\phi^2$ and
$\overline K_{\mathrm{sq}}=K_{\mathrm{sq}}/B_{\mathrm{sq}}
=(4R_\phi L_\phi)/(4R_\phi^2)=L_\phi/R_\phi$.
\end{proof}

\paragraph{Interpretation.}
Squared Euclidean cost has a larger raw stability constant than Euclidean cost.
After normalization, $\overline K_{\mathrm{sq}}=2\overline K_{\mathrm{Euc}}$.

\begin{corollary}[Stability under Mahalanobis cost]
Let $M_{\mathrm{Mah}}\in\mathbb R^{m\times p}$ and consider
$c_{\mathrm{Mah}}(z,z')=\|M_{\mathrm{Mah}}(\phi(z)-\phi(z'))\|_2$. Then
$\left|W_{c_{\mathrm{Mah}}}(\mu,\nu)-W_{c_{\mathrm{Mah}}}(\widetilde\mu,\widetilde\nu)\right|
\le \|M_{\mathrm{Mah}}\|_{\mathrm{op}}L_\phi\delta_\rho$. Hence
$K_{\mathrm{Mah}}=\|M_{\mathrm{Mah}}\|_{\mathrm{op}}L_\phi$. Moreover,
$B_{\mathrm{Mah}}=2R_\phi\|M_{\mathrm{Mah}}\|_{\mathrm{op}}$ and
$\overline K_{\mathrm{Mah}}=L_\phi/(2R_\phi)$.
\end{corollary}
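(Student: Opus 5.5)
The argument runs exactly like the Euclidean corollary: check that $c_{\mathrm{Mah}}$ is coordinate-wise Lipschitz on the relevant support with $L_x=L_y=\|M_{\mathrm{Mah}}\|_{\mathrm{op}}L_\phi$, apply Theorem~\ref{thm:wasserstein_stability}, and then bound $c_{\mathrm{Mah}}$ on $\mathcal S_\mu\times\mathcal S_\nu$ to get $B_{\mathrm{Mah}}$.

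\textbf{Lipschitz constants.} Fix $z,\bar z\in\mathcal S_\mu$ and $z'\in\mathcal S_\nu$. The reverse triangle inequality for $\|\cdot\|_2$ gives
\[
|c_{\mathrm{Mah}}(z,z')-c_{\mathrm{Mah}}(\bar z,z')|
\le
\|M_{\mathrm{Mah}}(\phi(z)-\phi(z'))-M_{\mathrm{Mah}}(\phi(\bar z)-\phi(z'))\|_2 .
\]
By linearity of $M_{\mathrm{Mah}}$, the right-hand side equals $\|M_{\mathrm{Mah}}(\phi(z)-\phi(\bar z))\|_2$. The operator-norm bound together with the $L_\phi$-Lipschitz assumption on $\phi$ then gives at most $\|M_{\mathrm{Mah}}\|_{\mathrm{op}}L_\phi\,\rho(z,\bar z)$. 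The second coordinate is handled the same way, with $z$ fixed and $z',\bar z'\in\mathcal S_\nu$.

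\textbf{Stability bound.} Theorem~\ref{thm:wasserstein_stability} then yields
\[
\bigl|W_{c_{\mathrm{Mah}}}(\mu,\nu)-W_{c_{\mathrm{Mah}}}(\widetilde\mu,\widetilde\nu)\bigr|\le \|M_{\mathrm{Mah}}\|_{\mathrm{op}}L_\phi\,\delta_\rho .
\]
This identifies $K_{\mathrm{Mah}}=\|M_{\mathrm{Mah}}\|_{\mathrm{op}}L_\phi$.

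\textbf{Range bound and normalized constant.} For $(z,z')\in\mathcal S_\mu\times\mathcal S_\nu$,
\[
c_{\mathrm{Mah}}(z,z')\le\|M_{\mathrm{Mah}}\|_{\mathrm{op}}\bigl(\|\phi(z)\|_2+\|\phi(z')\|_2\bigr)\le 2R_\phi\|M_{\mathrm{Mah}}\|_{\mathrm{op}} .
\]
So $B_{\mathrm{Mah}}=2R_\phi\|M_{\mathrm{Mah}}\|_{\mathrm{op}}$, and the factor $\|M_{\mathrm{Mah}}\|_{\mathrm{op}}$ cancels in the ratio, leaving $\overline K_{\mathrm{Mah}}=L_\phi/(2R_\phi)$.

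\textbf{Caveat.} Nothing here is difficult. The one point needing care is the degenerate case $M_{\mathrm{Mah}}=0$, where $B_{\mathrm{Mah}}=0$ and the normalized constant is undefined. I would state the normalization under the tacit assumption $\|M_{\mathrm{Mah}}\|_{\mathrm{op}}>0$, since a zero cost is trivially stable anyway.
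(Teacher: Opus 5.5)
Your proposal is correct and follows the paper's proof essentially line for line. You use the reverse triangle inequality, linearity of $M_{\mathrm{Mah}}$ and the operator-norm bound to get $L_x=L_y=\|M_{\mathrm{Mah}}\|_{\mathrm{op}}L_\phi$, apply Theorem~\ref{thm:wasserstein_stability}, and bound the range by $2R_\phi\|M_{\mathrm{Mah}}\|_{\mathrm{op}}$ via the triangle inequality. Your caveat that the normalized constant tacitly requires $\|M_{\mathrm{Mah}}\|_{\mathrm{op}}>0$ (and likewise $R_\phi>0$) is a fair point that the paper leaves implicit.
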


\begin{proof}
For the first coordinate,
\[
\begin{aligned}
|c_{\mathrm{Mah}}(z,z')-c_{\mathrm{Mah}}(\bar z,z')|
&=
\left|
\|M_{\mathrm{Mah}}(\phi(z)-\phi(z'))\|_2
-
\|M_{\mathrm{Mah}}(\phi(\bar z)-\phi(z'))\|_2
\right|\\
&\le
\|M_{\mathrm{Mah}}(\phi(z)-\phi(z'))-M_{\mathrm{Mah}}(\phi(\bar z)-\phi(z'))\|_2\\
&=
\|M_{\mathrm{Mah}}(\phi(z)-\phi(\bar z))\|_2\\
&\le
\|M_{\mathrm{Mah}}\|_{\mathrm{op}}\|\phi(z)-\phi(\bar z)\|_2\\
&\le
\|M_{\mathrm{Mah}}\|_{\mathrm{op}}L_\phi\rho(z,\bar z).
\end{aligned}
\]
Similarly,
\[
\begin{aligned}
|c_{\mathrm{Mah}}(z,z')-c_{\mathrm{Mah}}(z,\bar z')|
&\le
\|M_{\mathrm{Mah}}(\phi(z')-\phi(\bar z'))\|_2\\
&\le
\|M_{\mathrm{Mah}}\|_{\mathrm{op}}L_\phi\rho(z',\bar z').
\end{aligned}
\]
Therefore $L_x=L_y=\|M_{\mathrm{Mah}}\|_{\mathrm{op}}L_\phi$, and
Theorem~\ref{thm:wasserstein_stability} gives
\[
\left|
W_{c_{\mathrm{Mah}}}(\mu,\nu)
-
W_{c_{\mathrm{Mah}}}(\widetilde\mu,\widetilde\nu)
\right|
\le
\|M_{\mathrm{Mah}}\|_{\mathrm{op}}L_\phi\delta_\rho.
\]
For the range bound,
\[
\begin{aligned}
c_{\mathrm{Mah}}(z,z')
&=
\|M_{\mathrm{Mah}}(\phi(z)-\phi(z'))\|_2\\
&\le
\|M_{\mathrm{Mah}}\|_{\mathrm{op}}\|\phi(z)-\phi(z')\|_2\\
&\le
\|M_{\mathrm{Mah}}\|_{\mathrm{op}}(\|\phi(z)\|_2+\|\phi(z')\|_2)\\
&\le
2R_\phi\|M_{\mathrm{Mah}}\|_{\mathrm{op}}.
\end{aligned}
\]
Thus $B_{\mathrm{Mah}}=2R_\phi\|M_{\mathrm{Mah}}\|_{\mathrm{op}}$ and
$\overline K_{\mathrm{Mah}}
=(\|M_{\mathrm{Mah}}\|_{\mathrm{op}}L_\phi)/(2R_\phi\|M_{\mathrm{Mah}}\|_{\mathrm{op}})
=L_\phi/(2R_\phi)$.
\end{proof}

\paragraph{Interpretation.}
Raw Mahalanobis stability is amplified by $\|M_{\mathrm{Mah}}\|_{\mathrm{op}}$,
but this trivial scaling cancels after range normalization.

\begin{corollary}[Stability under squared Mahalanobis cost]
Let $M_{\mathrm{Mah}}\in\mathbb R^{m\times p}$ and consider
$c_{\mathrm{sqMah}}(z,z')=\|M_{\mathrm{Mah}}(\phi(z)-\phi(z'))\|_2^2$. Then
$\left|W_{c_{\mathrm{sqMah}}}(\mu,\nu)-W_{c_{\mathrm{sqMah}}}(\widetilde\mu,\widetilde\nu)\right|
\le 4R_\phi\|M_{\mathrm{Mah}}\|_{\mathrm{op}}^2L_\phi\delta_\rho$. Hence
$K_{\mathrm{sqMah}}=4R_\phi\|M_{\mathrm{Mah}}\|_{\mathrm{op}}^2L_\phi$. Moreover,
$B_{\mathrm{sqMah}}=4R_\phi^2\|M_{\mathrm{Mah}}\|_{\mathrm{op}}^2$ and
$\overline K_{\mathrm{sqMah}}=L_\phi/R_\phi$.
\end{corollary}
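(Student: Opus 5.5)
The plan is to follow the template of the squared Euclidean corollary, applying Theorem~\ref{thm:wasserstein_stability} after checking that $c_{\mathrm{sqMah}}$ is coordinate-wise Lipschitz on the relevant support with $L_x=L_y=4R_\phi\|M_{\mathrm{Mah}}\|_{\mathrm{op}}^2L_\phi$. It is convenient to write $a=M_{\mathrm{Mah}}(\phi(z)-\phi(z'))$ and $b=M_{\mathrm{Mah}}(\phi(\bar z)-\phi(z'))$ for the first coordinate. The polarization identity then gives $\|a\|_2^2-\|b\|_2^2=\langle a+b,a-b\rangle$, where $a-b=M_{\mathrm{Mah}}(\phi(z)-\phi(\bar z))$.

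By Cauchy--Schwarz, $|c_{\mathrm{sqMah}}(z,z')-c_{\mathrm{sqMah}}(\bar z,z')|\le(\|a\|_2+\|b\|_2)\|a-b\|_2$. Each of $\|a\|_2,\|b\|_2$ is at most $\|M_{\mathrm{Mah}}\|_{\mathrm{op}}\cdot 2R_\phi$, using the triangle inequality and the bound $\|\phi\|_2\le R_\phi$ on $\mathcal S_\mu\cup\mathcal S_\nu$. Also, $\|a-b\|_2\le\|M_{\mathrm{Mah}}\|_{\mathrm{op}}L_\phi\rho(z,\bar z)$ by the Lipschitz assumption on $\phi$. Multiplying these gives $4R_\phi\|M_{\mathrm{Mah}}\|_{\mathrm{op}}^2L_\phi\rho(z,\bar z)$. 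The second coordinate is identical by symmetry, with $a-b=M_{\mathrm{Mah}}(\phi(\bar z')-\phi(z'))$ up to sign.

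Invoking Theorem~\ref{thm:wasserstein_stability} then yields the bound $4R_\phi\|M_{\mathrm{Mah}}\|_{\mathrm{op}}^2L_\phi\,\delta_\rho$. For the range bound, $c_{\mathrm{sqMah}}(z,z')\le\|M_{\mathrm{Mah}}\|_{\mathrm{op}}^2(\|\phi(z)\|_2+\|\phi(z')\|_2)^2\le 4R_\phi^2\|M_{\mathrm{Mah}}\|_{\mathrm{op}}^2$. Dividing gives $\overline K_{\mathrm{sqMah}}=L_\phi/R_\phi$.

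There is no real obstacle here. The only care needed is to keep every point in the relevant supports, so that the bounds $R_\phi$ and $L_\phi$ apply. The factor $\|M_{\mathrm{Mah}}\|_{\mathrm{op}}^2$ then appears exactly once, from one operator norm in $\|a\|_2+\|b\|_2$ and one in $\|a-b\|_2$.
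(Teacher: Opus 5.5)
Your proposal is correct and follows the paper's proof essentially step for step. Both arguments use the polarization identity with the same $a,b$, Cauchy--Schwarz, the bound $\|a\|_2,\|b\|_2\le 2R_\phi\|M_{\mathrm{Mah}}\|_{\mathrm{op}}$, the bound $\|a-b\|_2\le\|M_{\mathrm{Mah}}\|_{\mathrm{op}}L_\phi\rho$, then Theorem~\ref{thm:wasserstein_stability} and the range bound $(2R_\phi)^2\|M_{\mathrm{Mah}}\|_{\mathrm{op}}^2$.
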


\begin{proof}
For the first coordinate, again using
$\|a\|_2^2-\|b\|_2^2=\langle a+b,a-b\rangle$ with
$a=M_{\mathrm{Mah}}(\phi(z)-\phi(z'))$ and
$b=M_{\mathrm{Mah}}(\phi(\bar z)-\phi(z'))$, we get
\[
\begin{aligned}
&|c_{\mathrm{sqMah}}(z,z')-c_{\mathrm{sqMah}}(\bar z,z')|\\
&=
\left|
\|M_{\mathrm{Mah}}(\phi(z)-\phi(z'))\|_2^2
-
\|M_{\mathrm{Mah}}(\phi(\bar z)-\phi(z'))\|_2^2
\right|\\
&\le
\left(
\|M_{\mathrm{Mah}}(\phi(z)-\phi(z'))\|_2
+
\|M_{\mathrm{Mah}}(\phi(\bar z)-\phi(z'))\|_2
\right)
\|M_{\mathrm{Mah}}(\phi(z)-\phi(\bar z))\|_2\\
&\le
\left(
\|M_{\mathrm{Mah}}\|_{\mathrm{op}}\|\phi(z)-\phi(z')\|_2
+
\|M_{\mathrm{Mah}}\|_{\mathrm{op}}\|\phi(\bar z)-\phi(z')\|_2
\right)
\|M_{\mathrm{Mah}}\|_{\mathrm{op}}\|\phi(z)-\phi(\bar z)\|_2\\
&\le
\left(
2R_\phi\|M_{\mathrm{Mah}}\|_{\mathrm{op}}
+
2R_\phi\|M_{\mathrm{Mah}}\|_{\mathrm{op}}
\right)
\|M_{\mathrm{Mah}}\|_{\mathrm{op}}L_\phi\rho(z,\bar z)\\
&=
4R_\phi\|M_{\mathrm{Mah}}\|_{\mathrm{op}}^2L_\phi\rho(z,\bar z).
\end{aligned}
\]
The same argument in the second coordinate gives
\[
|c_{\mathrm{sqMah}}(z,z')-c_{\mathrm{sqMah}}(z,\bar z')|
\le
4R_\phi\|M_{\mathrm{Mah}}\|_{\mathrm{op}}^2L_\phi\rho(z',\bar z').
\]
Therefore $c_{\mathrm{sqMah}}$ is coordinate-wise Lipschitz with
$L_x=L_y=4R_\phi\|M_{\mathrm{Mah}}\|_{\mathrm{op}}^2L_\phi$, giving the stated
stability bound. For the range,
\[
\begin{aligned}
c_{\mathrm{sqMah}}(z,z')
&=
\|M_{\mathrm{Mah}}(\phi(z)-\phi(z'))\|_2^2\\
&\le
\|M_{\mathrm{Mah}}\|_{\mathrm{op}}^2\|\phi(z)-\phi(z')\|_2^2\\
&\le
\|M_{\mathrm{Mah}}\|_{\mathrm{op}}^2(2R_\phi)^2
=
4R_\phi^2\|M_{\mathrm{Mah}}\|_{\mathrm{op}}^2.
\end{aligned}
\]
Thus
$\overline K_{\mathrm{sqMah}}
=
(4R_\phi\|M_{\mathrm{Mah}}\|_{\mathrm{op}}^2L_\phi)/
(4R_\phi^2\|M_{\mathrm{Mah}}\|_{\mathrm{op}}^2)
=
L_\phi/R_\phi$.
\end{proof}

\paragraph{Interpretation.}
Squared Mahalanobis behaves like squared Euclidean after normalization.

\begin{corollary}[Stability under cosine cost]
Define $\psi(z)=\phi(z)/\|\phi(z)\|_2$ and consider
$c_{\mathrm{cos}}(z,z')=1-\langle\psi(z),\psi(z')\rangle$. Then
$\left|W_{c_{\mathrm{cos}}}(\mu,\nu)-W_{c_{\mathrm{cos}}}(\widetilde\mu,\widetilde\nu)\right|
\le (2L_\phi/r_\phi)\delta_\rho$. Hence
$K_{\mathrm{cos}}=2L_\phi/r_\phi$. Since
$c_{\mathrm{cos}}(z,z')\in[0,2]$, we may take
$B_{\mathrm{cos}}=2$ and $\overline K_{\mathrm{cos}}=L_\phi/r_\phi$.
\end{corollary}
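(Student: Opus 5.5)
The plan follows the same template as the Euclidean and Mahalanobis corollaries. We verify that $c_{\mathrm{cos}}$ is coordinate-wise Lipschitz on $\mathcal S_\mu\times\mathcal S_\nu$ with $L_x=L_y=2L_\phi/r_\phi$. We then invoke Theorem~\ref{thm:wasserstein_stability} to get the bound $(2L_\phi/r_\phi)\delta_\rho$, and finally read off the range bound and the normalized constant.

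\emph{Step 1 (Lipschitz bound for the normalization map).} The main obstacle is controlling $\psi=\phi/\|\phi\|_2$, which is where the lower bound $r_\phi$ enters. For $u=\phi(z)$ and $v=\phi(\bar z)$ with $\|u\|_2,\|v\|_2\ge r_\phi$, I would write
\[
\frac{u}{\|u\|_2}-\frac{v}{\|v\|_2}
=\frac{u-v}{\|u\|_2}+v\Bigl(\frac{1}{\|u\|_2}-\frac{1}{\|v\|_2}\Bigr).
\]
The first term has norm at most $\|u-v\|_2/r_\phi$. The second has norm $\bigl|\|v\|_2-\|u\|_2\bigr|/\|u\|_2\le\|u-v\|_2/r_\phi$, by the reverse triangle inequality. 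Together these give
\[
\|\psi(z)-\psi(\bar z)\|_2\le\frac{2}{r_\phi}\|\phi(z)-\phi(\bar z)\|_2\le\frac{2L_\phi}{r_\phi}\rho(z,\bar z)
\]
on $\mathcal S_\mu\cup\mathcal S_\nu$. The sharper constant $1/r_\phi$ is available, but it is not needed to match the stated $K_{\mathrm{cos}}$.

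\emph{Step 2 (coordinate-wise Lipschitzness).} For the first coordinate,
\[
|c_{\mathrm{cos}}(z,z')-c_{\mathrm{cos}}(\bar z,z')|=|\langle\psi(z)-\psi(\bar z),\psi(z')\rangle|\le\|\psi(z)-\psi(\bar z)\|_2,
\]
by Cauchy--Schwarz and $\|\psi(z')\|_2=1$. Step 1 then bounds this by $(2L_\phi/r_\phi)\rho(z,\bar z)$. The second coordinate is symmetric. Applying Theorem~\ref{thm:wasserstein_stability} with $L_x=L_y=2L_\phi/r_\phi$ gives the displayed inequality and $K_{\mathrm{cos}}=2L_\phi/r_\phi$.

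\emph{Step 3 (range and normalization).} Since $\psi(z)$ and $\psi(z')$ are unit vectors, $\langle\psi(z),\psi(z')\rangle\in[-1,1]$, so $c_{\mathrm{cos}}\in[0,2]$. We may therefore take $B_{\mathrm{cos}}=2$, which gives $\overline K_{\mathrm{cos}}=K_{\mathrm{cos}}/B_{\mathrm{cos}}=L_\phi/r_\phi$.
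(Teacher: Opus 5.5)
Your proposal is correct and matches the paper's proof essentially step for step. Both use the same splitting $u/\|u\|_2-v/\|v\|_2=(u-v)/\|u\|_2+v(1/\|u\|_2-1/\|v\|_2)$ for the $2L_\phi/r_\phi$ Lipschitz bound on $\psi$, Cauchy--Schwarz against the unit vector $\psi(z')$, Theorem~\ref{thm:wasserstein_stability}, and the range $[0,2]$ giving $B_{\mathrm{cos}}=2$. Your aside about the sharper constant $1/r_\phi$ is also right: for $\|u\|_2\ge r_\phi$, the map $u\mapsto r_\phi u/\|u\|_2$ is the nonexpansive metric projection onto the ball of radius $r_\phi$. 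Neither you nor the paper needs it, since the stated constant is $2L_\phi/r_\phi$.
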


\begin{proof}
First, for any $u,v$ with $\|u\|_2,\|v\|_2\ge r_\phi$,
\[
\begin{aligned}
\left\|
\frac{u}{\|u\|_2}
-
\frac{v}{\|v\|_2}
\right\|_2
&=
\left\|
\frac{u-v}{\|u\|_2}
+
v\left(\frac{1}{\|u\|_2}-\frac{1}{\|v\|_2}\right)
\right\|_2\\
&\le
\frac{\|u-v\|_2}{\|u\|_2}
+
\|v\|_2
\frac{\left|\|v\|_2-\|u\|_2\right|}
{\|u\|_2\|v\|_2}\\
&\le
\frac{\|u-v\|_2}{r_\phi}
+
\frac{\left|\|v\|_2-\|u\|_2\right|}{r_\phi}\\
&\le
\frac{\|u-v\|_2}{r_\phi}
+
\frac{\|u-v\|_2}{r_\phi}
=
\frac{2\|u-v\|_2}{r_\phi}.
\end{aligned}
\]
Applying this with $u=\phi(z)$ and $v=\phi(\bar z)$ gives
\[
\|\psi(z)-\psi(\bar z)\|_2
\le
\frac{2\|\phi(z)-\phi(\bar z)\|_2}{r_\phi}
\le
\frac{2L_\phi}{r_\phi}\rho(z,\bar z).
\]
Hence, for the first coordinate,
\[
\begin{aligned}
|c_{\mathrm{cos}}(z,z')-c_{\mathrm{cos}}(\bar z,z')|
&=
\left|
1-\langle\psi(z),\psi(z')\rangle
-
\left(1-\langle\psi(\bar z),\psi(z')\rangle\right)
\right|\\
&=
\left|
\langle\psi(\bar z)-\psi(z),\psi(z')\rangle
\right|\\
&\le
\|\psi(\bar z)-\psi(z)\|_2\|\psi(z')\|_2\\
&=
\|\psi(\bar z)-\psi(z)\|_2\\
&\le
\frac{2L_\phi}{r_\phi}\rho(z,\bar z).
\end{aligned}
\]
The second coordinate is identical. Thus
$L_x=L_y=2L_\phi/r_\phi$, and Theorem~\ref{thm:wasserstein_stability} gives
the stability bound. Finally, since $\psi(z)$ and $\psi(z')$ are unit vectors,
\[
-1\le \langle\psi(z),\psi(z')\rangle\le 1
\quad\Longrightarrow\quad
0\le c_{\mathrm{cos}}(z,z')\le 2.
\]
Thus $B_{\mathrm{cos}}=2$ and
$\overline K_{\mathrm{cos}}=(2L_\phi/r_\phi)/2=L_\phi/r_\phi$.
\end{proof}

\paragraph{Interpretation.}
Cosine cost becomes sensitive when representation norms approach zero; this is
captured by the factor \(1/r_\phi\).

\begin{corollary}[Stability under pure flow-direction cost]
Consider
$c_{\mathrm{dir}}(z,z')=M^{-1}\sum_{m=1}^{M}\|q_m(z)-q_m(z')\|_2$. Then
$\left|W_{c_{\mathrm{dir}}}(\mu,\nu)-W_{c_{\mathrm{dir}}}(\widetilde\mu,\widetilde\nu)\right|
\le L_u\delta_\rho$. Hence $K_{\mathrm{dir}}=L_u$. Since
$c_{\mathrm{dir}}(z,z')\le2$, we may take $B_{\mathrm{dir}}=2$ and
$\overline K_{\mathrm{dir}}=L_u/2$.
\end{corollary}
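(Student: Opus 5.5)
The plan is to follow the template of the preceding corollaries: check that $c_{\mathrm{dir}}$ is coordinate-wise Lipschitz on $\mathcal S_\mu\times\mathcal S_\nu$ with $L_x=L_y=L_u$, invoke Theorem~\ref{thm:wasserstein_stability}, and then bound the range separately.

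\emph{First coordinate.} Fix $z,\bar z\in\mathcal S_\mu$ and $z'\in\mathcal S_\nu$. For each $m$, the reverse triangle inequality gives
\[
\bigl|\|q_m(z)-q_m(z')\|_2-\|q_m(\bar z)-q_m(z')\|_2\bigr|\le\|q_m(z)-q_m(\bar z)\|_2.
\]
Averaging over $m$ and applying the triangle inequality for the absolute value of a sum yields
\[
|c_{\mathrm{dir}}(z,z')-c_{\mathrm{dir}}(\bar z,z')|\le M^{-1}\sum_{m=1}^M\|q_m(z)-q_m(\bar z)\|_2.
\]
Each summand is at most $L_u\rho(z,\bar z)$ by the assumed $L_u$-Lipschitz property of $q_m$ on the relevant support. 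The average is therefore at most $L_u\rho(z,\bar z)$; the factor $M^{-1}$ exactly cancels the $M$ terms.

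\emph{Second coordinate.} The same argument applies with the roles of the arguments swapped, giving $L_y=L_u$.

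\emph{Stability bound.} Theorem~\ref{thm:wasserstein_stability} then gives
\[
|W_{c_{\mathrm{dir}}}(\mu,\nu)-W_{c_{\mathrm{dir}}}(\widetilde\mu,\widetilde\nu)|\le L_u\delta_\rho,
\]
so $K_{\mathrm{dir}}=L_u$.

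\emph{Range bound.} Since $\|q_m(z)\|_2\le 1$, each term satisfies $\|q_m(z)-q_m(z')\|_2\le\|q_m(z)\|_2+\|q_m(z')\|_2\le 2$. Hence the average satisfies $c_{\mathrm{dir}}\le 2$, so $B_{\mathrm{dir}}=2$ and $\overline K_{\mathrm{dir}}=L_u/2$.

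There is no real obstacle here. The only points needing care are two. First, the Lipschitz hypothesis on $q_m$ must hold on $\mathcal S_\mu\cup\mathcal S_\nu$, which is exactly where the theorem uses it. Second, the bound $\|q_m\|_2\le1$ follows from the normalization with $\varepsilon_{\mathrm{norm}}\ge0$, since $\|u\|_2/(\|u\|_2+\varepsilon_{\mathrm{norm}})\le1$.
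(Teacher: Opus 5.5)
Your proposal is correct and follows the paper's proof essentially step for step: you apply the reverse triangle inequality to each term, average over $m$ to obtain $L_x=L_y=L_u$, invoke Theorem~\ref{thm:wasserstein_stability}, and bound the range by $\|q_m(z)-q_m(z')\|_2\le\|q_m(z)\|_2+\|q_m(z')\|_2\le 2$. One small difference is that you derive $\|q_m\|_2\le 1$ from the normalization, whereas the paper simply takes it as an assumption.
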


\begin{proof}
For the first coordinate,
\[
\begin{aligned}
&|c_{\mathrm{dir}}(z,z')-c_{\mathrm{dir}}(\bar z,z')|\\
&=
\left|
\frac1M\sum_{m=1}^{M}\|q_m(z)-q_m(z')\|_2
-
\frac1M\sum_{m=1}^{M}\|q_m(\bar z)-q_m(z')\|_2
\right|\\
&\le
\frac1M\sum_{m=1}^{M}
\left|
\|q_m(z)-q_m(z')\|_2
-
\|q_m(\bar z)-q_m(z')\|_2
\right|\\
&\le
\frac1M\sum_{m=1}^{M}
\|(q_m(z)-q_m(z'))-(q_m(\bar z)-q_m(z'))\|_2\\
&=
\frac1M\sum_{m=1}^{M}
\|q_m(z)-q_m(\bar z)\|_2\\
&\le
\frac1M\sum_{m=1}^{M}
L_u\rho(z,\bar z)
=
L_u\rho(z,\bar z).
\end{aligned}
\]
The same argument gives
\[
|c_{\mathrm{dir}}(z,z')-c_{\mathrm{dir}}(z,\bar z')|
\le
L_u\rho(z',\bar z').
\]
Thus $L_x=L_y=L_u$, and Theorem~\ref{thm:wasserstein_stability} gives
\[
\left|
W_{c_{\mathrm{dir}}}(\mu,\nu)
-
W_{c_{\mathrm{dir}}}(\widetilde\mu,\widetilde\nu)
\right|
\le
L_u\delta_\rho.
\]
For the range,
\[
\begin{aligned}
c_{\mathrm{dir}}(z,z')
&=
\frac1M\sum_{m=1}^{M}\|q_m(z)-q_m(z')\|_2\\
&\le
\frac1M\sum_{m=1}^{M}
(\|q_m(z)\|_2+\|q_m(z')\|_2)\\
&\le
\frac1M\sum_{m=1}^{M}2
=
2.
\end{aligned}
\]
Hence $B_{\mathrm{dir}}=2$ and
$\overline K_{\mathrm{dir}}=L_u/2$.
\end{proof}

\paragraph{Interpretation.}
Pure flow-direction stability is controlled only by the smoothness \(L_u\) of
the normalized velocity field.

\begin{corollary}[Stability under squared pure flow-direction cost]
Consider
$c_{\mathrm{dir}}^{(2)}(z,z')=M^{-1}\sum_{m=1}^{M}\|q_m(z)-q_m(z')\|_2^2$. Then
$\left|W_{c_{\mathrm{dir}}^{(2)}}(\mu,\nu)-W_{c_{\mathrm{dir}}^{(2)}}(\widetilde\mu,\widetilde\nu)\right|
\le4L_u\delta_\rho$. Hence $K_{\mathrm{dir}}^{(2)}=4L_u$. Since
$c_{\mathrm{dir}}^{(2)}(z,z')\le4$, we may take
$B_{\mathrm{dir}}^{(2)}=4$ and $\overline K_{\mathrm{dir}}^{(2)}=L_u$.
\end{corollary}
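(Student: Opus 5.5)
The plan is to follow the same template as the preceding corollaries: verify that $c_{\mathrm{dir}}^{(2)}$ is coordinate-wise Lipschitz on $\mathcal S_\mu\times\mathcal S_\nu$ with $L_x=L_y=4L_u$, invoke Theorem~\ref{thm:wasserstein_stability}, and then bound the range of the cost to get $B_{\mathrm{dir}}^{(2)}=4$.

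For the first coordinate, fix $m$ and use the identity $\|a\|_2^2-\|b\|_2^2=\langle a+b,a-b\rangle$ with $a=q_m(z)-q_m(z')$ and $b=q_m(\bar z)-q_m(z')$. Then $a-b=q_m(z)-q_m(\bar z)$, and Cauchy--Schwarz gives
\[
\bigl|\|a\|_2^2-\|b\|_2^2\bigr|\le(\|a\|_2+\|b\|_2)\,\|q_m(z)-q_m(\bar z)\|_2 .
\]
The normalization assumption $\|q_m(\cdot)\|_2\le1$ yields $\|a\|_2,\|b\|_2\le2$, so the prefactor is at most $4$. The $L_u$-Lipschitz assumption on $q_m$ then bounds each term by $4L_u\rho(z,\bar z)$. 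Averaging over $m=1,\dots,M$ with the triangle inequality preserves this bound, so $|c_{\mathrm{dir}}^{(2)}(z,z')-c_{\mathrm{dir}}^{(2)}(\bar z,z')|\le4L_u\rho(z,\bar z)$. The second coordinate is handled by the symmetric computation.

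Theorem~\ref{thm:wasserstein_stability} with $L_x=L_y=4L_u$ then gives the stated bound $4L_u\delta_\rho$, so $K_{\mathrm{dir}}^{(2)}=4L_u$. For the range, $\|q_m(z)-q_m(z')\|_2^2\le(\|q_m(z)\|_2+\|q_m(z')\|_2)^2\le4$, and averaging gives $c_{\mathrm{dir}}^{(2)}\le4$. Hence $B_{\mathrm{dir}}^{(2)}=4$ and $\overline K_{\mathrm{dir}}^{(2)}=4L_u/4=L_u$.

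There is no real obstacle here. The only point needing care is using the unit-norm bound $\|q_m\|_2\le1$ to control the factor $\|a\|_2+\|b\|_2$ by $4$. This factor is what doubles the normalized constant relative to the unsquared directional cost, where $\overline K_{\mathrm{dir}}=L_u/2$.
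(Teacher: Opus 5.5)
Your proposal is correct and follows the paper's proof essentially step for step. Both use the identity $\|a\|_2^2-\|b\|_2^2=\langle a+b,a-b\rangle$ with Cauchy--Schwarz, the unit-norm bound $\|q_m\|_2\le1$ to get the prefactor $4$, and averaging over $m$ to obtain $L_x=L_y=4L_u$; then Theorem~\ref{thm:wasserstein_stability} and the range bound $(\|q_m(z)\|_2+\|q_m(z')\|_2)^2\le4$ give $B_{\mathrm{dir}}^{(2)}=4$ and $\overline K_{\mathrm{dir}}^{(2)}=L_u$.
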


\begin{proof}
For each $m$, using again
$\|a\|_2^2-\|b\|_2^2=\langle a+b,a-b\rangle$ with
$a=q_m(z)-q_m(z')$ and $b=q_m(\bar z)-q_m(z')$, we obtain
\[
\begin{aligned}
&\left|
\|q_m(z)-q_m(z')\|_2^2
-
\|q_m(\bar z)-q_m(z')\|_2^2
\right|\\
&=
\left|
\left\langle
q_m(z)-q_m(z')+q_m(\bar z)-q_m(z'),
q_m(z)-q_m(\bar z)
\right\rangle
\right|\\
&\le
\left(
\|q_m(z)-q_m(z')\|_2
+
\|q_m(\bar z)-q_m(z')\|_2
\right)
\|q_m(z)-q_m(\bar z)\|_2\\
&\le
\left(
\|q_m(z)\|_2+\|q_m(z')\|_2
+
\|q_m(\bar z)\|_2+\|q_m(z')\|_2
\right)
L_u\rho(z,\bar z)\\
&\le
(1+1+1+1)L_u\rho(z,\bar z)
=
4L_u\rho(z,\bar z).
\end{aligned}
\]
Averaging over \(m\) gives the first-coordinate Lipschitz bound:
\[
\begin{aligned}
|c_{\mathrm{dir}}^{(2)}(z,z')-c_{\mathrm{dir}}^{(2)}(\bar z,z')|
&\le
\frac1M\sum_{m=1}^{M}4L_u\rho(z,\bar z)
=
4L_u\rho(z,\bar z).
\end{aligned}
\]
The second coordinate is identical. Thus $L_x=L_y=4L_u$, and the stability
bound follows from Theorem~\ref{thm:wasserstein_stability}. For the range,
\[
\begin{aligned}
c_{\mathrm{dir}}^{(2)}(z,z')
&=
\frac1M\sum_{m=1}^{M}\|q_m(z)-q_m(z')\|_2^2\\
&\le
\frac1M\sum_{m=1}^{M}(\|q_m(z)\|_2+\|q_m(z')\|_2)^2\\
&\le
\frac1M\sum_{m=1}^{M}4
=
4.
\end{aligned}
\]
Hence $B_{\mathrm{dir}}^{(2)}=4$ and
$\overline K_{\mathrm{dir}}^{(2)}=4L_u/4=L_u$.
\end{proof}

\paragraph{Interpretation.}
Squaring the directional discrepancy increases the normalized stability constant
from \(L_u/2\) to \(L_u\).

\begin{corollary}[Stability under composite unsquared flow cost]
Consider
$c_{\mathrm{comp}}(z,z')=\lambda_xc_{\mathrm{Euc}}(z,z')+\lambda_uc_{\mathrm{dir}}(z,z')$
with $\lambda_x,\lambda_u\ge0$. Then
$\left|W_{c_{\mathrm{comp}}}(\mu,\nu)-W_{c_{\mathrm{comp}}}(\widetilde\mu,\widetilde\nu)\right|
\le(\lambda_xL_\phi+\lambda_uL_u)\delta_\rho$. Hence
$K_{\mathrm{comp}}=\lambda_xL_\phi+\lambda_uL_u$. Moreover,
$B_{\mathrm{comp}}=2(\lambda_xR_\phi+\lambda_u)$ and
$\overline K_{\mathrm{comp}}
=(\lambda_xL_\phi+\lambda_uL_u)/(2(\lambda_xR_\phi+\lambda_u))$.
\end{corollary}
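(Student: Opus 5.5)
The argument follows the same template as the preceding corollaries: we verify that $c_{\mathrm{comp}}$ is coordinate-wise Lipschitz on the relevant support, invoke Theorem~\ref{thm:wasserstein_stability}, and then bound the range of the cost.

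For the Lipschitz step, fix $z,\bar z\in\mathcal S_\mu$ and $z'\in\mathcal S_\nu$. Linearity of $c_{\mathrm{comp}}$ in its two components and the triangle inequality give
\[
|c_{\mathrm{comp}}(z,z')-c_{\mathrm{comp}}(\bar z,z')|
\le
\lambda_x|c_{\mathrm{Euc}}(z,z')-c_{\mathrm{Euc}}(\bar z,z')|
+\lambda_u|c_{\mathrm{dir}}(z,z')-c_{\mathrm{dir}}(\bar z,z')|.
\]
Here we use $\lambda_x,\lambda_u\ge0$, so that $|\lambda a|=\lambda|a|$. The proofs of the Euclidean and pure flow-direction corollaries already show that these two differences are bounded by $L_\phi\rho(z,\bar z)$ and $L_u\rho(z,\bar z)$, respectively. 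Hence $L_x=\lambda_xL_\phi+\lambda_uL_u$. The argument for the second coordinate is symmetric and gives the same value for $L_y$. Theorem~\ref{thm:wasserstein_stability} then yields
\[
\bigl|W_{c_{\mathrm{comp}}}(\mu,\nu)-W_{c_{\mathrm{comp}}}(\widetilde\mu,\widetilde\nu)\bigr|\le(\lambda_xL_\phi+\lambda_uL_u)\bigl(W_\rho(\mu,\widetilde\mu)+W_\rho(\nu,\widetilde\nu)\bigr)=(\lambda_xL_\phi+\lambda_uL_u)\delta_\rho .
\]

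For the range bound, we combine the two range bounds already established: $c_{\mathrm{Euc}}\le 2R_\phi$ and $c_{\mathrm{dir}}\le 2$ on $\mathcal S_\mu\times\mathcal S_\nu$. This gives $c_{\mathrm{comp}}\le 2\lambda_xR_\phi+2\lambda_u=2(\lambda_xR_\phi+\lambda_u)=B_{\mathrm{comp}}$. Dividing $K_{\mathrm{comp}}$ by $B_{\mathrm{comp}}$ then gives the stated $\overline K_{\mathrm{comp}}$.

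There is no real obstacle here; the only point of care is the degenerate case $\lambda_x=\lambda_u=0$. In that case $B_{\mathrm{comp}}=0$ and the normalized constant is undefined. We will note that the ratio is taken only when $\lambda_xR_\phi+\lambda_u>0$, whereas the raw stability bound holds trivially for all $\lambda_x,\lambda_u\ge0$.
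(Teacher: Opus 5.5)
Your proposal is correct and follows the paper's proof essentially step for step. Like the paper, you split via the triangle inequality using $\lambda_x,\lambda_u\ge0$, reuse the Euclidean and flow-direction Lipschitz and range bounds, apply Theorem~\ref{thm:wasserstein_stability}, and sum the range bounds. Your remark that $\overline K_{\mathrm{comp}}$ is undefined when $\lambda_x=\lambda_u=0$ is a small refinement that the paper leaves implicit.
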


\begin{proof}
For the first coordinate, using nonnegativity of \(\lambda_x,\lambda_u\) and the
coordinate-wise bounds already proved,
\[
\begin{aligned}
&|c_{\mathrm{comp}}(z,z')-c_{\mathrm{comp}}(\bar z,z')|\\
&=
\left|
\lambda_x[c_{\mathrm{Euc}}(z,z')-c_{\mathrm{Euc}}(\bar z,z')]
+
\lambda_u[c_{\mathrm{dir}}(z,z')-c_{\mathrm{dir}}(\bar z,z')]
\right|\\
&\le
\lambda_x|c_{\mathrm{Euc}}(z,z')-c_{\mathrm{Euc}}(\bar z,z')|
+
\lambda_u|c_{\mathrm{dir}}(z,z')-c_{\mathrm{dir}}(\bar z,z')|\\
&\le
\lambda_xL_\phi\rho(z,\bar z)
+
\lambda_uL_u\rho(z,\bar z)\\
&=
(\lambda_xL_\phi+\lambda_uL_u)\rho(z,\bar z).
\end{aligned}
\]
The same argument in the second coordinate gives the same constant. Hence
\[
L_x=L_y=\lambda_xL_\phi+\lambda_uL_u,
\]
and Theorem~\ref{thm:wasserstein_stability} gives the claimed stability bound.
For the range,
\[
\begin{aligned}
c_{\mathrm{comp}}(z,z')
&=
\lambda_xc_{\mathrm{Euc}}(z,z')+\lambda_uc_{\mathrm{dir}}(z,z')\\
&\le
\lambda_x(2R_\phi)+\lambda_u(2)
=
2(\lambda_xR_\phi+\lambda_u).
\end{aligned}
\]
Thus
\[
\overline K_{\mathrm{comp}}
=
\frac{\lambda_xL_\phi+\lambda_uL_u}
{2(\lambda_xR_\phi+\lambda_u)}.
\]
\end{proof}

\paragraph{Interpretation.}
The composite unsquared cost interpolates between positional stability and
directional stability.

\begin{corollary}[Stability under composite squared flow cost]
Consider
$c_{\mathrm{flow}}(z,z')=\lambda_xc_{\mathrm{sq}}(z,z')+\lambda_uc_{\mathrm{dir}}^{(2)}(z,z')$
with $\lambda_x,\lambda_u\ge0$. Then
$\left|W_{c_{\mathrm{flow}}}(\mu,\nu)-W_{c_{\mathrm{flow}}}(\widetilde\mu,\widetilde\nu)\right|
\le(4\lambda_xR_\phi L_\phi+4\lambda_uL_u)\delta_\rho$. Hence
$K_{\mathrm{flow}}=4\lambda_xR_\phi L_\phi+4\lambda_uL_u$. Moreover,
$B_{\mathrm{flow}}=4(\lambda_xR_\phi^2+\lambda_u)$ and
$\overline K_{\mathrm{flow}}
=(\lambda_xR_\phi L_\phi+\lambda_uL_u)/(\lambda_xR_\phi^2+\lambda_u)$.
\end{corollary}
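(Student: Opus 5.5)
The plan is the same template as the preceding corollaries: verify that $c_{\mathrm{flow}}$ is coordinate-wise Lipschitz on $\mathcal S_\mu\times\mathcal S_\nu$ with $L_x=L_y=4\lambda_xR_\phi L_\phi+4\lambda_uL_u$, then invoke Theorem~\ref{thm:wasserstein_stability}. Finally, compute the range bound $B_{\mathrm{flow}}$ and take the ratio $K_{\mathrm{flow}}/B_{\mathrm{flow}}$. No new transport argument is needed; the work is all at the level of the ground cost.

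\textbf{Lipschitz step.} For the first coordinate, fix $z,\bar z\in\mathcal S_\mu$ and $z'\in\mathcal S_\nu$. Expand
\[
c_{\mathrm{flow}}(z,z')-c_{\mathrm{flow}}(\bar z,z')
=\lambda_x[c_{\mathrm{sq}}(z,z')-c_{\mathrm{sq}}(\bar z,z')]+\lambda_u[c^{(2)}_{\mathrm{dir}}(z,z')-c^{(2)}_{\mathrm{dir}}(\bar z,z')],
\]
and apply the triangle inequality, using $\lambda_x,\lambda_u\ge0$. The two brackets are then controlled by the coordinate-wise bounds already established in the proofs of the squared Euclidean corollary ($4R_\phi L_\phi\rho(z,\bar z)$) and the squared flow-direction corollary ($4L_u\rho(z,\bar z)$). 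Summing gives $(4\lambda_xR_\phi L_\phi+4\lambda_uL_u)\rho(z,\bar z)$. The second coordinate is handled identically, by symmetry of both component costs and the symmetric form of those earlier bounds.

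\textbf{Stability and range.} Theorem~\ref{thm:wasserstein_stability} with $L_x=L_y$ then yields
\[
|W_{c_{\mathrm{flow}}}(\mu,\nu)-W_{c_{\mathrm{flow}}}(\widetilde\mu,\widetilde\nu)|\le(4\lambda_xR_\phi L_\phi+4\lambda_uL_u)\delta_\rho,
\]
so $K_{\mathrm{flow}}$ is as claimed. For the range, combine the on-support bounds $c_{\mathrm{sq}}\le4R_\phi^2$ and $c^{(2)}_{\mathrm{dir}}\le4$ from those corollaries to get $c_{\mathrm{flow}}\le4(\lambda_xR_\phi^2+\lambda_u)$. Dividing, the factor $4$ cancels and gives $\overline K_{\mathrm{flow}}=(\lambda_xR_\phi L_\phi+\lambda_uL_u)/(\lambda_xR_\phi^2+\lambda_u)$.

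The only real care point is that every bound must be applied on the relevant supports, where the assumptions $\|\phi\|_2\le R_\phi$ and $\|q_m\|_2\le1$ hold. The normalized constant also implicitly requires $\lambda_xR_\phi^2+\lambda_u>0$, i.e., not both weights zero with $R_\phi=0$. I would state that nondegeneracy explicitly; otherwise the argument is routine.
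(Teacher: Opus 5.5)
Your proposal is correct and follows the paper's proof essentially step for step. Like the paper, you use the triangle inequality with $\lambda_x,\lambda_u\ge0$ to reduce the coordinate-wise Lipschitz bound to the squared-Euclidean and squared flow-direction bounds, obtain $K_{\mathrm{flow}}$ from Theorem~\ref{thm:wasserstein_stability}, and sum the component range bounds to get $B_{\mathrm{flow}}$ and the ratio; your explicit nondegeneracy condition $\lambda_xR_\phi^2+\lambda_u>0$ is a small clarification the paper leaves implicit.
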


\begin{proof}
For the first coordinate,
\[
\begin{aligned}
&|c_{\mathrm{flow}}(z,z')-c_{\mathrm{flow}}(\bar z,z')|\\
&=
\left|
\lambda_x[c_{\mathrm{sq}}(z,z')-c_{\mathrm{sq}}(\bar z,z')]
+
\lambda_u[c_{\mathrm{dir}}^{(2)}(z,z')-c_{\mathrm{dir}}^{(2)}(\bar z,z')]
\right|\\
&\le
\lambda_x|c_{\mathrm{sq}}(z,z')-c_{\mathrm{sq}}(\bar z,z')|
+
\lambda_u|c_{\mathrm{dir}}^{(2)}(z,z')-c_{\mathrm{dir}}^{(2)}(\bar z,z')|\\
&\le
\lambda_x(4R_\phi L_\phi)\rho(z,\bar z)
+
\lambda_u(4L_u)\rho(z,\bar z)\\
&=
(4\lambda_xR_\phi L_\phi+4\lambda_uL_u)\rho(z,\bar z).
\end{aligned}
\]
The second coordinate is identical. Hence
$L_x=L_y=4\lambda_xR_\phi L_\phi+4\lambda_uL_u$, and the stability bound follows
from Theorem~\ref{thm:wasserstein_stability}. For the range,
\[
\begin{aligned}
c_{\mathrm{flow}}(z,z')
&=
\lambda_xc_{\mathrm{sq}}(z,z')+\lambda_uc_{\mathrm{dir}}^{(2)}(z,z')\\
&\le
\lambda_x(4R_\phi^2)+\lambda_u(4)
=
4(\lambda_xR_\phi^2+\lambda_u).
\end{aligned}
\]
Therefore,
\[
\overline K_{\mathrm{flow}}
=
\frac{4\lambda_xR_\phi L_\phi+4\lambda_uL_u}
{4(\lambda_xR_\phi^2+\lambda_u)}
=
\frac{\lambda_xR_\phi L_\phi+\lambda_uL_u}
{\lambda_xR_\phi^2+\lambda_u}.
\]
\end{proof}

\paragraph{Interpretation.}
The squared flow cost matches squared-distance geometries, but it has larger raw
stability constants than the corresponding unsquared composite cost.

\begin{table}[t]
\centering
\caption{Raw and normalized worst-case stability constants for common ground
costs on the shared reference space $\mathcal R$. The raw constant $K_c$ is
scale-dependent. The normalized constant $\overline K_c=K_c/B_c$ removes the
trivial cost-scaling effect using an on-support range bound $B_c$.}
\label{tab:normalized_stability_constants}
\begin{tabular}{llll}
\toprule
Cost & Raw constant $K_c$ & Range bound $B_c$ & Normalized constant $\overline K_c$ \\
\midrule
Euclidean &
$L_\phi$ &
$2R_\phi$ &
$\frac{L_\phi}{2R_\phi}$ \\

Squared Euclidean &
$4R_\phi L_\phi$ &
$4R_\phi^2$ &
$\frac{L_\phi}{R_\phi}$ \\

Mahalanobis &
$\|A\|_{\mathrm{op}}L_\phi$ &
$2R_\phi\|A\|_{\mathrm{op}}$ &
$\frac{L_\phi}{2R_\phi}$ \\

Squared Mahalanobis &
$4R_\phi\|A\|_{\mathrm{op}}^2L_\phi$ &
$4R_\phi^2\|A\|_{\mathrm{op}}^2$ &
$\frac{L_\phi}{R_\phi}$ \\

Cosine &
$\frac{2L_\phi}{r_\phi}$ &
$2$ &
$\frac{L_\phi}{r_\phi}$ \\

Pure flow-direction &
$L_u$ &
$2$ &
$\frac{L_u}{2}$ \\

Squared flow-direction &
$4L_u$ &
$4$ &
$L_u$ \\

Composite unsquared flow &
$\lambda_xL_\phi+\lambda_uL_u$ &
$2(\lambda_xR_\phi+\lambda_u)$ &
$\frac{\lambda_xL_\phi+\lambda_uL_u}
{2(\lambda_xR_\phi+\lambda_u)}$ \\

Composite squared flow &
$4\lambda_xR_\phi L_\phi+4\lambda_uL_u$ &
$4(\lambda_xR_\phi^2+\lambda_u)$ &
$\frac{\lambda_xR_\phi L_\phi+\lambda_uL_u}
{\lambda_xR_\phi^2+\lambda_u}$ \\
\bottomrule
\end{tabular}
\end{table}

\paragraph{Scale-invariant comparison for feature matching.}
Even normalized constants do not fully determine which cost is better for
feature matching, because the matching guarantee depends on both the stability
constant and the Wasserstein matching margin. For a ground cost $c$, define the
margin-normalized robustness ratio
$\chi_i(c):=K_c\delta_\rho/\Delta_i^c$, where $\Delta_i^c$ is the Wasserstein
matching margin under $c$. The nearest-neighbor match is certified whenever
$\chi_i(c)<1/2$. This ratio is invariant to positive cost rescaling: if
$c'=\alpha c$ with $\alpha>0$, then
\[
K_{c'}=\alpha K_c,\qquad
\Delta_i^{c'}=\alpha\Delta_i^c,\qquad
\chi_i(c')=\frac{\alpha K_c\delta_\rho}{\alpha\Delta_i^c}=\chi_i(c).
\]
Therefore, for feature matching, the meaningful comparison is not the raw
constant $K_c$, but the margin-normalized ratio $\chi_i(c)$.

\paragraph{Euclidean versus pure flow-direction stability.}
Using normalized constants, pure flow-direction cost is more stable than
Euclidean positional cost only when
$\overline K_{\mathrm{dir}}<\overline K_{\mathrm{Euc}}$. Since
\[
\overline K_{\mathrm{dir}}=\frac{L_u}{2},\qquad
\overline K_{\mathrm{Euc}}=\frac{L_\phi}{2R_\phi},
\]
this condition is equivalent to $L_u<L_\phi/R_\phi$. Thus, pure flow-direction
cost is more stable only when the normalized velocity field varies more smoothly
than the representation geometry after accounting for the representation scale
$R_\phi$.

\subsection{Stability Under Ground-Cost Perturbation}
\label{app:ground_cost_perturbation}

\noindent
\textbf{Theorem~\ref{thm:ground_cost_perturbation}
(Stability under ground-cost perturbation).}
\label{thm:ground_cost_perturbation}
\emph{
Let $\mu,\nu\in\mathcal P(\mathcal R)$ be projected feature distributions, and
let $c,\widetilde c:
\mathcal R\times\mathcal R
\to
\mathbb R_{\ge 0}$ be two ground costs. Define the relevant transport support by $\mathcal S_{\mu,\nu}
:=
\operatorname{supp}(\mu)\times\operatorname{supp}(\nu).$ Suppose $c$ and $\widetilde c$ are uniformly close on this support:
\[
\varepsilon_c
:=
\sup_{(z,z')\in\mathcal S_{\mu,\nu}}
\left|
c(z,z')-\widetilde c(z,z')
\right|
<
\infty.
\]
Then $\left|
W_c(\mu,\nu)
-
W_{\widetilde c}(\mu,\nu)
\right|
\le
\varepsilon_c.$
}

\begin{proof}
For any coupling $\pi\in\Pi(\mu,\nu)$, the support of $\pi$ is contained in
$\operatorname{supp}(\mu)\times\operatorname{supp}(\nu)$. Hence,
\[
\left|
\mathbb E_{\pi}[c(z,z')]
-
\mathbb E_{\pi}[\widetilde c(z,z')]
\right|
\le
\mathbb E_{\pi}
\left[
|c(z,z')-\widetilde c(z,z')|
\right]
\le
\varepsilon_c.
\]
Equivalently, for every $\pi\in\Pi(\mu,\nu)$,
\[
\mathbb E_{\pi}[c(z,z')]
\le
\mathbb E_{\pi}[\widetilde c(z,z')]
+
\varepsilon_c,
\qquad
\mathbb E_{\pi}[\widetilde c(z,z')]
\le
\mathbb E_{\pi}[c(z,z')]
+
\varepsilon_c.
\]

We first show
\[
W_c(\mu,\nu)-W_{\widetilde c}(\mu,\nu)\le \varepsilon_c.
\]
Let $\eta>0$, and choose an $\eta$-optimal coupling
$\widetilde\pi_\eta\in\Pi(\mu,\nu)$ for $W_{\widetilde c}(\mu,\nu)$, so that
\[
\mathbb E_{\widetilde\pi_\eta}[\widetilde c(z,z')]
\le
W_{\widetilde c}(\mu,\nu)+\eta.
\]
Since $\widetilde\pi_\eta$ is feasible for $W_c(\mu,\nu)$,
\[
\begin{aligned}
W_c(\mu,\nu)
&\le
\mathbb E_{\widetilde\pi_\eta}[c(z,z')]\\
&\le
\mathbb E_{\widetilde\pi_\eta}[\widetilde c(z,z')]
+
\varepsilon_c\\
&\le
W_{\widetilde c}(\mu,\nu)+\eta+\varepsilon_c.
\end{aligned}
\]
Taking $\eta\downarrow 0$ gives
\[
W_c(\mu,\nu)-W_{\widetilde c}(\mu,\nu)\le \varepsilon_c.
\]

The reverse inequality follows identically. Let $\pi_\eta\in\Pi(\mu,\nu)$ be an
$\eta$-optimal coupling for $W_c(\mu,\nu)$. Then
\[
\begin{aligned}
W_{\widetilde c}(\mu,\nu)
&\le
\mathbb E_{\pi_\eta}[\widetilde c(z,z')]\\
&\le
\mathbb E_{\pi_\eta}[c(z,z')]
+
\varepsilon_c\\
&\le
W_c(\mu,\nu)+\eta+\varepsilon_c.
\end{aligned}
\]
Taking $\eta\downarrow 0$ yields
\[
W_{\widetilde c}(\mu,\nu)-W_c(\mu,\nu)\le \varepsilon_c.
\]
Combining the two one-sided bounds,
\[
\left|
W_c(\mu,\nu)
-
W_{\widetilde c}(\mu,\nu)
\right|
\le
\varepsilon_c.
\]
\end{proof}

\paragraph{Interpretation.}
This theorem isolates the effect of changing the ground cost while keeping the
projected feature distributions fixed. In our method, the distributions
$\widehat{\mu}_{i,T}^{(A\to\mathcal R)}$ and
$\widehat{\nu}_{j,T}^{(B\to\mathcal R)}$ live in the same shared reference space
$\mathcal R$, but the cost used to compare their support points may be
Euclidean, flow-aware, learned, or approximated for computational reasons. The
theorem says that if the approximate cost $\widetilde c$ is uniformly close to
the intended cost $c$ on the actually transported support, then the resulting
Wasserstein score changes by at most the same amount. Thus, learned or
approximate ground costs are safe to use when their error is controlled on the
empirical support of the projected distributions.

\subsection{Matching Recovery Under Wasserstein Margin}
\label{app:matching_recovery}

\paragraph{Definitions.}
Fix a target feature $i\in\mathcal F^{(A)}$ and a source feature set
$\mathcal F^{(B)}$. For each source feature $j\in\mathcal F^{(B)}$, define the
oracle Wasserstein score in the shared reference space $\mathcal R$ by $D_{\mathcal R}(i,j)
=
W_c\!\left(
\mu_i^{(A\to\mathcal R)},
\nu_j^{(B\to\mathcal R)}
\right),$ where $\mu_i^{(A\to\mathcal R)}$ and $\nu_j^{(B\to\mathcal R)}$ denote the
population projected feature distributions of the target and source features.
The empirical score computed from finite projected activation distributions is
\[
\widehat D_{\mathcal R}(i,j)
=
W_c\!\left(
\widehat\mu_{i,T}^{(A\to\mathcal R)},
\widehat\nu_{j,T}^{(B\to\mathcal R)}
\right).
\]
Let $j_i^\star
=
\arg\min_{j\in\mathcal F^{(B)}}D_{\mathcal R}(i,j)$ be the oracle nearest-neighbor match of target feature $i$. We assume that this
minimizer is unique. The Wasserstein matching margin is
\[
\Delta_i
=
\min_{j\neq j_i^\star}
\left[
D_{\mathcal R}(i,j)
-
D_{\mathcal R}(i,j_i^\star)
\right].
\]
Thus, $\Delta_i$ measures how much closer the oracle match $j_i^\star$ is than
the nearest competing source feature. Finally, define the uniform empirical
score perturbation by
\[
\epsilon_i
=
\sup_{j\in\mathcal F^{(B)}}
\left|
\widehat D_{\mathcal R}(i,j)
-
D_{\mathcal R}(i,j)
\right|.
\]
This quantity upper-bounds the largest score error over all candidate source
features for the fixed target feature $i$.

\noindent
\textbf{Theorem~\ref{thm:matching_recovery}
(Matching recovery under Wasserstein margin).}
\emph{
Fix $i\in\mathcal F^{(A)}$ and suppose $j_i^\star$ is unique with
$\Delta_i>0$. If
\[
\sup_{j\in\mathcal F^{(B)}}
\left|
\widehat D_{\mathcal R}(i,j)
-
D_{\mathcal R}(i,j)
\right|
\le
\epsilon_i
\qquad
\text{and}
\qquad
\Delta_i>2\epsilon_i,
\]
then
\[
\arg\min_{j\in\mathcal F^{(B)}}
\widehat D_{\mathcal R}(i,j)
=
j_i^\star.
\]
}

\begin{proof}
Because $j_i^\star$ is the unique oracle minimizer, every competing source
$j\neq j_i^\star$ satisfies
\[
D_{\mathcal R}(i,j)-D_{\mathcal R}(i,j_i^\star)\ge \Delta_i.
\]
The uniform perturbation condition gives, for every $j\in\mathcal F^{(B)}$,
\[
-\epsilon_i
\le
\widehat D_{\mathcal R}(i,j)-D_{\mathcal R}(i,j)
\le
\epsilon_i.
\]
Equivalently,
\[
\widehat D_{\mathcal R}(i,j)\ge D_{\mathcal R}(i,j)-\epsilon_i,
\qquad
\widehat D_{\mathcal R}(i,j)\le D_{\mathcal R}(i,j)+\epsilon_i.
\]
Now fix any competing source $j\neq j_i^\star$. We compare its empirical score
with the empirical score of the oracle match by adding and subtracting the
oracle scores:
\[
\begin{aligned}
&
\widehat D_{\mathcal R}(i,j)
-
\widehat D_{\mathcal R}(i,j_i^\star)\\
&=
\left[
\widehat D_{\mathcal R}(i,j)
-
D_{\mathcal R}(i,j)
\right] +
\left[
D_{\mathcal R}(i,j)
-
D_{\mathcal R}(i,j_i^\star)
\right]
+
\left[
D_{\mathcal R}(i,j_i^\star)
-
\widehat D_{\mathcal R}(i,j_i^\star)
\right]\\
&\ge
\left[-\epsilon_i\right]
+
\Delta_i
+
\left[-\epsilon_i\right]\\
&=
\Delta_i-2\epsilon_i.
\end{aligned}
\]
If $\Delta_i>2\epsilon_i$, then
\[
\widehat D_{\mathcal R}(i,j)
-
\widehat D_{\mathcal R}(i,j_i^\star)
>
0,
\qquad
\forall j\neq j_i^\star.
\]
Hence every competing source has strictly larger empirical score than
$j_i^\star$:
\[
\widehat D_{\mathcal R}(i,j)
>
\widehat D_{\mathcal R}(i,j_i^\star),
\qquad
\forall j\neq j_i^\star.
\]
Therefore $j_i^\star$ remains the unique empirical nearest neighbor:
\[
\arg\min_{j\in\mathcal F^{(B)}}
\widehat D_{\mathcal R}(i,j)
=
j_i^\star.
\]
\end{proof}

\paragraph{Interpretation.}
This theorem converts score stability into decision stability. The margin
$\Delta_i$ measures how much the oracle best match is separated from the nearest
competing source. The perturbation $\epsilon_i$ controls how much any empirical
score can deviate from its oracle value. The factor $2$ appears because, in the
worst case, the empirical score of the true match can increase by $\epsilon_i$
while the score of a competitor can decrease by $\epsilon_i$. Thus the oracle
gap can shrink by at most $2\epsilon_i$.

\paragraph{Combined distribution and cost perturbations.}
The theorem above is stated in terms of an abstract score perturbation
$\epsilon_i$. We now derive such a perturbation bound when both the projected
feature distributions and the ground cost are perturbed. Fix a target feature
$i\in\mathcal F^{(A)}$ and a source feature $j\in\mathcal F^{(B)}$. The oracle
score is
\[
D_{\mathcal R}(i,j)
=
W_c\!\left(
\mu_i^{(A\to\mathcal R)},
\nu_j^{(B\to\mathcal R)}
\right).
\]
Suppose the empirical computation uses projected empirical distributions and an
approximate ground cost $\widetilde c:\mathcal R\times\mathcal R\to\mathbb R_{\ge0}$.
Define the perturbed empirical score by
\[
\widehat D_{\mathcal R}^{\widetilde c}(i,j)
=
W_{\widetilde c}\!\left(
\widehat\mu_{i,T}^{(A\to\mathcal R)},
\widehat\nu_{j,T}^{(B\to\mathcal R)}
\right).
\]
The distribution perturbation error for pair $(i,j)$ is
\[
\delta_{i,j}
:=
L_x
W_\rho\!\left(
\mu_i^{(A\to\mathcal R)},
\widehat\mu_{i,T}^{(A\to\mathcal R)}
\right)
+
L_y
W_\rho\!\left(
\nu_j^{(B\to\mathcal R)},
\widehat\nu_{j,T}^{(B\to\mathcal R)}
\right),
\]
where $L_x,L_y$ are the coordinate-wise Lipschitz constants of the reference
ground cost $c$ on the relevant support. The cost perturbation error is
\[
\varepsilon_{c,i,j}
:=
\sup_{(z,z')\in
\operatorname{supp}(\widehat\mu_{i,T}^{(A\to\mathcal R)})
\times
\operatorname{supp}(\widehat\nu_{j,T}^{(B\to\mathcal R)})}
\left|
c(z,z')-\widetilde c(z,z')
\right|.
\]
This is the maximum pointwise discrepancy between the reference cost and the
approximate cost on the empirical supports actually used by the OT problem.
Define the worst-case combined perturbation for target feature $i$ by
\[
\bar\epsilon_i
:=
\max_{j\in\mathcal F^{(B)}}
\left[
\delta_{i,j}
+
\varepsilon_{c,i,j}
\right].
\]

\begin{corollary}[Stable matching under distribution and cost perturbations]
\label{cor:stable_matching_measure_cost}
Let $j_i^\star$ be the unique oracle nearest-neighbor match under
$D_{\mathcal R}(i,j)$, and let $\Delta_i$ be its Wasserstein matching margin. If
\[
\Delta_i>2\bar\epsilon_i,
\]
then nearest-neighbor matching with the perturbed empirical scores recovers the
oracle match:
\[
\arg\min_{j\in\mathcal F^{(B)}}
\widehat D_{\mathcal R}^{\widetilde c}(i,j)
=
j_i^\star.
\]
\end{corollary}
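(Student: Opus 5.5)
The plan is to reduce Corollary~\ref{cor:stable_matching_measure_cost} to Theorem~\ref{thm:matching_recovery}, applied with the perturbed empirical scores $\widehat D_{\mathcal R}^{\widetilde c}(i,j)$ in place of $\widehat D_{\mathcal R}(i,j)$. That theorem only uses the uniform bound $\sup_j|\widehat D(i,j)-D_{\mathcal R}(i,j)|\le\epsilon_i$ and never uses how $\widehat D$ was built. So it suffices to show, for every $j\in\mathcal F^{(B)}$,
\[
\bigl|\widehat D_{\mathcal R}^{\widetilde c}(i,j)-D_{\mathcal R}(i,j)\bigr|\le \delta_{i,j}+\varepsilon_{c,i,j},
\]
and then to take $\epsilon_i=\bar\epsilon_i$.

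For a fixed pair $(i,j)$, insert the intermediate quantity $W_c(\widehat\mu_{i,T}^{(A\to\mathcal R)},\widehat\nu_{j,T}^{(B\to\mathcal R)})$ and apply the triangle inequality:
\[
\bigl|W_{\widetilde c}(\widehat\mu,\widehat\nu)-W_c(\mu,\nu)\bigr|
\le \bigl|W_{\widetilde c}(\widehat\mu,\widehat\nu)-W_c(\widehat\mu,\widehat\nu)\bigr|
+\bigl|W_c(\widehat\mu,\widehat\nu)-W_c(\mu,\nu)\bigr|.
\]

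The first term is a pure cost change with the measures held fixed. Theorem~\ref{thm:ground_cost_perturbation}, applied to the empirical pair, bounds it by the supremum of $|c-\widetilde c|$ over $\operatorname{supp}(\widehat\mu)\times\operatorname{supp}(\widehat\nu)$. By definition that supremum is exactly $\varepsilon_{c,i,j}$, and it is finite because the empirical supports are finite sets.

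The second term is a pure measure change under the fixed reference cost $c$. Theorem~\ref{thm:wasserstein_stability}, with $(\mu,\nu)$ the population pair and $(\widetilde\mu,\widetilde\nu)$ the empirical pair, bounds it by $L_xW_\rho(\mu_i,\widehat\mu_{i,T})+L_yW_\rho(\nu_j,\widehat\nu_{j,T})=\delta_{i,j}$. This step needs $c$ to be coordinate-wise Lipschitz on the unions of the population and empirical supports, which is how $L_x,L_y$ are introduced in the definition of $\delta_{i,j}$.

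The main obstacle is bookkeeping about which supports each hypothesis covers. The Lipschitz constants must be valid uniformly over all $j$ for the same $c$, on supports that include both population and empirical points. The cost-closeness assumption, by contrast, is required only on the empirical supports. This is why the cost swap should be done at the empirical measures and not at the population ones, where $\varepsilon_{c,i,j}$ would not control it. I would state this order explicitly and implicitly assume the Lipschitz hypothesis holds for each $j$. Everything else is routine.

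Summing the two bounds gives $|\widehat D_{\mathcal R}^{\widetilde c}(i,j)-D_{\mathcal R}(i,j)|\le\delta_{i,j}+\varepsilon_{c,i,j}\le\bar\epsilon_i$ for all $j$. Since $\Delta_i>2\bar\epsilon_i$ and $j_i^\star$ is unique, Theorem~\ref{thm:matching_recovery} with $\epsilon_i=\bar\epsilon_i$ yields $\arg\min_j\widehat D_{\mathcal R}^{\widetilde c}(i,j)=j_i^\star$.
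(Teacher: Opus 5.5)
Your proposal is correct and matches the paper's proof essentially step for step. Like the paper, you insert the intermediate score $W_c(\widehat\mu_{i,T}^{(A\to\mathcal R)},\widehat\nu_{j,T}^{(B\to\mathcal R)})$, bound the measure change by Theorem~\ref{thm:wasserstein_stability} (giving $\delta_{i,j}$) and the cost change at the empirical measures by the ground-cost perturbation bound (giving $\varepsilon_{c,i,j}$), and then invoke Theorem~\ref{thm:matching_recovery} with $\epsilon_i=\bar\epsilon_i$.
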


\begin{proof}
Fix any source feature $j\in\mathcal F^{(B)}$. We decompose the total score
error into a distribution perturbation term and a cost perturbation term by
adding and subtracting the intermediate score computed on empirical
distributions with the reference cost $c$:
\[
\begin{aligned}
&
\left|
D_{\mathcal R}(i,j)
-
\widehat D_{\mathcal R}^{\widetilde c}(i,j)
\right|\\
&=
\left|
W_c\!\left(
\mu_i^{(A\to\mathcal R)},
\nu_j^{(B\to\mathcal R)}
\right)
-
W_{\widetilde c}\!\left(
\widehat\mu_{i,T}^{(A\to\mathcal R)},
\widehat\nu_{j,T}^{(B\to\mathcal R)}
\right)
\right|\\
&\le
\left|
W_c\!\left(
\mu_i^{(A\to\mathcal R)},
\nu_j^{(B\to\mathcal R)}
\right)
-
W_c\!\left(
\widehat\mu_{i,T}^{(A\to\mathcal R)},
\widehat\nu_{j,T}^{(B\to\mathcal R)}
\right)
\right|\\
&\quad+
\left|
W_c\!\left(
\widehat\mu_{i,T}^{(A\to\mathcal R)},
\widehat\nu_{j,T}^{(B\to\mathcal R)}
\right)
-
W_{\widetilde c}\!\left(
\widehat\mu_{i,T}^{(A\to\mathcal R)},
\widehat\nu_{j,T}^{(B\to\mathcal R)}
\right)
\right|.
\end{aligned}
\]
By Theorem~\ref{thm:wasserstein_stability}, the first term is bounded by
\[
\begin{aligned}
&
\left|
W_c\!\left(
\mu_i^{(A\to\mathcal R)},
\nu_j^{(B\to\mathcal R)}
\right)
-
W_c\!\left(
\widehat\mu_{i,T}^{(A\to\mathcal R)},
\widehat\nu_{j,T}^{(B\to\mathcal R)}
\right)
\right|\\
&\le
L_x
W_\rho\!\left(
\mu_i^{(A\to\mathcal R)},
\widehat\mu_{i,T}^{(A\to\mathcal R)}
\right)
+
L_y
W_\rho\!\left(
\nu_j^{(B\to\mathcal R)},
\widehat\nu_{j,T}^{(B\to\mathcal R)}
\right)\\
&=
\delta_{i,j}.
\end{aligned}
\]

It remains to bound the cost perturbation term. Let
\[
\widehat\mu=\widehat\mu_{i,T}^{(A\to\mathcal R)},
\qquad
\widehat\nu=\widehat\nu_{j,T}^{(B\to\mathcal R)}
\]
inside this paragraph only. By definition of $\varepsilon_{c,i,j}$, for every
coupling $\pi\in\Pi(\widehat\mu,\widehat\nu)$,
\[
\begin{aligned}
\left|
\mathbb E_{\pi}[c(z,z')]
-
\mathbb E_{\pi}[\widetilde c(z,z')]
\right|
&=
\left|
\int_{\mathcal R\times\mathcal R}
\left[
c(z,z')-\widetilde c(z,z')
\right]
d\pi(z,z')
\right|\\
&\le
\int_{\mathcal R\times\mathcal R}
\left|
c(z,z')-\widetilde c(z,z')
\right|
d\pi(z,z')\\
&\le
\varepsilon_{c,i,j}.
\end{aligned}
\]
Hence, for every $\pi\in\Pi(\widehat\mu,\widehat\nu)$,
\[
\mathbb E_{\pi}[c(z,z')]
\le
\mathbb E_{\pi}[\widetilde c(z,z')]
+
\varepsilon_{c,i,j},
\qquad
\mathbb E_{\pi}[\widetilde c(z,z')]
\le
\mathbb E_{\pi}[c(z,z')]
+
\varepsilon_{c,i,j}.
\]
Taking the infimum of the first inequality over
$\pi\in\Pi(\widehat\mu,\widehat\nu)$ gives
\[
\begin{aligned}
W_c(\widehat\mu,\widehat\nu)
&=
\inf_{\pi\in\Pi(\widehat\mu,\widehat\nu)}
\mathbb E_{\pi}[c(z,z')]\\
&\le
\inf_{\pi\in\Pi(\widehat\mu,\widehat\nu)}
\left\{
\mathbb E_{\pi}[\widetilde c(z,z')]
+
\varepsilon_{c,i,j}
\right\}\\
&=
W_{\widetilde c}(\widehat\mu,\widehat\nu)
+
\varepsilon_{c,i,j}.
\end{aligned}
\]
Taking the infimum of the second inequality gives the reverse direction:
\[
\begin{aligned}
W_{\widetilde c}(\widehat\mu,\widehat\nu)
&=
\inf_{\pi\in\Pi(\widehat\mu,\widehat\nu)}
\mathbb E_{\pi}[\widetilde c(z,z')]\\
&\le
\inf_{\pi\in\Pi(\widehat\mu,\widehat\nu)}
\left\{
\mathbb E_{\pi}[c(z,z')]
+
\varepsilon_{c,i,j}
\right\}\\
&=
W_c(\widehat\mu,\widehat\nu)
+
\varepsilon_{c,i,j}.
\end{aligned}
\]
Combining the two one-sided inequalities yields
\[
\left|
W_c(\widehat\mu,\widehat\nu)
-
W_{\widetilde c}(\widehat\mu,\widehat\nu)
\right|
\le
\varepsilon_{c,i,j}.
\]
Therefore,
\[
\begin{aligned}
\left|
D_{\mathcal R}(i,j)
-
\widehat D_{\mathcal R}^{\widetilde c}(i,j)
\right|
&\le
\delta_{i,j}
+
\varepsilon_{c,i,j}\\
&\le
\bar\epsilon_i,
\qquad
\forall j\in\mathcal F^{(B)}.
\end{aligned}
\]
Thus the perturbed empirical scores satisfy the uniform perturbation condition
of Theorem~\ref{thm:matching_recovery} with
$\epsilon_i=\bar\epsilon_i$. Since $\Delta_i>2\bar\epsilon_i$, applying
Theorem~\ref{thm:matching_recovery} gives
\[
\arg\min_{j\in\mathcal F^{(B)}}
\widehat D_{\mathcal R}^{\widetilde c}(i,j)
=
j_i^\star.
\]
\end{proof}

\paragraph{Interpretation.}
This corollary separates the two perturbation sources that appear in practice.
The first source is distributional: finite top-$K$ sampling, corpus variation, or
SAE retraining may change the projected feature distributions. This contributes
the term $\delta_{i,j}$. The second source is geometric: the ground cost may be
approximate, learned, or computed from a proxy representation. This contributes
the term $\varepsilon_{c,i,j}$. The result shows that these errors add at the
score level. Matching remains stable whenever the oracle Wasserstein margin
$\Delta_i$ is larger than twice the worst-case combined error over source
features.

\subsection{Matching recovery under Wasserstein margin}
\label{app:matching_recovery}

\paragraph{Definitions.}
Fix a target feature $i\in\mathcal{F}^{(A)}$ and a source feature set
$\mathcal{F}^{(B)}$. For each source feature $j\in\mathcal{F}^{(B)}$, define the
oracle Wasserstein score in the shared reference space $\mathcal R$ by
\[
D_{\mathcal R}(i,j)
=
W_c\!\left(
\mu_i^{(A\to\mathcal R)},
\nu_j^{(B\to\mathcal R)}
\right),
\]
and define the empirical Wasserstein score by
\[
\widehat D_{\mathcal R}(i,j)
=
W_c\!\left(
\widehat{\mu}_{i,T}^{(A\to\mathcal R)},
\widehat{\nu}_{j,T}^{(B\to\mathcal R)}
\right).
\]
Let
\[
j_i^\star
=
\arg\min_{j\in\mathcal{F}^{(B)}}D_{\mathcal R}(i,j)
\]
be the oracle nearest-neighbor match of target feature $i$, and suppose this
minimizer is unique. Define the Wasserstein matching margin by
\[
\Delta_i
=
\min_{j\neq j_i^\star}
\left[
D_{\mathcal R}(i,j)-D_{\mathcal R}(i,j_i^\star)
\right].
\]
Finally, define the uniform empirical score perturbation by
\[
\epsilon_i
=
\sup_{j\in\mathcal{F}^{(B)}}
\left|
\widehat D_{\mathcal R}(i,j)-D_{\mathcal R}(i,j)
\right|.
\]

\noindent
\textbf{Theorem~\ref{thm:matching_recovery}
(Matching recovery under Wasserstein margin).}
\emph{
Fix $i\in\mathcal F^{(A)}$ and suppose $j_i^\star$ is unique with $\Delta_i>0$. If $\sup_{j\in\mathcal F^{(B)}}|\widehat D_{\mathcal R}(i,j)-D_{\mathcal R}(i,j)|\le\epsilon_i$ and $\Delta_i>2\epsilon_i$, then $\arg\min_{j\in\mathcal F^{(B)}}\widehat D_{\mathcal R}(i,j)=j_i^\star$.
}

\begin{proof}
Since $j_i^\star$ is the unique minimizer of the oracle scores, for every
$j\neq j_i^\star$, $D_{\mathcal R}(i,j)-D_{\mathcal R}(i,j_i^\star)
\ge
\Delta_i.$
By the uniform perturbation bound, for every $j\in\mathcal{F}^{(B)}$,
\[
\left|
\widehat D_{\mathcal R}(i,j)-D_{\mathcal R}(i,j)
\right|
\le
\epsilon_i.
\]
Therefore, for every $j\neq j_i^\star$,
\[
\widehat D_{\mathcal R}(i,j)
\ge
D_{\mathcal R}(i,j)-\epsilon_i,
\qquad
\widehat D_{\mathcal R}(i,j_i^\star)
\le
D_{\mathcal R}(i,j_i^\star)+\epsilon_i.
\]
Subtracting the two inequalities gives
\[
\begin{aligned}
\widehat D_{\mathcal R}(i,j)-\widehat D_{\mathcal R}(i,j_i^\star)
&\ge
\left(D_{\mathcal R}(i,j)-\epsilon_i\right)
-
\left(D_{\mathcal R}(i,j_i^\star)+\epsilon_i\right)\\
&=
D_{\mathcal R}(i,j)-D_{\mathcal R}(i,j_i^\star)-2\epsilon_i\\
&\ge
\Delta_i-2\epsilon_i.
\end{aligned}
\]
If $\Delta_i>2\epsilon_i$, then
\[
\widehat D_{\mathcal R}(i,j)
>
\widehat D_{\mathcal R}(i,j_i^\star),
\qquad
\forall j\neq j_i^\star.
\]
Thus $j_i^\star$ is the unique minimizer of the empirical scores:
\[
\arg\min_{j\in\mathcal{F}^{(B)}}\widehat D_{\mathcal R}(i,j)
=
j_i^\star.
\]
\end{proof}

\paragraph{Interpretation.}
This theorem converts score stability into decision stability. The Wasserstein
stability theorem controls how much the scores can move; the margin condition
ensures that this movement is not large enough to change the nearest-neighbor
decision. The factor $2$ appears because the oracle best match may become worse
by at most $\epsilon_i$, while a competing match may become better by at most
$\epsilon_i$.

\paragraph{Combined distribution and cost perturbations.}
The matching recovery theorem is stated in terms of an abstract score
perturbation $\epsilon_i$. We now show how such a perturbation bound follows
when both the projected feature distributions and the ground cost are perturbed.

Fix a target feature $i\in\mathcal F^{(A)}$ and a source feature
$j\in\mathcal F^{(B)}$. Recall the oracle Wasserstein score
\[
D_{\mathcal R}(i,j)
=
W_c\!\left(
\mu_i^{(A\to\mathcal R)},
\nu_j^{(B\to\mathcal R)}
\right).
\]
When both the projected distributions and the ground cost are perturbed, define the
approximate empirical score
\[
\widehat D^{\widetilde c}_{\mathcal R}(i,j)
=
W_{\widetilde c}\!\left(
\widehat{\mu}_{i,T}^{(A\to\mathcal R)},
\widehat{\nu}_{j,T}^{(B\to\mathcal R)}
\right).
\]
Here, $c$ is the reference ground cost, while $\widetilde c$ is an approximate
or learned ground cost. For each pair $(i,j)$, define the projected-distribution perturbation error
\[
\delta_{i,j}
:=
L_x
W_\rho\!\left(
\mu_i^{(A\to\mathcal R)},
\widehat{\mu}_{i,T}^{(A\to\mathcal R)}
\right)
+
L_y
W_\rho\!\left(
\nu_j^{(B\to\mathcal R)},
\widehat{\nu}_{j,T}^{(B\to\mathcal R)}
\right),
\]
where $L_x$ and $L_y$ are the coordinate-wise Lipschitz constants of $c$ on the
relevant support. Define the ground-cost perturbation error by
\[
\varepsilon_{c,i,j}
:=
\sup_{(x,y)\in
\operatorname{supp}(\widehat{\mu}_{i,T}^{(A\to\mathcal R)})
\times
\operatorname{supp}(\widehat{\nu}_{j,T}^{(B\to\mathcal R)})}
\left|
c(x,y)-\widetilde c(x,y)
\right|.
\]
Finally, define the worst-case combined perturbation for target feature $i$ by
\[
\bar\epsilon_i
:=
\max_{j\in\mathcal F^{(B)}}
\left[
\delta_{i,j}
+
\varepsilon_{c,i,j}
\right].
\]

\begin{corollary}[Stable matching under distribution and cost perturbations]
\label{cor:stable_matching_measure_cost}
Let $j_i^\star$ be the unique oracle nearest-neighbor match under
$D_{\mathcal R}(i,j)$, and let $\Delta_i$ be its Wasserstein matching margin. If
\[
\Delta_i>2\bar\epsilon_i,
\]
then nearest-neighbor matching with the perturbed empirical scores recovers the
oracle match:
\[
\arg\min_{j\in\mathcal F^{(B)}}
\widehat D^{\widetilde c}_{\mathcal R}(i,j)
=
j_i^\star.
\]
\end{corollary}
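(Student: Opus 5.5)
The plan is to reduce the corollary to Theorem~\ref{thm:matching_recovery} by showing that the perturbed empirical scores $\widehat D^{\widetilde c}_{\mathcal R}(i,j)$ deviate from the oracle scores $D_{\mathcal R}(i,j)$ by at most $\bar\epsilon_i$, uniformly over $j\in\mathcal F^{(B)}$. Once this uniform bound holds, Theorem~\ref{thm:matching_recovery} applies with $\epsilon_i=\bar\epsilon_i$ and $\widehat D$ replaced by $\widehat D^{\widetilde c}$. Its proof only uses the uniform bound, uniqueness of $j_i^\star$, and $\Delta_i>2\epsilon_i$, so the conclusion $\arg\min_j \widehat D^{\widetilde c}_{\mathcal R}(i,j)=j_i^\star$ follows directly.

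To get the uniform bound, I fix $j$ and insert the intermediate score $W_c(\widehat\mu_{i,T}^{(A\to\mathcal R)},\widehat\nu_{j,T}^{(B\to\mathcal R)})$, which uses the empirical distributions but the reference cost. The triangle inequality then splits $|D_{\mathcal R}(i,j)-\widehat D^{\widetilde c}_{\mathcal R}(i,j)|$ into two pieces:
\begin{itemize}
\item a distribution term with the cost $c$ held fixed;
\item a cost term with the empirical distributions held fixed.
\end{itemize}

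For the distribution term, I apply Theorem~\ref{thm:wasserstein_stability} with $(\mu,\nu)=(\mu_i^{(A\to\mathcal R)},\nu_j^{(B\to\mathcal R)})$ and $(\widetilde\mu,\widetilde\nu)=(\widehat\mu_{i,T}^{(A\to\mathcal R)},\widehat\nu_{j,T}^{(B\to\mathcal R)})$. This bounds the term by exactly $\delta_{i,j}$. It requires that $L_x,L_y$ be valid coordinate-wise Lipschitz constants on the union of population and empirical supports, which is how they are defined in the setup.

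For the cost term, I invoke Theorem~\ref{thm:ground_cost_perturbation} on the pair $(\widehat\mu_{i,T}^{(A\to\mathcal R)},\widehat\nu_{j,T}^{(B\to\mathcal R)})$. Its $\varepsilon_c$ is the supremum of $|c-\widetilde c|$ over the product of the empirical supports, which is precisely $\varepsilon_{c,i,j}$. Alternatively, I would redo the short argument: every coupling is supported on that product, so expected costs under $c$ and $\widetilde c$ differ by at most $\varepsilon_{c,i,j}$ for every coupling, and taking infima in both directions gives the same bound.

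Summing the two pieces gives $|D_{\mathcal R}(i,j)-\widehat D^{\widetilde c}_{\mathcal R}(i,j)|\le\delta_{i,j}+\varepsilon_{c,i,j}\le\bar\epsilon_i$ for every $j$, by the definition of $\bar\epsilon_i$ as a maximum over $j$. The main point needing care is this support bookkeeping: the Lipschitz constants in the first term must hold on the combined population and empirical supports, while the cost discrepancy in the second term is needed only on the empirical supports, since that is where the optimal transport problem for $\widetilde c$ is actually solved. Beyond checking these hypotheses, the argument is routine.
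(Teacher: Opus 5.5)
Your proposal is correct and follows the paper's proof essentially step for step. The paper also inserts the intermediate score $W_c(\widehat\mu_{i,T}^{(A\to\mathcal R)},\widehat\nu_{j,T}^{(B\to\mathcal R)})$, bounds the distribution term by $\delta_{i,j}$ via Theorem~\ref{thm:wasserstein_stability}, bounds the cost term by $\varepsilon_{c,i,j}$ (one appendix version cites the ground-cost perturbation theorem, the other re-derives it inline with the same coupling argument you sketch), and then applies Theorem~\ref{thm:matching_recovery} with $\epsilon_i=\bar\epsilon_i$.
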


\begin{proof}
For each source feature $j\in\mathcal F^{(B)}$, add and subtract the
intermediate score that uses the perturbed distributions but the original ground
cost: $W_c\!\left(
\widehat{\mu}_{i,T}^{(A\to\mathcal R)},
\widehat{\nu}_{j,T}^{(B\to\mathcal R)}
\right).$ Then
\[
\begin{aligned}
&
\left|
D_{\mathcal R}(i,j)
-
\widehat D^{\widetilde c}_{\mathcal R}(i,j)
\right|\\
&=
\left|
W_c\!\left(
\mu_i^{(A\to\mathcal R)},
\nu_j^{(B\to\mathcal R)}
\right)
-
W_{\widetilde c}\!\left(
\widehat{\mu}_{i,T}^{(A\to\mathcal R)},
\widehat{\nu}_{j,T}^{(B\to\mathcal R)}
\right)
\right|\\
&\le
\left|
W_c\!\left(
\mu_i^{(A\to\mathcal R)},
\nu_j^{(B\to\mathcal R)}
\right)
-
W_c\!\left(
\widehat{\mu}_{i,T}^{(A\to\mathcal R)},
\widehat{\nu}_{j,T}^{(B\to\mathcal R)}
\right)
\right|\\
&\quad+
\left|
W_c\!\left(
\widehat{\mu}_{i,T}^{(A\to\mathcal R)},
\widehat{\nu}_{j,T}^{(B\to\mathcal R)}
\right)
-
W_{\widetilde c}\!\left(
\widehat{\mu}_{i,T}^{(A\to\mathcal R)},
\widehat{\nu}_{j,T}^{(B\to\mathcal R)}
\right)
\right|.
\end{aligned}
\]
By Theorem~\ref{thm:wasserstein_stability}, the first term is bounded by
\[
L_x
W_\rho\!\left(
\mu_i^{(A\to\mathcal R)},
\widehat{\mu}_{i,T}^{(A\to\mathcal R)}
\right)
+
L_y
W_\rho\!\left(
\nu_j^{(B\to\mathcal R)},
\widehat{\nu}_{j,T}^{(B\to\mathcal R)}
\right)
=
\delta_{i,j}.
\]
By Theorem 2, the second term is bounded by $\varepsilon_{c,i,j}.$ Therefore,
\[
\left|
D_{\mathcal R}(i,j)
-
\widehat D^{\widetilde c}_{\mathcal R}(i,j)
\right|
\le
\delta_{i,j}
+
\varepsilon_{c,i,j}
\le
\bar\epsilon_i,
\qquad
\forall j\in\mathcal F^{(B)}.
\]
Thus the perturbed empirical scores satisfy the uniform perturbation condition
of Theorem~\ref{thm:matching_recovery} with $\epsilon_i=\bar\epsilon_i$.
Since $\Delta_i>2\bar\epsilon_i$, Theorem~\ref{thm:matching_recovery} gives
\[
\arg\min_{j\in\mathcal F^{(B)}}
\widehat D^{\widetilde c}_{\mathcal R}(i,j)
=
j_i^\star.
\]
\end{proof}

\paragraph{Interpretation.}
This corollary combines the two perturbation sources that arise in practice.
The projected empirical distributions may differ from the oracle projected distributions because
they are estimated from finite top-$K$ activations, and the ground cost may also
be approximate, learned, or flow-aware. The result says that these errors add:
the total score error is controlled by the distribution perturbation term
$\delta_{i,j}$ plus the cost perturbation term $\varepsilon_{c,i,j}$. Matching is
stable whenever the oracle Wasserstein margin $\Delta_i$ is larger than twice
the worst-case combined error. Thus, a match is reliable when the projected
distributions are accurate, the cost approximation is accurate on the empirical
transport support, and the best source feature is separated from its nearest
competitor by a nontrivial margin.


\subsection{Voronoi Semantic Cell Recovery}
\label{app:voronoi_recovery}

We state the recovery result directly in the shared reference space
$(\mathcal R,\rho)$, where all projected feature distributions live. This avoids
introducing an additional semantic map and makes the grouping analysis align
with the method, which assigns circuit nodes using distances between projected
feature distributions in $\mathcal R$.

\paragraph{Semantic centers and cells.}
Let $\xi_1,\ldots,\xi_m\in\mathcal R$ be semantic centers. Here, $m$ is the
number of semantic supernodes in the compressed circuit, and $\xi_k$ is the
representative center of the $k$-th semantic supernode. These centers induce a
Voronoi partition of the shared reference space:
\[
\mathsf{Cell}_k
:=
\left\{
z\in\mathcal R:
\rho(z,\xi_k)
\le
\rho(z,\xi_r),
\ \forall r\in[m]
\right\}.
\]
Thus, a point $z\in\mathcal R$ belongs to $\mathsf{Cell}_k$ if it is at least as
close to $\xi_k$ as to every other semantic center.

\paragraph{Population and empirical projected feature distributions.}
Fix a target feature $i\in\mathcal F^{(A)}$. Let
$\mu_i^{(A\to\mathcal R)}\in\mathcal P(\mathcal R)$ denote the population
projected feature distribution, and let
$\widehat\mu_{i,T}^{(A\to\mathcal R)}\in\mathcal P(\mathcal R)$ denote the
empirical projected feature distribution constructed from $T$ token positions.
In our construction,
\[
\widehat\mu_{i,T}^{(A\to\mathcal R)}
=
\sum_{t\in\mathcal I_{i,T}^{(A),K}}
\widehat w_{i,t}^{(A)}
\delta_{z_t^{(\mathcal R)}}.
\]
The target-layer activation function selects the top-$K$ token indices
$\mathcal I_{i,T}^{(A),K}$ and normalized weights $\widehat w_{i,t}^{(A)}$,
while the support points $z_t^{(\mathcal R)}$ lie in the shared reference space.

\paragraph{Semantic-center assignment scores.}
Since a feature is represented by a distribution rather than a single point, we
assign it to a semantic center by its expected distance to that center. For any
projected distribution $\eta\in\mathcal P(\mathcal R)$, define
\[
\mathcal Q_k(\eta)
:=
\mathbb E_{z\sim\eta}
\left[
\rho(z,\xi_k)
\right].
\]
Here, $\mathcal Q_k(\eta)$ is the average distance from the feature distribution
$\eta$ to the $k$-th semantic center. The population and empirical assignments
of feature $i$ are
\[
k_i^\star
:=
\arg\min_{k\in[m]}
\mathcal Q_k\!\left(\mu_i^{(A\to\mathcal R)}\right),
\qquad
\widehat k_{i,T}
:=
\arg\min_{k\in[m]}
\mathcal Q_k\!\left(\widehat\mu_{i,T}^{(A\to\mathcal R)}\right).
\]
For the empirical distribution, the score has the finite form
\[
\mathcal Q_k\!\left(\widehat\mu_{i,T}^{(A\to\mathcal R)}\right)
=
\sum_{t\in\mathcal I_{i,T}^{(A),K}}
\widehat w_{i,t}^{(A)}
\rho\!\left(z_t^{(\mathcal R)},\xi_k\right).
\]
Thus, empirical assignment compares the weighted average distance from the
feature's selected activation contexts to each semantic center.

\paragraph{Voronoi assignment margin.}
Assume that the population assignment $k_i^\star$ is unique. Define the Voronoi
assignment margin by
\[
\Gamma_i
:=
\min_{r\neq k_i^\star}
\left[
\mathcal Q_r\!\left(\mu_i^{(A\to\mathcal R)}\right)
-
\mathcal Q_{k_i^\star}\!\left(\mu_i^{(A\to\mathcal R)}\right)
\right].
\]
The condition $\Gamma_i>0$ means that the population projected distribution of
feature $i$ is strictly closer, in expected distance, to its assigned semantic
center $\xi_{k_i^\star}$ than to every competing center.

\noindent
\textbf{Theorem~\ref{thm:voronoi_recovery}
(Voronoi semantic cell recovery).}
\emph{
Suppose the population semantic assignment $k_i^\star$ is unique with margin
$\Gamma_i>0$. If
\[
W_\rho\!\left(
\widehat\mu_{i,T}^{(A\to\mathcal R)},
\mu_i^{(A\to\mathcal R)}
\right)
<
\frac{\Gamma_i}{2},
\]
then the empirical semantic assignment recovers the population assignment:
$\widehat k_{i,T}=k_i^\star$.
}

\begin{proof}
For each semantic center $\xi_k$, define the point-to-center distance function
$h_k:\mathcal R\to\mathbb R_{\ge0}$ by $h_k(z)=\rho(z,\xi_k)$. The first step is
to show that $h_k$ is $1$-Lipschitz with respect to $\rho$. For any
$z,z'\in\mathcal R$, the reverse triangle inequality gives
\[
\begin{aligned}
|h_k(z)-h_k(z')|
&=
\left|
\rho(z,\xi_k)-\rho(z',\xi_k)
\right| \\
&\le
\rho(z,z').
\end{aligned}
\]
Thus, moving a point by at most $\rho(z,z')$ can change its distance to any fixed
semantic center by at most $\rho(z,z')$.

We next lift this pointwise Lipschitz property to distributions. Let
$\eta,\eta'\in\mathcal P(\mathcal R)$ be any two projected feature
distributions, and let $\gamma\in\Pi(\eta,\eta')$ be any coupling between them.
Because the first marginal of $\gamma$ is $\eta$ and the second marginal is
$\eta'$, we can rewrite the score difference under the same coupling:
\[
\begin{aligned}
\mathcal Q_k(\eta)-\mathcal Q_k(\eta')
&=
\int_{\mathcal R} h_k(z)\,d\eta(z)
-
\int_{\mathcal R} h_k(z')\,d\eta'(z') \\
&=
\int_{\mathcal R\times\mathcal R}
\left[
h_k(z)-h_k(z')
\right]
\,d\gamma(z,z').
\end{aligned}
\]
Therefore,
\[
\begin{aligned}
\left|
\mathcal Q_k(\eta)-\mathcal Q_k(\eta')
\right|
&=
\left|
\int_{\mathcal R\times\mathcal R}
\left[
h_k(z)-h_k(z')
\right]
\,d\gamma(z,z')
\right| \\
&\le
\int_{\mathcal R\times\mathcal R}
\left|
h_k(z)-h_k(z')
\right|
\,d\gamma(z,z') \\
&\le
\int_{\mathcal R\times\mathcal R}
\rho(z,z')
\,d\gamma(z,z').
\end{aligned}
\]
Since the above inequality holds for every coupling
$\gamma\in\Pi(\eta,\eta')$, taking the infimum over couplings gives
\[
\left|
\mathcal Q_k(\eta)-\mathcal Q_k(\eta')
\right|
\le
W_\rho(\eta,\eta').
\]
Apply this bound to
$\eta=\widehat\mu_{i,T}^{(A\to\mathcal R)}$ and
$\eta'=\mu_i^{(A\to\mathcal R)}$. Define the empirical distribution error
\[
\varepsilon_i
:=
W_\rho\!\left(
\widehat\mu_{i,T}^{(A\to\mathcal R)},
\mu_i^{(A\to\mathcal R)}
\right).
\]
Then, for every semantic center $k\in[m]$,
\[
\left|
\mathcal Q_k\!\left(\widehat\mu_{i,T}^{(A\to\mathcal R)}\right)
-
\mathcal Q_k\!\left(\mu_i^{(A\to\mathcal R)}\right)
\right|
\le
\varepsilon_i.
\]

Now fix any competing center $r\neq k_i^\star$. We compare the empirical score
of the competitor $r$ with the empirical score of the population winner
$k_i^\star$. Add and subtract the corresponding population scores:
\[
\begin{aligned}
&
\mathcal Q_r\!\left(\widehat\mu_{i,T}^{(A\to\mathcal R)}\right)
-
\mathcal Q_{k_i^\star}\!\left(\widehat\mu_{i,T}^{(A\to\mathcal R)}\right) \\
&=
\left[
\mathcal Q_r\!\left(\widehat\mu_{i,T}^{(A\to\mathcal R)}\right)
-
\mathcal Q_r\!\left(\mu_i^{(A\to\mathcal R)}\right)
\right] \\
&\quad+
\left[
\mathcal Q_r\!\left(\mu_i^{(A\to\mathcal R)}\right)
-
\mathcal Q_{k_i^\star}\!\left(\mu_i^{(A\to\mathcal R)}\right)
\right] \\
&\quad+
\left[
\mathcal Q_{k_i^\star}\!\left(\mu_i^{(A\to\mathcal R)}\right)
-
\mathcal Q_{k_i^\star}\!\left(\widehat\mu_{i,T}^{(A\to\mathcal R)}\right)
\right].
\end{aligned}
\]
Each of the three terms has a direct interpretation: the first is the empirical
perturbation of the competitor score, the second is the population Voronoi gap,
and the third is the empirical perturbation of the winning score. By the
definition of $\Gamma_i$ and the uniform score perturbation bound above,
\[
\begin{aligned}
\mathcal Q_r\!\left(\widehat\mu_{i,T}^{(A\to\mathcal R)}\right)
-
\mathcal Q_{k_i^\star}\!\left(\widehat\mu_{i,T}^{(A\to\mathcal R)}\right)
&\ge
\left[-\varepsilon_i\right]
+
\Gamma_i
+
\left[-\varepsilon_i\right] \\
&=
\Gamma_i-2\varepsilon_i.
\end{aligned}
\]
If $\varepsilon_i<\Gamma_i/2$, then $\Gamma_i-2\varepsilon_i>0$. Hence, for
every competing center $r\neq k_i^\star$,
\[
\mathcal Q_r\!\left(\widehat\mu_{i,T}^{(A\to\mathcal R)}\right)
>
\mathcal Q_{k_i^\star}\!\left(\widehat\mu_{i,T}^{(A\to\mathcal R)}\right).
\]
Therefore $k_i^\star$ remains the unique minimizer of the empirical assignment
scores:
\[
\begin{aligned}
\widehat k_{i,T}
&=
\arg\min_{k\in[m]}
\mathcal Q_k\!\left(\widehat\mu_{i,T}^{(A\to\mathcal R)}\right) \\
&=
k_i^\star.
\end{aligned}
\]
This proves the claim.
\end{proof}

\paragraph{Eventual exact recovery.}
Assume the conditions of Theorem~\ref{thm:voronoi_recovery} and suppose
\[
W_\rho\!\left(
\widehat\mu_{i,T}^{(A\to\mathcal R)},
\mu_i^{(A\to\mathcal R)}
\right)
\longrightarrow 0
\qquad
\text{as }T\to\infty.
\]
If $\Gamma_i>0$, then there exists $T_i<\infty$ such that
$\widehat k_{i,T}=k_i^\star$ for all $T\ge T_i$.

\begin{proof}
Since the Wasserstein estimation error converges to zero and $\Gamma_i/2>0$,
there exists $T_i<\infty$ such that, for every $T\ge T_i$,
\[
\begin{aligned}
W_\rho\!\left(
\widehat\mu_{i,T}^{(A\to\mathcal R)},
\mu_i^{(A\to\mathcal R)}
\right)
&<
\frac{\Gamma_i}{2}.
\end{aligned}
\]
The condition of Theorem~\ref{thm:voronoi_recovery} is therefore satisfied for
all $T\ge T_i$, so
\[
\widehat k_{i,T}=k_i^\star,
\qquad
\forall T\ge T_i.
\]
\end{proof}

\paragraph{High-probability Voronoi recovery.}
Assume that for some rate function $r_T(\delta)$,
\[
\mathbb P
\left(
W_\rho\!\left(
\widehat\mu_{i,T}^{(A\to\mathcal R)},
\mu_i^{(A\to\mathcal R)}
\right)
\le
r_T(\delta)
\right)
\ge
1-\delta.
\]
If $r_T(\delta)<\Gamma_i/2$, then $\mathbb P(\widehat k_{i,T}=k_i^\star)\ge1-\delta.$

\begin{proof}
Define the high-probability event
\[
\mathcal E_T
:=
\left\{
W_\rho\!\left(
\widehat\mu_{i,T}^{(A\to\mathcal R)},
\mu_i^{(A\to\mathcal R)}
\right)
\le
r_T(\delta)
\right\}.
\]
By assumption, $\mathbb P(\mathcal E_T)\ge1-\delta$. On the event
$\mathcal E_T$, the rate condition gives
\[
\begin{aligned}
W_\rho\!\left(
\widehat\mu_{i,T}^{(A\to\mathcal R)},
\mu_i^{(A\to\mathcal R)}
\right)
&\le
r_T(\delta)
<
\frac{\Gamma_i}{2}.
\end{aligned}
\]
Thus Theorem~\ref{thm:voronoi_recovery} implies
$\widehat k_{i,T}=k_i^\star$ on $\mathcal E_T$. Therefore,
\[
\begin{aligned}
\mathbb P(\widehat k_{i,T}=k_i^\star)
&\ge
\mathbb P(\mathcal E_T)\\
&\ge
1-\delta.
\end{aligned}
\]
\end{proof}

\paragraph{Interpretation.}
This theorem shows that semantic grouping is stable when the empirical projected
feature distribution remains sufficiently close to the population projected
distribution. The margin $\Gamma_i$ measures how much smaller the assignment
score of the correct semantic center is than the score of the nearest competing
center. Since each assignment score $\mathcal Q_k$ changes by at most
$W_\rho(\widehat\mu_{i,T}^{(A\to\mathcal R)},\mu_i^{(A\to\mathcal R)})$, the
gap between the correct and competing centers can shrink by at most twice this
amount. Thus recovery is controlled by the margin-normalized error
\[
\frac{
W_\rho\!\left(
\widehat\mu_{i,T}^{(A\to\mathcal R)},
\mu_i^{(A\to\mathcal R)}
\right)
}{
\Gamma_i
}.
\]
For circuit compression, this means that large-margin features can be grouped
confidently into semantic supernodes, while small-margin features are close to a
semantic boundary and should be treated as ambiguous or left uncompressed.

\paragraph{Rates under the reference-space Voronoi recovery theorem.}
The recovery condition depends on how accurately the empirical projected
feature distribution approximates its population counterpart in the shared
reference space. Specifically, Theorem~\ref{thm:voronoi_recovery} shows that
semantic-cell recovery is guaranteed whenever
\[
W_\rho\!\left(
\widehat\mu_{i,n}^{(A\to\mathcal R)},
\mu_i^{(A\to\mathcal R)}
\right)
<
\frac{\Gamma_i}{2}.
\]
Thus, any high-probability convergence bound of the form
\[
\mathbb P\!\left(
W_\rho\!\left(
\widehat\mu_{i,n}^{(A\to\mathcal R)},
\mu_i^{(A\to\mathcal R)}
\right)
\le
r_n(\delta)
\right)
\ge
1-\delta
\]
implies recovery with probability at least $1-\delta$ whenever
$r_n(\delta)<\Gamma_i/2$. Here, $n$ denotes the effective number of samples used to construct the empirical
projected distribution. If the empirical distribution is built from all sampled
contexts, then $n=T$; if it is built only from the selected top-$K$ contexts,
then $n=K$ unless an additional top-$K$ population convergence argument is used.

\paragraph{Rate in one dimension.}
If the projected feature distribution has effective dimension $d=1$, empirical
Wasserstein convergence typically satisfies
\[
W_\rho(\widehat\mu_{i,T},\mu_i)
=
O_p(T^{-1/2}).
\]
A high-probability form can be written as $W_\rho(\widehat\mu_{i,T},\mu_i)
\le
C_1
\sqrt{\frac{\log(1/\delta)}{T}}$
with probability at least $1-\delta$. Therefore, Voronoi recovery is guaranteed
whenever
\[
\begin{aligned}
C_1
\sqrt{\frac{\log(1/\delta)}{T}}
&<
\frac{\Gamma_i}{2}\\
\Longleftrightarrow\qquad
T
&>
\frac{4C_1^2}{\Gamma_i^2}
\log\frac1\delta.
\end{aligned}
\]
Thus, for $d=1$, $T
=
O\!\left(
\frac{1}{\Gamma_i^2}
\log\frac1\delta
\right).$

\paragraph{Rate in two dimensions.}
If the projected feature distribution has effective dimension $d=2$, empirical
Wasserstein convergence has the critical logarithmic rate
\[
W_\rho(\widehat\mu_{i,T},\mu_i)
=
O_p\!\left(
\frac{\log T}{\sqrt T}
\right).
\]
A high-probability form can be written as
\[
W_\rho(\widehat\mu_{i,T},\mu_i)
\le
C_2
\frac{\log T+\log(1/\delta)}{\sqrt T}.
\]
Thus, semantic-cell recovery is guaranteed whenever
\[
\begin{aligned}
C_2
\frac{\log T+\log(1/\delta)}{\sqrt T}
&<
\frac{\Gamma_i}{2}\\
\Longleftrightarrow\qquad
\frac{\sqrt T}{\log T+\log(1/\delta)}
&>
\frac{2C_2}{\Gamma_i}.
\end{aligned}
\]
Equivalently, up to logarithmic factors, the sufficient sample size scales as $T
=
\widetilde O\!\left(
\frac{1}{\Gamma_i^2}
\right).$

\paragraph{Rate in dimension $d>2$.}
If the projected feature distribution has effective dimension $d>2$, empirical
Wasserstein convergence typically satisfies $W_\rho(\widehat\mu_{i,T},\mu_i)
=
O_p(T^{-1/d}).$
A high-probability bound may be written as
\[
W_\rho(\widehat\mu_{i,T},\mu_i)
\le
C_d T^{-1/d}
+
C_\delta
\sqrt{\frac{\log(1/\delta)}{T}}.
\]
For $d>2$, the dominant term is typically $T^{-1/d}$, while the
high-probability concentration term scales as $T^{-1/2}$ up to logarithmic
factors. Since $T^{-1/2}$ decays faster than $T^{-1/d}$ when $d>2$, the
dominant sample-complexity scaling is governed by $T^{-1/d}$. Ignoring the
lower-order concentration term, recovery is guaranteed whenever
\[
\begin{aligned}
C_d T^{-1/d}
&<
\frac{\Gamma_i}{2}\\
\Longleftrightarrow\qquad
T^{1/d}
&>
\frac{2C_d}{\Gamma_i}\\
\Longleftrightarrow\qquad
T
&>
\left(
\frac{2C_d}{\Gamma_i}
\right)^d.
\end{aligned}
\]
Thus, for $d>2$, $T
=
O\!\left[
\left(
\frac{1}{\Gamma_i}
\right)^d
\right].$

\paragraph{Finite-support rate.}
If the population projected feature distribution is supported on at most $S$
reference-space atoms,
\[
\mu_i
=
\sum_{s=1}^{S}p_s\delta_{u_s},
\]
and the empirical distribution has the same support with empirical weights
$\widehat p_s$, then
\[
W_\rho(\widehat\mu_{i,T},\mu_i)
\le
D_{\max}
\|\widehat p-p\|_1,
\qquad
D_{\max}
:=
\max_{a,b\in[S]}
\rho(u_a,u_b).
\]
By multinomial concentration,
\[
\|\widehat p-p\|_1
=
O_p\!\left(
\sqrt{
\frac{S+\log(1/\delta)}{T}
}
\right).
\]
Hence,
\[
W_\rho(\widehat\mu_{i,T},\mu_i)
=
O_p\!\left(
D_{\max}
\sqrt{
\frac{S+\log(1/\delta)}{T}
}
\right).
\]
Voronoi recovery is guaranteed if
\[
\begin{aligned}
D_{\max}
\sqrt{
\frac{S+\log(1/\delta)}{T}
}
&<
\frac{\Gamma_i}{2}\\
\Longleftrightarrow\qquad
T
&>
\frac{4D_{\max}^2}{\Gamma_i^2}
\left(
S+\log\frac1\delta
\right).
\end{aligned}
\]
Thus the finite-support sample complexity scales as $T
=
O\!\left(
\frac{D_{\max}^2}{\Gamma_i^2}
\left[
S+\log\frac1\delta
\right]
\right).$

\begin{table}[t]
\centering
\caption{Typical empirical Wasserstein convergence rates and sufficient sample
complexities for Voronoi semantic recovery in the shared reference space. Here
$\Gamma_i$ is the Voronoi assignment margin, $\delta\in(0,1)$ is the failure
probability, and $T$ is the number of samples used to construct the empirical
projected distribution.}
\label{tab:voronoi_convergence_rates}
\begin{tabular}{lll}
\toprule
Setting & Wasserstein rate $r(T)$ & Sufficient sample complexity \\
\midrule
$d=1$ &
$O\!\left(T^{-1/2}\right)$ &
$O\!\left(\frac{1}{\Gamma_i^2}\log\frac1\delta\right)$ \\

$d=2$ &
$O\!\left(\frac{\log T}{\sqrt T}\right)$ &
$\widetilde O\!\left(\frac{1}{\Gamma_i^2}\log\frac1\delta\right)$ \\

$d>2$ &
$O\!\left(T^{-1/d}\right)$ &
$O\!\left(\frac{1}{\Gamma_i^d}\right)$ \\

Finite support $S$ &
$O\!\left(D_{\max}\sqrt{\frac{S+\log(1/\delta)}{T}}\right)$ &
$O\!\left(
\frac{D_{\max}^2}{\Gamma_i^2}
\left(S+\log\frac1\delta\right)
\right)$ \\
\bottomrule
\end{tabular}
\end{table}

\subsection{Synthetic Verification of Voronoi Semantic-Cell Recovery}
\label{sec:voronoi_synthetic_verification}

\begin{figure}[htp]
    \centering
    \includegraphics[width=1\linewidth]{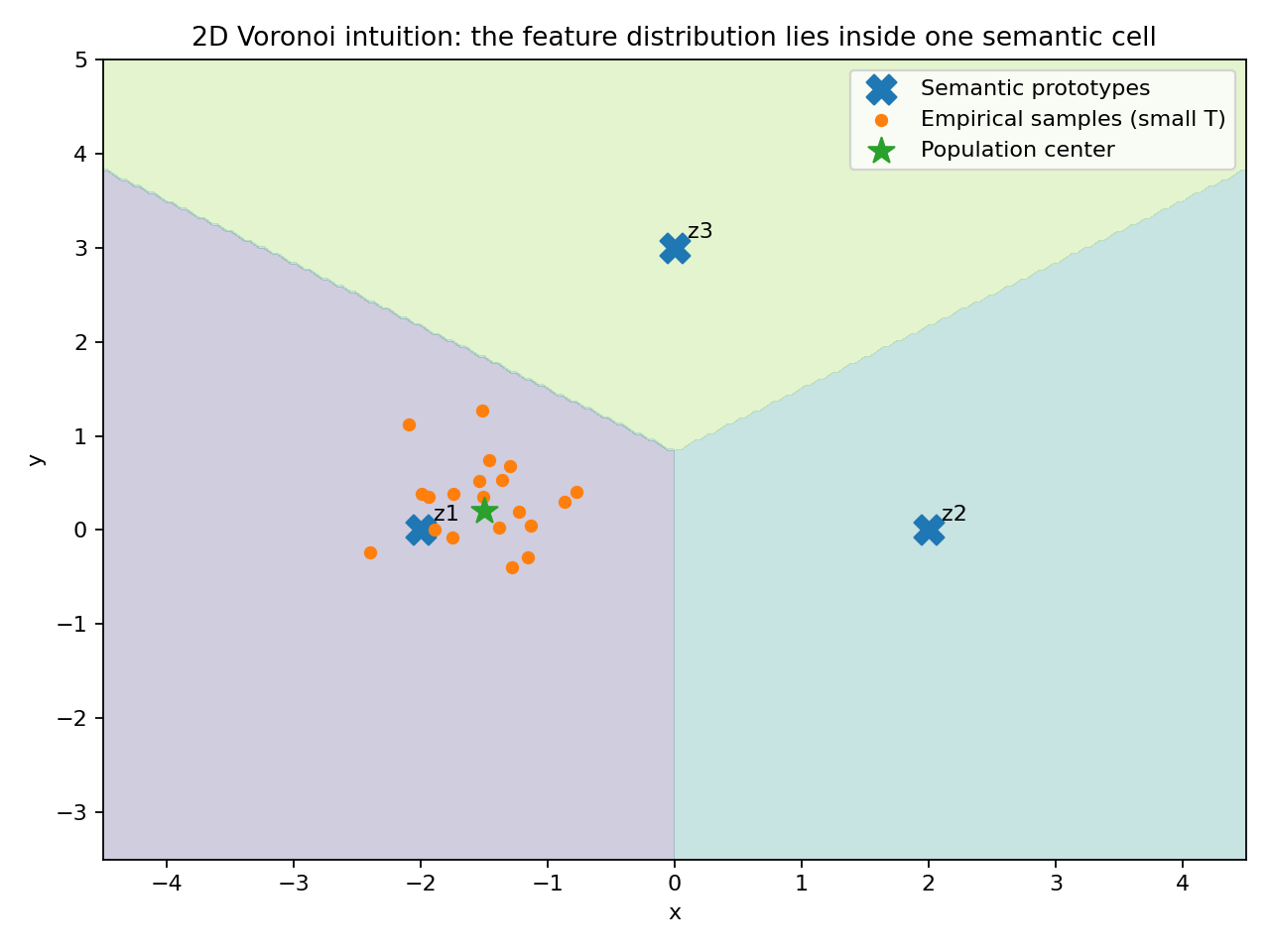}
    \caption{
\textbf{Voronoi-based semantic cell intuition.}
We visualize a synthetic semantic representation space with three prototypes
$z_1,z_2,z_3$, which induce a Voronoi partition of the space. The orange points
represent empirical samples from a feature distribution, while the star denotes
the population center. In this example, the distribution lies inside the
Voronoi cell of $z_1$, meaning that its semantic assignment is stable as long as
sampling or representation perturbations do not move the empirical distribution
across the decision boundary. This figure illustrates the geometric intuition
behind the Voronoi recovery theorem: if a feature distribution is separated from
competing cells by a positive margin, then sufficiently accurate empirical
estimates recover the same semantic cell.
}
    \label{fig:voronoi_intuition}
\end{figure}

We provide a synthetic experiment to verify the mechanism of
Theorem~\ref{thm:voronoi_recovery}. The goal is not to evaluate feature
matching performance, but to isolate the geometric statement of the theorem:
if the empirical projected feature distribution remains within the
Wasserstein stability radius of the population projected distribution, then the
semantic-cell assignment is preserved.

\paragraph{Setup.}
We consider the shared reference space $\mathcal R=\mathbb R$ with base metric
$\rho(z,z')=|z-z'|$. The population projected feature distribution is
$\mu=\mathcal N(m,\sigma^2)$ with $m=-1$ and $\sigma=0.6$. We use two semantic
centers $\xi_1=-2$ and $\xi_2=2$. For any projected distribution
$\eta\in\mathcal P(\mathcal R)$, the assignment score to center $\xi_k$ is
\[
\mathcal Q_k(\eta)
=
\mathbb E_{z\sim\eta}
\left[
\rho(z,\xi_k)
\right].
\]
The population semantic assignment is therefore
\[
k^\star
=
\arg\min_{k\in\{1,2\}}
\mathcal Q_k(\mu),
\qquad
\Gamma
=
\mathcal Q_{2}(\mu)-\mathcal Q_{1}(\mu),
\]
where $\Gamma>0$ means that the population distribution is assigned to
$\xi_1$ with positive Voronoi margin. Since the experiment is one-dimensional,
the population scores can be computed in closed form using
\[
\mathcal Q_k(\mu)
=
\mathbb E_{Z\sim\mathcal N(m,\sigma^2)}
|Z-\xi_k|.
\]
For each sample size $T$, we draw samples
$z_1,\ldots,z_T\sim\mu$ and construct the empirical distribution
$\widehat\mu_T=T^{-1}\sum_{t=1}^{T}\delta_{z_t}$. The empirical assignment is
\[
\widehat k_T
=
\arg\min_{k\in\{1,2\}}
\mathcal Q_k(\widehat\mu_T),
\qquad
\mathcal Q_k(\widehat\mu_T)
=
\frac1T
\sum_{t=1}^{T}
|z_t-\xi_k|.
\]
Theorem~\ref{thm:voronoi_recovery} predicts that
\[
W_\rho(\widehat\mu_T,\mu)<\frac{\Gamma}{2}
\quad
\Longrightarrow
\quad
\widehat k_T=k^\star.
\]
In one dimension, we estimate
\[
W_\rho(\widehat\mu_T,\mu)
=
\int_{\mathbb R}
\left|
\widehat F_T(t)-F(t)
\right|dt,
\]
where $\widehat F_T$ is the empirical CDF and $F$ is the Gaussian CDF.

\paragraph{Voronoi-cell intuition.}
Figure~\ref{fig:voronoi_intuition} illustrates the geometric meaning of the
semantic-cell recovery theorem. The semantic centers partition the reference
space into Voronoi cells, and the population feature distribution lies inside
one cell with positive separation from the nearest boundary. The key implication
is that semantic grouping is not determined by exact sample locations, but by
whether the empirical distribution remains inside the same margin-defined
region. Thus, small sampling fluctuations are harmless when the population
distribution is well separated from competing semantic centers. This is the
geometric reason why large-margin features can be compressed into the same
supernode confidently.


\paragraph{Convergence of assignment scores.}
Figure~\ref{fig:voronoi_score_convergence} shows that the empirical assignment
scores $\mathcal Q_1(\widehat\mu_T)$ and $\mathcal Q_2(\widehat\mu_T)$
concentrate around their population values $\mathcal Q_1(\mu)$ and
$\mathcal Q_2(\mu)$ as $T$ increases. Since
$\mathcal Q_1(\mu)<\mathcal Q_2(\mu)$, the population assignment is
$k^\star=1$. The empirical assignment becomes stable once the empirical score
gap remains positive:
\[
\mathcal Q_2(\widehat\mu_T)-\mathcal Q_1(\widehat\mu_T)>0.
\]
\begin{figure}[htp]
    \centering
    \includegraphics[width=1\linewidth]{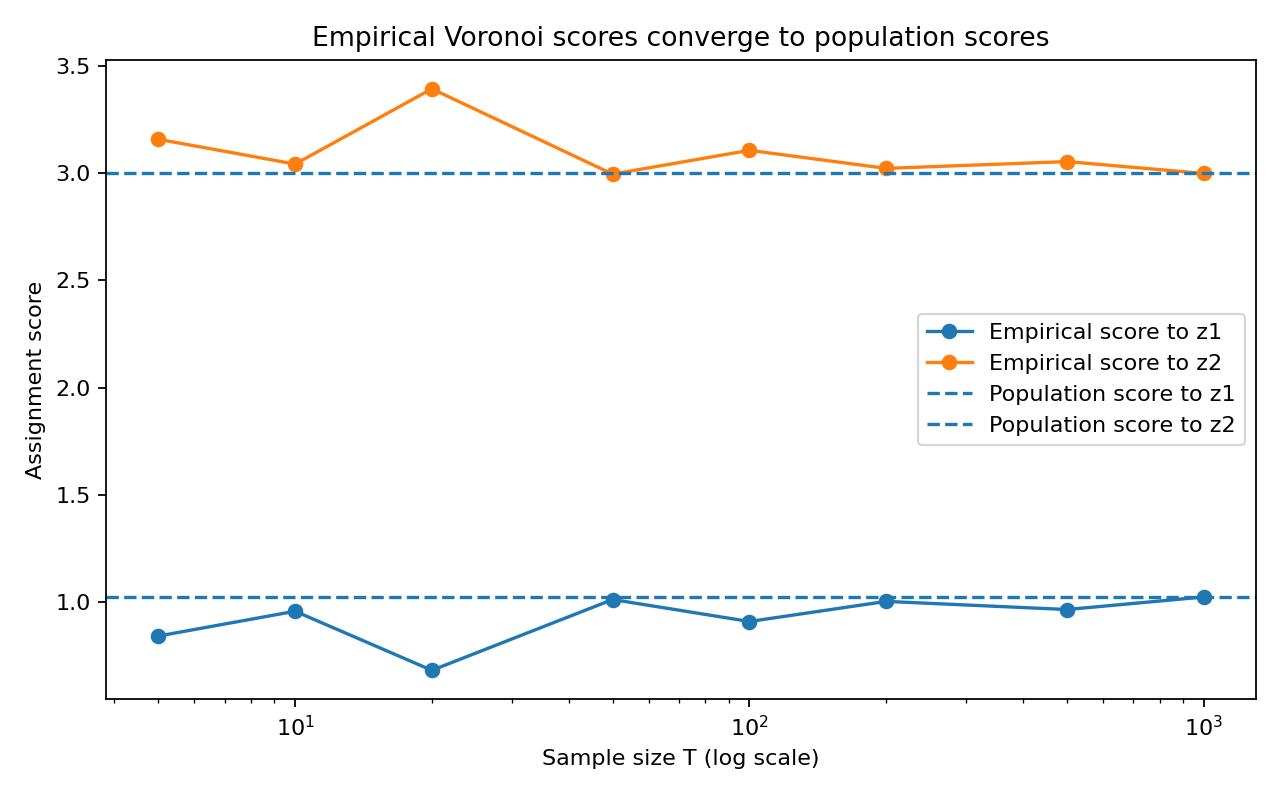}
    \caption{
\textbf{Convergence of empirical Voronoi assignment scores.}
For each semantic center $\xi_k\in\mathcal R$, the assignment score is
$\mathcal Q_k(\mu)=\int_{\mathcal R}\rho(z,\xi_k)\,d\mu(z)$, and the empirical
score $\mathcal Q_k(\widehat\mu)$ is computed from samples. The dashed
horizontal lines show the population scores $\mathcal Q_1(\mu)$ and
$\mathcal Q_2(\mu)$, while the solid curves show the empirical scores as the
sample size $T$ increases. The empirical scores concentrate around their
population values, and the score to the correct semantic center $\xi_1$ remains
smaller than the score to the competing semantic center $\xi_2$. This supports
the theorem's mechanism: semantic recovery follows once the empirical score
perturbation is small relative to the population assignment margin.
}
    \label{fig:voronoi_score_convergence}
\end{figure}

\begin{figure}[htp]
    \centering
    \includegraphics[width=1\linewidth]{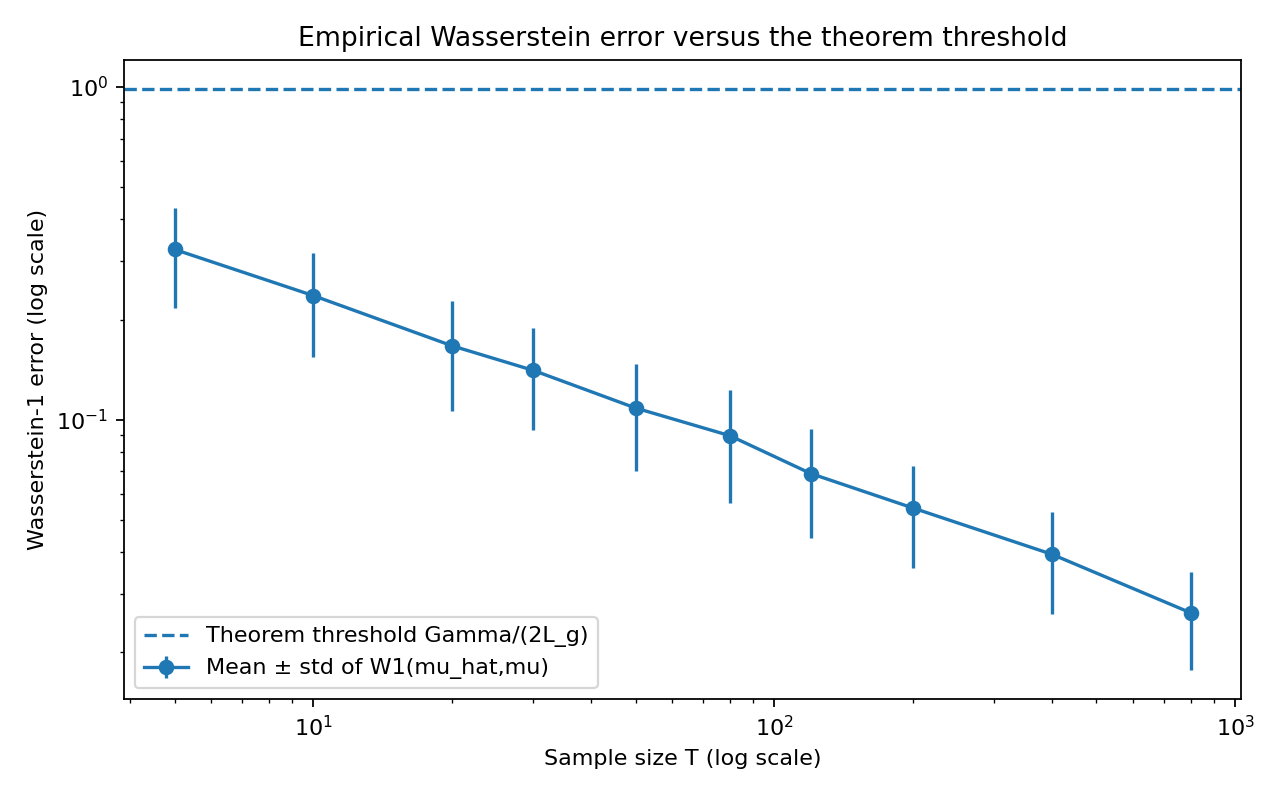}
    \caption{
\textbf{Empirical Wasserstein error versus the Voronoi recovery threshold.}
The blue curve shows the empirical Wasserstein error
$W_\rho(\widehat\mu_{i,T}^{(A\to\mathcal R)},\mu_i^{(A\to\mathcal R)})$ as a
function of sample size $T$, averaged over Monte Carlo trials with error bars
indicating one standard deviation. The dashed horizontal line is the theoretical
recovery threshold $\Gamma_i/2$, where $\Gamma_i$ is the population Voronoi
assignment margin of feature $i$. The Voronoi recovery theorem guarantees
correct semantic-cell recovery whenever
$W_\rho(\widehat\mu_{i,T}^{(A\to\mathcal R)},\mu_i^{(A\to\mathcal R)})
<\Gamma_i/2$. Thus, the plot visualizes the certificate condition: as $T$
grows, the empirical projected distribution converges toward the population
projected distribution, making the sufficient recovery condition easier to
satisfy.
}
    \label{fig:voronoi_wasserstein_threshold}
\end{figure}

\paragraph{Verification of the recovery certificate.}
Figure~\ref{fig:voronoi_wasserstein_threshold} compares the empirical
Wasserstein error $W_\rho(\widehat\mu_T,\mu)$ against the theorem threshold
$\Gamma/2$. As sample size increases, the Wasserstein error decreases. Once
\[
W_\rho(\widehat\mu_T,\mu)<\frac{\Gamma}{2},
\]
Theorem~\ref{thm:voronoi_recovery} certifies that the empirical semantic-cell
assignment must equal the population assignment. This verifies the theorem's
central mechanism: semantic recovery follows from Wasserstein convergence plus
a positive assignment margin.

\paragraph{Empirical margin convergence.}
Figure~\ref{fig:voronoi_margin_convergence} plots the empirical margin
\[
\widehat\Gamma_T
=
\mathcal Q_2(\widehat\mu_T)-\mathcal Q_1(\widehat\mu_T)
\]
against the population margin
\[
\Gamma
=
\mathcal Q_2(\mu)-\mathcal Q_1(\mu).
\]
The empirical margin concentrates around $\Gamma$ as $T$ increases. This gives
a direct empirical view of why the assignment stabilizes: the empirical
distribution may fluctuate at small sample sizes, but the gap between the
assigned center and the nearest competing center converges to the positive
population margin.

\begin{figure}[htp]
    \centering
    \includegraphics[width=1\linewidth]{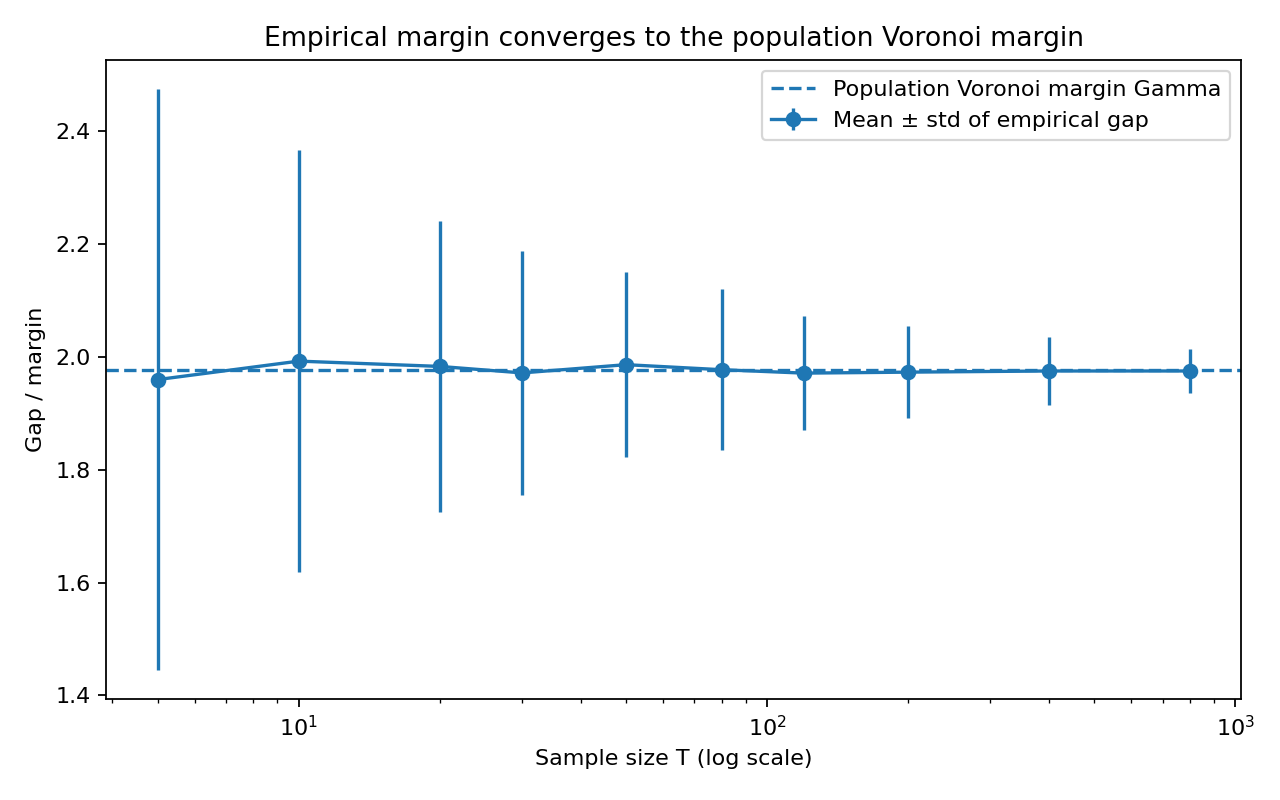}
    \caption{
\textbf{Convergence of the empirical Voronoi margin.}
The dashed horizontal line shows the population Voronoi margin
$\Gamma_i=\mathcal Q_2(\mu_i^{(A\to\mathcal R)})
-\mathcal Q_1(\mu_i^{(A\to\mathcal R)})$, where $\xi_1$ is the correct semantic
center and $\xi_2$ is the nearest competing semantic center. The solid curve
shows the empirical margin
$\mathcal Q_2(\widehat\mu_{i,T}^{(A\to\mathcal R)})
-\mathcal Q_1(\widehat\mu_{i,T}^{(A\to\mathcal R)})$ averaged over Monte Carlo
trials, with error bars indicating one standard deviation. As the sample size
$T$ increases, the empirical margin concentrates around the population margin.
This illustrates why semantic assignment becomes stable with more samples: once
the empirical margin remains positive and sufficiently far from zero, the
correct Voronoi cell is preserved.
}
    \label{fig:voronoi_margin_convergence}
\end{figure}

\paragraph{Conclusion.}
The synthetic experiment confirms the qualitative behavior predicted by
Theorem~\ref{thm:voronoi_recovery}. Larger sample sizes reduce
$W_\rho(\widehat\mu_T,\mu)$, the empirical assignment scores converge to their
population scores, and the empirical margin converges to the population
Voronoi margin. Therefore, once the empirical Wasserstein error falls below the
stability radius $\Gamma/2$, the semantic-cell assignment is certified to
recover the population assignment. This supports the use of margin-aware
semantic grouping for circuit compression: features with large Voronoi margins
are stable under finite-sample perturbations, while features near semantic
boundaries require more samples or should be treated as ambiguous.

\end{document}